\pgfplotsset{compat=newest}
\newlist{defenum}{enumerate}{1}
\setlist[defenum]{label=\upshape\theassumption.\arabic*}
\crefname{prop}{Proposition}{Propositions}
\crefname{ass}{Assumption}{Assumptions}
\crefname{ex}{Example}{Examples}
\crefname{thm}{Theorem}{Theorems}
\crefname{cor}{Corollary}{Corollaries}
\definecolor{fannyComment}{RGB}{150, 10, 250}
\newcommand{\noteby}[3]{{\colorbox{#2}{\bfseries\sffamily\scriptsize\textcolor{white}{#1}}}{\textcolor{#2}{\sf\small\textit{#3}}}}
\newcommand{\fy}[1]{\noteby{Fanny}{fannyComment}{ #1}}
\definecolor{juliaComment}{rgb}{0.99, 0.93, 0.0}
\definecolor{alexComment}{rgb}{0.120, 0.5, 0.26}
\definecolor{tobiComment}{rgb}{0.86, 0.14, 0.26}
\definecolor{filipComment}{rgb}{0.0, 0.00, 1.0}
\newcommand{\tw}[1]{\noteby{Tobi}{tobiComment}{ #1}}
\definecolor{tabblue}{rgb}{0.12156862745098039, 0.4666666666666667, 0.7058823529411765}
\definecolor{taborange}{rgb}{1.0, 0.4980392156862745, 0.054901960784313725}
\definecolor{tabgreen}{rgb}{0.17254901960784313, 0.6274509803921569, 0.17254901960784313}
\definecolor{tabred}{rgb}{0.8392156862745098, 0.15294117647058825, 0.1568627450980392}
\definecolor{tabpurple}{rgb}{0.5803921568627451, 0.403921568627451, 0.7411764705882353}
\definecolor{population_color}{rgb}{0.4, 0.4, 0.4}
\definecolor{ts_color}{rgb}{0.12156862745098039, 0.4666666666666667, 0.7058823529411765}
\definecolor{direct_color}{rgb}{1.0, 0.4980392156862745, 0.054901960784313725}
\newcommand{\inner}[2]{\left\langle #1,#2\right\rangle}
\newtheorem*{theorem*}{Theorem}
\newtheorem{lemma}{Lemma}
\newtheorem{definition}{Definition}
\newtheorem{assumption}{Assumption}
\newtheorem{claim}{Claim}
\crefname{assumption}{Assumption}{Assumptions}
\crefname{proposition}{Proposition}{Propositions}
\newcommand{\mathset}[1]{\left\{#1\right\}}
\newcommand{\pr}[1]{\left(#1\right)}
\newcommand{\br}[1]{\left[#1\right]}
\newcommand{\abs}[1]{\left\lvert#1\right\rvert}
\newcommand{\norm}[1]{\left\|#1\right\|}
\newcommand{\setmid}[0]{\;\middle|\;}
\newcommand{\EE}[0]{\mathbb{E}}
\newcommand{\NN}[0]{\mathbb{N}}
\newcommand{\RR}[0]{\mathbb{R}}
\newcommand{\PP}[0]{\mathbb{P}}
\newcommand{\SSS}[0]{\mathbb{S}}
\newcommand{\cD}[0]{\mathcal{D}}
\newcommand{\cE}[0]{\mathcal{E}}
\newcommand{\cF}[0]{\mathcal{F}}
\newcommand{\cN}[0]{\mathcal{N}}
\newcommand{\cP}[0]{\mathcal{P}}
\newcommand{\cS}[0]{\mathcal{S}}
\newcommand{\cX}[0]{\mathcal{X}}
\newcommand{\cY}[0]{\mathcal{Y}}
\newcommand{\I}[0]{\mathbf{I}}
\newcommand{\M}[0]{\mathbf{M}}
\newcommand{\X}[0]{\mathbf{X}}
\newcommand{\bSigma}[0]{\mathbf{\Sigma}}
\newcommand{\e}[0]{\mathbf{e}}
\newcommand{\rbold}[0]{\mathbf{r}}
\newcommand{\etabold}[0]{\boldsymbol{\eta}}
\newcommand{\fX}[0]{\mathfrak{X}}
\newcommand{\eps}[0]{\varepsilon}
\newcommand{\betahat}[0]{{\widehat{\beta}}}
\newcommand{\vprime}{v'}
\newcommand{\vhat}{\widehat{v}}
\newcommand{\R}{\mathbb{R}}
\newcommand{\muhat}[0]{\widehat\mu}
\newcommand{\Thetatilde}[0]{\widetilde{\Theta}}
\newcommand{\Thetabold}[0]{\boldsymbol{\Theta}}
\newcommand{\omegabold}[0]{\boldsymbol{\omega}}
\DeclareMathOperator{\trOp}{tr}
\newcommand{\tr}[1]{\trOp\pr{#1}}
\DeclareMathOperator{\ImaOp}{im}
\newcommand{\Ima}[1]{\ImaOp\pr{#1}}
\DeclareMathOperator*{\argmin}{arg\,min}
\newcommand{\inda}[1]{1}
\newcommand{\ind}[0]{1}
\newcommand{\expect}{\mathop{\EE}}
\DeclareMathOperator{\cov}{cov}
\DeclareMathOperator{\lawOp}{law}
\newcommand{\law}[0]{{\lawOp}}
\newcommand{\identity}[0]{\mathbf{I}}
\newcommand{\weight}[0]{\lambda}
\newcommand{\weightTuple}[0]{\boldsymbol{\lambda}}
\newcommand{\eigenvalue}[0]{\lambda}
\newcommand{\eigenvalueMax}[0]{\eigenvalue_{\max}}
\newcommand{\eigenvalueMin}[0]{\eigenvalue_{\min}}
\DeclareMathOperator{\volOp}{vol}
\newcommand{\vol}[0]{{\volOp}}
\newcommand{\PF}[0]{\mathfrak{F}}
\newcommand{\PFhat}[0]{\widehat{\PF}}
\newcommand{\Loss}{\mathcal{L}}
\newcommand{\EmpLoss}{\widehat{\Loss}}
\newcommand{\objectiveindex}[0]{k}
\newcommand{\objectivevector}[0]{\boldsymbol{\Loss}}
\newcommand{\objectiveindexed}[0]{\Loss_\objectiveindex}
\DeclareMathOperator{\fairOp}{fair}
\newcommand{\Lfair}[0]{\Loss_{\fairOp}}
\newcommand{\weightfair}[0]{\weight_{\fairOp}}
\DeclareMathOperator{\riskOp}{risk}
\newcommand{\Lrisk}[0]{\Loss_{\riskOp}}
\newcommand{\weightrisk}[0]{\weight_{\riskOp}}
\newcommand{\scalarization}[0]{s_{\weightTuple}}
\newcommand{\scalarizationObjectiveComposition}[0]{(\scalarization\circ \objectivevector)}
\newcommand{\excessscalarization}[0]{\cE_{\weightTuple}}
\newcommand{\minimax}[0]{\mathfrak{M}_{\weightTuple}}
\newcommand{\strongConvexityParam}[0]{\mu}
\newcommand{\strongConvexityParamTuple}[0]{\boldsymbol{\mu}}
\newcommand{\smoothnessParam}[0]{\nu}
\DeclareMathOperator{\HVOp}{HV}
\newcommand{\hypervolume}[0]{{\HVOp}}
\newlength\bshft
\def\fakebold#1{\ThisStyle{\ooalign{$\SavedStyle\protect#1$\cr%
  \kern-\bshft$\SavedStyle\protect#1$\cr%
  \kern\bshft$\SavedStyle\protect#1$}}}
\newcommand{\distributionSetTuple}[0]{\boldsymbol{\cP}}
\newcommand{\distributionXxY}[0]{\PP}
\newcommand{\distributionXxYTuple}[0]{{\fakebold{\distributionXxY}}}
\newcommand{\distributionXxYindexed}[0]{\PP^\objectiveindex}
\newcommand{\distributionX}[0]{\PP_X}
\newcommand{\distributionXindexed}[0]{\PP^\objectiveindex_X}
\newcommand{\empdistributionXxY}[0]{\widehat{\PP}}
\newcommand{\empdistributionXxYTuple}[0]{\widehat{\distributionXxYTuple}}
\newcommand{\empdistributionXxYindexed}[0]{\widehat{\PP}^\objectiveindex}
\newcommand{\minimizer}[0]{\vartheta}
\newcommand{\parameter}[0]{\theta}
\newcommand{\minimizerWeighted}[0]{\minimizer_{\weightTuple}}
\newcommand{\parameterTuple}[0]{\boldsymbol{\parameter}}
\newcommand{\estimatedMinimizer}[0]{\widehat{\minimizer}}
\newcommand{\estimatedParameter}[0]{\widehat{\parameter}}
\newcommand{\estimatedParameterTuple}[0]{\widehat{\parameterTuple}}
\newcommand{\parameterSpace}[0]{\Theta}
\newcommand{\parameterSpaceTuple}[0]{\Thetabold}
\newcommand{\parameterSpaceLarge}[0]{\Thetatilde}
\newcommand{\parameterSpaceTupleLarge}[0]{\widetilde{\Thetabold}}
\newcommand{\unlabeledTuple}[0]{\omegabold}
\newcommand{\covariance}[0]{\Sigma}
\newcommand{\covarianceTuple}[0]{\bSigma}
\newcommand{\estimatedCovariance}[0]{\widehat{\covariance}}
\newcommand{\estimatedCovarianceTuple}[0]{\widehat{\covarianceTuple}}
\newcommand{\estimatedMinimizerWeighted}[0]{\widehat{\minimizer}_{\weightTuple}}
\DeclareMathOperator{\twostageOp}{ts}
\newcommand{\tsEstimatedMinimizerWeighted}[0]{\estimatedMinimizerWeighted^{\twostageOp}}
\DeclareMathOperator{\regularizedpluginOp}{dr}
\newcommand{\drEstimatedMinimizerWeighted}[0]{\estimatedMinimizerWeighted^{\regularizedpluginOp}}
\newcommand{\parameterizedfunction}[0]{f_\minimizer}
\begin{document}

\twocolumn[

\aistatstitle{Learning Pareto manifolds in high dimensions: How can regularization help?}%Semi-Supervised Bi-level Regularization for Multi-Objective Learning}

\aistatsauthor{ Tobias Wegel \And Filip Kova\v{c}evi\'{c} \And Alexandru \c{T}ifrea \And Fanny Yang}

\aistatsaddress{ ETH Zurich \And IST Austria \And ETH Zurich \And ETH Zurich } 

]

\begin{abstract}
Simultaneously addressing multiple objectives is becoming increasingly important in modern machine learning. At the same time, data is often high-dimensional and costly to label.
For a single objective such as prediction risk, conventional regularization techniques are known to improve generalization when the data exhibits low-dimensional structure like sparsity. However, it is largely unexplored how to leverage this structure in the context of \emph{multi-objective learning (MOL)} with multiple competing objectives. 
In this work, we discuss how the application of vanilla regularization approaches can fail, and propose a two-stage MOL framework that can successfully leverage low-dimensional structure. 
We demonstrate its effectiveness experimentally for multi-distribution learning and fairness-risk trade-offs.

%In modern machine learning, we increasingly would like to account for multiple objectives at the same time.  "This" is for example important in scientific applications, where a further/additional challenge is that data is often high-dimensional and expensive to label.
%Accounting for multiple objectives is a challenging yet omnipresent task in modern machine learning. 
%A second challenge/problem is that in scientific applications, 
%At the same time, in practice, 
%data is often high-dimensional and expensive to label. 

%estimator that provably yields improved performance in the presence of sparsity and unlabeled data \fy{hm our framework more general? - }

%Modern machine learning methods often have to rely on high-dimensional data that is expensive to label, while unlabeled data is abundant. When the data exhibits low-dimensional structure such as sparsity, conventional regularization techniques are known to improve generalization for a single objective (e.g., prediction risk). However, it is largely unexplored how to leverage this structure in the context of \emph{multi-objective learning (MOL)} with multiple competing objectives. In this work, we discuss how the application of vanilla regularization approaches can fail, and propose the first MOL estimator that provably yields improved performance in the presence of sparsity and unlabeled data. We demonstrate its effectiveness experimentally for multi-distribution learning and fairness-risk trade-offs.
\end{abstract}

%\tableofcontents

\section{INTRODUCTION}
\label{sec:intro}
As machine learning systems are employed more and more broadly, they are expected to excel in different aspects: 
The models should not only be accurate, but also 
%be \emph{trustworthy} in numerous ways: Besides being accurate, they should also be 
robust against perturbations  \citep{Szegedy2014Intriguing} or distribution shifts \citep{Rojas2018invariant}, %\citep{Quinonero2022dataset}
fairness-aware \citep{Hardt2016Equality}, %Pedreshi2008discrimination,
private \citep{Dwork2006Differential}, %,Agrawal2000privac 
interpretable \citep{Belle2021Principles} and, more recently, aligned with diverse human preferences  \citep{Ji2024Aialignmentcomprehensivesurvey}, to name just a few. 
Similarly, in data-driven computational design such as drug discovery or materials science, the discovered compound or material should usually satisfy multiple desired properties \citep{Ashby2000multi,Luukkonen2023artificial}.
However, it is well understood that in many settings, doing well on all objectives simultaneously can be inherently impossible and that a trade-off between them is unavoidable (see, e.g., \citet{Menon2018costfairness,Zhang2019Theoretically,Cummings2019Compatibility,
Sanyal2022unfairprivate,Guo2024Controllablepreferenceoptimizationcontrollable}).  %Wei2024Jailbroken Chzhen2022 %Schneider2023Multi, %Bagdasaryan2019Differential %Wang2024Aleatoric % Raghunathan2020Understanding % Chakraborty2024maxmin

%When each goal is expressed as an optimization objective, 
In the presence of such inherent trade-offs, 
%instead of minimizers, 
we are usually interested in learning models that lie on the \emph{Pareto front}\textemdash 
%\fy{tried to directly say what we want, "inducing" is a little too abstract} The trade-off between the objectives induces what is known as a \emph{Pareto front}: 
a concept studied in \emph{multi-objective optimization (MOO)}\textemdash 
where improving in one objective must come at the expense of another. 
%In the optimization literature, it is well-known  that 
Under some conditions \citep{Ehrgott2005Multi}, a point on the Pareto front of $K$ objectives $\Loss_1,\dots,\Loss_K$ can be recovered  by minimizing a scalarized objective, such as 
\begin{equation}
\label{eq:linear-and-Chebyshev-scalarization}
    \sum_{\objectiveindex=1}^K \weight_\objectiveindex \objectiveindexed \quad \text{or} \quad \max_{\objectiveindex\in[K]}\weight_\objectiveindex\objectiveindexed
\end{equation}
%\fy{would probably put in display equation, not inline}
using the appropriate weight vector $\weightTuple$ from the simplex.

In the context of machine learning, however, the Pareto front cannot be computed since the objectives are population-level and hence unknown.
Instead, the Pareto front has to be learned
from data \citep{Jin2008Pareto}\textemdash a problem that falls under the
general \emph{multi-objective learning (MOL)} paradigm.
A standard approach \citep{Lin2019Pareto,Hu2023Revisiting} to learning Pareto optimal points %(or the entire Pareto frontier) 
is to use an MOO method on the empirical versions of the losses $\EmpLoss_1,\dots,\EmpLoss_K$, for example by minimizing the empirical scalarized objective $\sum_{\objectiveindex=1}^K \weight_\objectiveindex \EmpLoss_\objectiveindex$ or $\max_{\objectiveindex\in [K]}\weight_\objectiveindex \EmpLoss_\objectiveindex$. The focus in such works is usually the optimization algorithm, rather than evaluating the generalization of the estimated Pareto frontier to the ``true'' Pareto frontier on test data. To date, there is only little work that formally characterizes how well such methods estimate the population-level Pareto set.
%the generalization of MOO methods estimating the Pareto set, such as \citet{Sukenik2024generalization,Cortes2020Agnostic}.
Further, to the best of our knowledge, the proposed methods in existing theoretical works \citep{Sukenik2024generalization,Cortes2020Agnostic} only recover the true Pareto front if there is sufficiently much labeled data for training the models.
%\fy{(see, e.g. cite only most cited 2 or so? rest appears in rel. work)}
%Deist2021Multiobjectivelearningpredictpareto,Deist2023Multiobjective
%\fy{i'd probably just cite the few that introduced it as long as we can cite the others more specifically?}
% \tw{Not sure there is/are paper that did this first, as its such an easy idea}
%Usually, MOL is done by minimizing a scalarized empirical objective $\scalarization\circ\Lhat$ instead of the scalarized population objective \citep{Jin2008Pareto,Cortes2020Agnostic,Súkeník2022generalizationmultiobjectivemachinelearning,Lin2019Pareto,Deist2021Multiobjectivelearningpredictpareto,Deist2023Multiobjective,Hu2023Revisiting}.
%This seems to recover a standard learning problem: For any trade-off point, we can scalarize the population and empirical objectives, and then apply the machinery from single objective learning. 

%\fy{do we need - what does this citation say?} (e.g., \cite{Duh2012Learning}).

However, in modern overparameterized regimes with relatively little labeled data compared to the model complexity,  %but lots of large amounts of unlabeled data available, 
it is crucial to leverage low-dimensional structure.
% and that the estimator exhibits a ``correct'' inductive bias. 
%such that it can be close to the estimated. 
In single-objective learning, efficient estimators have been proposed that successfully leverage the structural simplicity of the estimand to enjoy sample-efficiency,
%this has been established in a long line of work on estimators with the right inductive bias, 
e.g., for recovering sparse ground truths 
%of the training process is crucial \fy{impmortant: estimator has inductive bias, training has implicit bias}
\citep{Bühlmann2011statistics,Wainwright2019}. 
%Only few works have explored how to leverage structured estimands \fy{how much detail... but "this" is a bit } in the context of multi-objective learning.
%and it is not clear how this can be achieved (cf.\ \cref{subsec:related-work}). \tw{I think we have to be careful with the last sentence}
%\fy{sth like: In the context of multi-objective learning, there are so far no theoretical guarantees on regularization / leveraging structure...}
For multi-objective learning, however, no such results exist to date.
%there are so far no theoretical guarantees on how regularization can counteract the curse of dimensionality in the presence of sparsity or other low-dimensional structure.
Hence, in this paper, we take a step towards addressing the following question:
\begin{center}
    \emph{How can we leverage a low-dimensional structure like sparsity in the presence of multiple competing objectives?}
\end{center}
%\newpage
Our main contributions are outlined below. 
%\tw{this needs to be rewritten! I think mentioning this counterintuitive fact here would be good}
\begin{itemize}[leftmargin=*]
    %\item In \cref{sec:estimators}, we demonstrate that common regularization techniques to learning Pareto optimal points can be insufficient in the high-dimensional regime, even in the presence of low-dimensional structure like sparsity.
    \item We introduce a new two-stage MOL framework that can successfully take advantage of the low-dimensional structure of some distributional parameters %(that parameterize the loss)
    %the objective-specific minimizers 
    to estimate the entire Pareto set
    (\cref{sec:estimators}).
    %, while leveraging unlabeled data .
    \item  We prove upper bounds that illustrate how the estimation error of such distributional parameters propagates to the estimation error of all points in the Pareto set, and show minimax optimality of the procedure under mild conditions using lower bounds (\cref{sec:theoretical-guarantees}). In doing so, we discover that unlabeled data plays a crucial role in MOL.
    %\item We also prove a corresponding lower bound that depends on the minimax estimation errors of the parameters and implies tightness of our estimator for parts of the Pareto set. Although seemingly counterintuitive, it in fact reveals an interesting peculiarities inherent to a large class of MOL problems 
    %optimality of our estimator under certain conditions 
    %(\cref{sec:theoretical-guarantees}).
    \item We demonstrate the effectiveness of our two-stage estimation procedure in several applications (\cref{subsec:examples-theory}), validated by experiments (\cref{sec:experiments}).
\end{itemize}

\section{SETTING AND NOTATION}
\label{sec:setting}

Throughout, we denote $K$-tuples and sets that contain $K$-tuples in bold, where $K$ is the number of objectives.
Let $\cF$ be a hypothesis space of functions $\parameterizedfunction:\cX\to\cY$ parameterized by $\vartheta\in\RR^m$.
%, which we treat as our hypothesis space.
%We consider objectives that are defined via distributions: 
Let $\objectivevector:\RR^m\times \distributionSetTuple \to \RR^K$ be a non-negative vector-valued function that consists of $K$
objectives
%separate non-negative objectives, defined by
\begin{equation*}
    \objectivevector(\vartheta,\distributionXxYTuple)=\pr{\Loss_1(\vartheta,\distributionXxY^1),\dots,\Loss_K(\vartheta,\distributionXxY^K)},
\end{equation*}
where the $\objectiveindex$-th objective depends on a joint distribution $\distributionXxYindexed$
defined on $\cX\times \cY$. We denote by $\distributionXindexed$ the marginal of $X$, and $\distributionSetTuple$ denotes the set of all possible $K$-tuples $\distributionXxYTuple=(\distributionXxY^1,\dots,\distributionXxY^K)$ of joint distributions. %that form the statistical model.
%Whenever $\distributionXxYTuple$ is fixed, 
We sometimes use the short-hand $\objectivevector(\minimizer)=\objectivevector(\minimizer, \distributionXxYTuple)$.
For any $k$, we define the single-objective population minimizers as 
\begin{equation}
\label{eq:individual-minimizers}
    \minimizer_\objectiveindex \in \argmin_{\vartheta\in\RR^m}\objectiveindexed(\vartheta,\distributionXxYindexed) \subset \RR^m.
\end{equation} 
%each joint distribution $\distributionXxYindexed$,
%of the random vector $(X,Y)$,
%$(X,Y)\sim\distributionXxYindexed$, 
%we denote the marginal distribution of $X$ 
%on $\cX$  \fy{this read a bit weird of ... on ..}
%as $\distributionXindexed$ and  %we denote by $\minimizer_\objectiveindex$. 
Our setting includes many well-known problems involving several objectives. For example, 
%To exemplify our setting, one can consider 
choosing objectives 
\begin{equation}
\label{eq:specific}
    \Loss_k(\vartheta, \distributionXxYindexed) = \expect_{(X,Y)\sim \distributionXxYindexed} \br{\ell_k(f_\vartheta(X),Y)},
\end{equation}
for some point-wise losses $\ell_k:\cY\times \cY\to \RR$, 
gives rise to the multi-distribution learning setting \citep{Haghtalab2022Ondemand,Zhang2024Optimal,Larsen2024derandomizing}.
%hence including 
%is some loss.
%In particular, our formulation includes the
%This setting includes 
%previously studied settings where different losses $\ell_1,\dots,\ell_K$ are evaluated on the same distribution $\distributionXxY$ \citep{Duh2012Learning}, or the same loss $\ell$ is evaluated on different distributions $\distributionXxY^1,\dots,\distributionXxY^K$ \citep{Haghtalab2022Ondemand}. \fy{maybe add names for settings e.g. multi-distribution...?}
Concretely, in this paper, we often consider the example of  multi-distribution learning for sparse linear regression, described below.
%and the conditional distribution of $Y$ given $X$ on $\cY$ as $\distributionYgivenXindexed$.
% \begin{restatable}{ass}{Parameterization}
% \label{ass:Parameterization}
%     Objective $\objectiveindexed(\cdot,\distributionXxYindexed)$ depends on $\distributionXxYindexed$ only through $\minimizer_\objectiveindex\in \RR^m$ defined as in \cref{eq:individual-minimizers} and a parameter $\unlabeled_\objectiveindex\in\RR^M$ that only depends on the marginal $\distributionXindexed$. 
% \end{restatable}
%For concreteness, we now present two simple examples that reappear in later sections of the paper.
\begin{restatable}[Sparse linear regression]{ex}{MultipleLinearRegression}
    \label{ex:multiple-linear-regression}
        Consider $\cX=\RR^d$, $\cY=\RR$ and $\distributionSetTuple$ that consists of distributions $\distributionXxYindexed$ induced by
    \begin{equation}
    \label{eq:sparse-multi-distribution-model}
        Y=\inner{X}{\beta_\objectiveindex}+\xi \quad \text{with} \quad \xi\sim\cN(0,\sigma^2),
    \end{equation}
    where $\beta_\objectiveindex\in\RR^d$ are $s$-sparse ground truths $\norm{\beta_\objectiveindex}_0\leq s$ with bounded norm $\norm{\beta_k}_2\leq 1$ for all $k$,
    and $X$ are $B^2$-sub-Gaussian covariate vectors with covariance matrices $\covariance_\objectiveindex = \EE_{X\sim \distributionXindexed}\br{X X^\top} \succeq b^2\identity_d$. 
    As objectives, consider 
    %We are interested in 
    the prediction risk measured by the expected squared-loss on each distribution that is defined as
    \begin{align*}\objectiveindexed(\vartheta,\distributionXxYindexed)&=\EE_{(X,Y)\sim \distributionXxYindexed}\br{(\inner{X}{\vartheta}-Y)^2}\\
        &=\norm{\covariance_\objectiveindex^{1/2}(\vartheta-\beta_\objectiveindex)}_2^2+\sigma^2.
    \end{align*}
\end{restatable}
More generally, our formulation also captures objectives that cannot be written in the specific form \eqref{eq:specific}.
%are of a more general form.
\begin{restatable}[Fairness and risk]{ex}{FairnessRiskTradeoff}
    \label{ex:fairness-risk-trade-off}
    Consider a family of joint distributions $\distributionXxY$ of the random variables $Y,X,A$,
    where $A\in\mathset{\pm1}$ is a Rademacher variable and $Y=\inner{X}{\beta}+\xi$ with an $s$-sparse ground-truth $\norm{\beta}_2\leq 1$, noise $\xi\sim \cN(0,\sigma^2)$ and $X|A \sim \cN(A\mu,\identity_d)$.
    %the conditional distribution of $X$ given $A$ is $\cN(A\mu,\identity_d)$ for some fixed $\mu\in \RR^d$. 
    %The covariates $X$ are distributed as follows: For a Rademacher variable $A$ uniformly distributed on $\mathset{-1,1}$, the conditional distribution of $X$ is $\cN(A\mu,\identity_d)$ for some fixed $\mu\in \RR^d$. 
    The variable $A$ may represent an observed protected group attribute.
    %distributed according to  with noise variables $\xi\sim \cN(0,\sigma^2)$, and $A$ is a Rademacher variable uniformly distributed on $\mathset{-1,1}$. $A$ represents an observed protected group attribute (such as gender or ethnicity) of two groups with different covariate distributions $(X|A=\pm 1)\sim \cN(\pm\mu,\identity_d)$ for some fixed $\mu\in \RR^d$.
    %As one objective, we again consider 
    In this setting we consider two objectives: the expected squared loss
    \begin{equation*}
        \Lrisk(\vartheta,\distributionXxY) = \EE_{(X,Y)\sim \distributionXxY}\br{(\inner{X}{\vartheta}-Y)^2}
    \end{equation*}
    %and as a second objective, we introduce 
    and demographic parity via the $2$-Wasserstein distance between the group-wise distributions of $(\inner{X}{\vartheta}|A=a)$ and their barycenter \citep{Gouic2020Projectionfairnessstatisticallearning,Chzhen2022,Fukuchi2024Demographic}, defined as
    \begin{multline*}
        \Lfair(\vartheta,\distributionXxY) = \min_{\nu\in \cP_2(\RR)} \Big\{\frac{1}{2} W_2^2\pr{\law(\inner{X}{\vartheta}\setmid A=1), \nu}\\
         + \frac{1}{2}W_2^2\pr{\law(\inner{X}{\vartheta}\setmid A=-1), \nu}  \Big\}.
    \end{multline*}
    More details on this setting %, including definitions, 
    can be found in Appendix \ref{sec:fairness-regression-appendix}, where we also demonstrate that under our assumptions, $\Lfair(\vartheta,\distributionXxY) =\inner{\mu}{\vartheta}^2$. Hence, unless $\inner{\mu}{\beta}=0$, there is a trade-off between fairness and risk.
\end{restatable}
%The notion of demographic parity in \cref{ex:fairness-risk-trade-off} can be found in prior work on fair regression, such as ,}.

\subsection{Multi-objective optimization}

In view of multiple objectives, one goal could be to find a parameter that simultaneously minimizes all objectives at once. However, often this is not possible because the sets of minimizers of the objectives do not intersect. 
In that case, 
%We formalize multi-objective learning in terms of 
\emph{multi-objective optimization} (MOO) \citep{Ehrgott2005Multi} aims to find \emph{Pareto-optimal solutions} as defined below.
%either find the set of all \emph{Pareto-optimal} solutions, or a set of points that have a small \emph{scalarized} loss, both defined below.
\begin{definition}[Pareto-optimality]
A parameter $\vartheta\in\RR^m$ is \emph{Pareto-optimal}, if for all $\vartheta'\in \RR^m$ and $\objectiveindex\in[K]$
\begin{equation*}
    \objectiveindexed(\vartheta',\distributionXxYindexed)<\objectiveindexed(\vartheta,\distributionXxYindexed) \Rightarrow \exists j: \Loss_j(\vartheta',\distributionXxY^j)>\Loss_j(\vartheta,\distributionXxY^j).
\end{equation*}
    The set $\PF=\mathset{\objectivevector(\vartheta,\distributionXxYTuple) \setmid \vartheta \text{ is Pareto-optimal}}$ is called the \emph{Pareto front} and the set $\mathset{\vartheta\in\RR^m\setmid \vartheta\text{ is Pareto-optimal}}$ is called \emph{Pareto set} of $\objectivevector$ and $\distributionXxYTuple$.
\end{definition}
%then $\minimizer_\objectiveindex \neq \minimizer_j$ for $k\neq j$ \fy{strictly speaking this doesn't say they don't intersect, but your choices of minimizers are not equal}. 
For a toy two-objective problem, \cref{fig:toy-pareto-front} 
depicts the Pareto set in the parameter space $\RR^m$ with ``end-points'' $\minimizer_\objectiveindex$ as the solid gray line on the left, and the Pareto front in the space of objective values on the right. 
%On the left, the solid gray line depicts the Pareto set in the parameter space $\RR^m$ with ``end-points'' $\minimizer_\objectiveindex$, while on the right, it traces the Pareto front in the two-dimensional space of objective function values. 
The gray-shaded region on the right corresponds to the set of all achievable value pairs of the two objective functions.
%Optimal trade-off points between the objectives 
The Pareto set can often be recovered using what is known as \emph{scalarization}.
%can be \fy{captured/enforced} by scalarizing the multiple objectives. %to a scalar quantity.
%, respectively. More generally, 
\begin{definition}
    A \emph{scalarization} of $\objectivevector$ is the composition $\scalarizationObjectiveComposition$ with a function $\scalarization:\RR^K\to\RR$, parameterized by $\weightTuple$ in the simplex $\Delta^{K}= \{\weightTuple \in \RR^K : \weight_\objectiveindex\geq 0 \text{ and } \sum_{\objectiveindex=1}^K\weight_\objectiveindex=1 \}$. 
\end{definition}
\cref{eq:linear-and-Chebyshev-scalarization} already introduced two important scalarizations, known as linear and Chebyshev scalarization.
We denote by $\minimizerWeighted$ the minimizer of a scalarization, i.e.,
\begin{equation}
    \label{eq:scalarization}
    \minimizerWeighted\in\argmin_{\vartheta\in\RR^m}\scalarizationObjectiveComposition(\vartheta,\distributionXxYTuple).
\end{equation}
\begin{comment}
\fy{i would shorten this. basically, it suffices to only leave the last sentence and just add a reference where people can check the conditions when either scarlaization parameterizes the entire set?}
It is well-known \citep{Ehrgott2005Multi} that a solution to \eqref{eq:scalarization}
is Pareto-optimal for linear or Chebyshev scalarization, if the solution is unique or $\weight_\objectiveindex>0$ for all $\objectiveindex\in[K]$. Hence, each minimizer $\minimizerWeighted$ lies in the set of Pareto-optimal points, and $\objectivevector(\minimizerWeighted,\distributionXxYTuple)$ lies on the Pareto front, as depicted in \cref{fig:toy-pareto-front}.
Further, when \eqref{eq:scalarization} has a unique solution for all $\weightTuple$ with $\weight_\objectiveindex=0$ for some $\objectiveindex$, Chebyshev scalarization recovers all Pareto-optimal points. When additionally strict convexity holds for each objective, linear scalarization also parameterizes the entire set of Pareto-optimal points, that is
%additionally requires 
%In both cases, 
%Hence, under such assumptions, we can view $\weightTuple$ as parameterization of the set of Pareto-optimal points, 
each $\weightTuple$ corresponds to one Pareto-optimal point and vice versa and the set $\mathset{(\minimizerWeighted,\weightTuple)\setmid \weightTuple\in \Delta^K}$ is called \emph{Pareto manifold} \citep{Miettinen1999nonlinear,Hillmeier2001Generalized,Roy2023optimizationparetosetstheory}. Note that then, in the special case $\weight_\objectiveindex=1$, we obtain $\minimizerWeighted = \minimizer_\objectiveindex$.
%In the 
\end{comment}
For rest of the paper, when we use $\minimizerWeighted$ to denote a minimizer of an unspecified scalarization, we implicitly assume that the scalarization parameterizes the Pareto set, that is, $\weightTuple\mapsto \minimizerWeighted$ is a surjection from the simplex to the Pareto set\textemdash see \citet{Hillmeier2001Generalized,Roy2023optimizationparetosetstheory} for details on when this is true, and the manifold structure this map can induce.
%We explicitly write the 
%such a parameterization applies for the appropriately chosen scalarization.
%\tw{Find a better way to formulate this} \fy{i tried}

%In \cref{fig:toy-pareto-front}, we depict a toy example of the set of Pareto-optimal points (red line on the left) and the corresponding Pareto front (red line on the right).

\begin{figure}
    \centering
    \begin{tikzpicture}[scale=0.6]

% Pic 1 -----------------------------------------

\draw[-] (-6.5, 1) -- (-1.5, 1) node[below] {};
\draw[-] (-5, -0.2) -- (-5, 5.5) node[left] {};
\node[] (Rm) at (-2,5){$\RR^m$};

\draw[thick, population_color] plot[smooth, tension=0.6] coordinates {(-5,2) (-4.5,3) (-3.5,3) (-2.5,3.2) (-3,1)};
\fill[population_color] (-5,2) circle (2pt) node[left] {$\minimizer_2$};
\fill[population_color] (-3,1) circle (2pt) node[below] {$\minimizer_1$};
\fill[population_color] (-2.5,3.2) circle (2pt) node[above] {$\minimizerWeighted$};

\draw[thick, ts_color, dashed] plot[smooth, tension=0.6] coordinates {(-5,3) (-4.5,3.5) (-3,3.9) (-2,3.5) (-2,2.5) (-2.5,1)};
\fill[ts_color] (-5,3) circle (2pt) node[left] {$\estimatedMinimizer_2$};
\fill[ts_color] (-2.5,1) circle (2pt) node[below] {$\estimatedMinimizer_1$};
\fill[ts_color] (-2,3.5) circle (2pt) node[right] {$\estimatedMinimizerWeighted$};

% Pic 2 -----------------------------------------
\draw[-] (-0.2, 0) -- (5.5, 0);
\node[] (ax1) at (-0.3,5.5){\tiny $\Loss_1$};
\draw[-] (0, -0.2) -- (0, 5.5);
\node[] (ax1) at (5.5,-0.3){\tiny $\Loss_2$};

\fill[population_color!20, opacity=0.5] (5.5, 5.5) -- (1.1, 5.5) -- (1,4) -- (1.5,3.15) -- (2,2.5) -- (2.4, 2.2) -- (3,2) -- (4,1.8) -- (5,1) -- (5.5, 1.1) -- cycle;
\node[] () at (4.3,5){$\objectivevector(\RR^m)$};
\node[population_color] () at (1.5,2.5){$\PF$};

\draw[thick, population_color] plot[smooth, tension=0.6] coordinates {(1,4) (2,2.5) (3,2) (4,1.8) (5,1)};
\fill[population_color] (1,4) circle (2pt) node[left] {$\objectivevector(\minimizer_2)$};
\fill[population_color] (5,1) circle (2pt) node[below] {$\objectivevector(\minimizer_1)$};
\fill[population_color] (3,2) circle (2pt) node[below] {$\objectivevector(\minimizerWeighted)$};

\draw[thick, ts_color, dashed] plot[smooth, tension=0.6] coordinates {(1.2,4.3) (2.5,2.5) (3.1,2.3) (4.4,2.2) (5.1,1.5)};
\fill[ts_color] (1.2,4.3) circle (2pt) node[above] {$\objectivevector(\estimatedMinimizer_2)$};
\fill[ts_color] (5.1,1.5) circle (2pt) node[right] {$\objectivevector(\estimatedMinimizer_1)$};
\fill[ts_color] (3,2.3) circle (2pt) node[above right] {$\objectivevector(\estimatedMinimizerWeighted)$};

\draw[gray, ->,very thick, bend right=10] (-1.5,2) to (1,2.3);

\end{tikzpicture}
    \caption{\small{The parameter space $\RR^m$ (left) parameterizes the hypothesis set $\cF$ and contains the population Pareto set $\{\minimizerWeighted |  \weightTuple\in\Delta^{K}\}$ (gray line), and the set of the empirical estimators $\{\estimatedMinimizerWeighted | \weightTuple\in \Delta^K\}$ (dashed blue line). 
    %$\minimizerWeighted$ can be non-sparse even when $\minimizer_1,\minimizer_2$ are sparse. 
    The right figure depicts the region of all values that can be obtained by $\objectivevector(\vartheta)$ for some $\vartheta$ (gray shaded area),  the population Pareto front $\PF$ (gray line) and estimated Pareto front (dashed blue line).}}
    \label{fig:toy-pareto-front}
    \vspace{-0.1in}
\end{figure}

\subsection{Multi-objective learning}
In practice, we may only observe finite samples from the distributions $\distributionXxYindexed$.
%but finite samples from it. 
We consider a semi-supervised setting, where we observe 
a set $\cD$ consisting of $n_\objectiveindex$ 
i.i.d.\ labeled samples from $\distributionXxYindexed$, denoted
$\mathset{(X^{\objectiveindex}_{i},Y^{\objectiveindex}_{i})}_{i=1}^{n_\objectiveindex}$, as well as $N_\objectiveindex\in\NN$ unlabeled i.i.d.\ samples %contained in the dataset 
$\mathset{X^{\objectiveindex}_{i}}_{i=n_\objectiveindex+1}^{n_\objectiveindex+N_\objectiveindex}$
from each marginal distribution $\distributionXindexed$. 
%We let $\cD$ denote the entire dataset of labeled and unlabeled datapoints.
Then, the empirical measure $\empdistributionXxYindexed=n_\objectiveindex^{-1}\sum_{i=1}^{n_\objectiveindex}\ind_{(X^{\objectiveindex}_{i},Y^{\objectiveindex}_{i})}$ is an estimate of the distribution $\distributionXxYindexed$ and we denote $\empdistributionXxYTuple=(\empdistributionXxY^1,\dots,\empdistributionXxY^K)$.
%Based on the labeled samples, we can estimate $\distributionXxYindexed$ using the empirical measure $\empdistributionXxYindexed=n_\objectiveindex^{-1}\sum_{i=1}^{n_\objectiveindex}\ind_{(X^{\objectiveindex}_{i},Y^{\objectiveindex}_{i})}$ and define $\empdistributionXxYTuple=(\empdistributionXxY^1,\dots,\empdistributionXxY^K)$ accordingly. 
%We again denote $\empdistributionXxY=(\empdistributionXxYindexed)_{\objectiveindex=1}^K$. 
%Moreover, we can estimate the marginal $\distributionXindexed$ via the empirical marginal using  the entire dataset $\empdistributionXindexed=(n_\objectiveindex+N_\objectiveindex)^{-1}\sum_{i=1}^{n_\objectiveindex+N_\objectiveindex}\ind_{X^{\objectiveindex}_{i}}$.
%The aim in this setting is to 
The aim of \emph{multi-objective learning} (MOL) is to use the data $\cD$ to recover the Pareto set $\{\minimizerWeighted | \weightTuple\in\Delta^{K}\}\subset\RR^m$, that is, to find a set of estimators $\{\estimatedMinimizerWeighted | \weightTuple\in\Delta^{K}\}$ with small estimation errors %$\ell_2$-norm, 
$\|\estimatedMinimizerWeighted-\minimizerWeighted\|_2$. 
In our paper, we treat the estimation of each point $\minimizerWeighted$ as a separate problem, although our statements hold for all $\weightTuple \in\Delta^{K}$.  
%\fy{i always wondered here, whether this has to be known in advance, this lambda - or we just defined it as the one in the set with smallest distance of $\minimizerWeighted$}
%Note that here, we denote by $\estimatedMinimizerWeighted$ the estimator for the Pareto-optimal point $\minimizerWeighted$ parameterized by $\weightTuple$.
%\fy{now here i think you might immediately translate into the empirical version and think is it clear that you'd use the same lambda etc.} 
For each $\weightTuple$, we then compare our estimation procedures with the information-theoretically optimal
%expected squared $\ell_2$-norm.We denote the minimax 
estimation error (i.e., minimax error)
\begin{equation}
\label{eq:minimax-estimation}
    \minimax(\distributionSetTuple)=\inf_{\estimatedMinimizerWeighted}\sup_{\distributionXxYTuple\in\distributionSetTuple} \EE_{\cD}\br{\norm{\estimatedMinimizerWeighted(\cD)-\minimizerWeighted}_2},
\end{equation}
where the infimum is taken over all estimators $\estimatedMinimizerWeighted$ that map $\cD$ to $\RR^m$, and the expectation is taken over draws of $\cD$. Note that later we adopt the common abuse of notation and drop the dependence on $\cD$, that is, we write $\estimatedMinimizerWeighted=\estimatedMinimizerWeighted(\cD)$.

%\fy{if we never refer to the first displayequation below, then we don't need to display it here ;)}
This paper focuses on bounding the estimation error, since\textemdash under mild regularity assumptions\textemdash it bounds other metrics of interest, such as the  
%We consider estimation error, because any bounds on $\|\estimatedMinimizerWeighted-\minimizerWeighted\|_2$ are strong in the sense that under weak assumptions such as smoothness, they also imply bounds on the excess in each individual objective, $\objectiveindexed(\estimatedMinimizerWeighted,\distributionXxYindexed)-\objectiveindexed(\minimizerWeighted,\distributionXxYindexed)$, as well as the 
excess scalarized objective 
\begin{equation}
\label{eq:excess-scalarized-loss}
    \excessscalarization(\estimatedMinimizerWeighted):=\scalarizationObjectiveComposition(\estimatedMinimizerWeighted,\distributionXxYTuple)-\min_{\vartheta\in\RR^m}\scalarizationObjectiveComposition(\vartheta,\distributionXxYTuple),
\end{equation}
and the \emph{hypervolume} \citep{Zitzler1999multiobjective} of the estimated Pareto front, defined for $\cS \subset [0,r]^K$, $r\geq 0$, as\footnote{We write $s\preceq x$ if $s_i\leq x_i$ for all $i$.}
\begin{equation}
\label{eq:hypervolume-definition}
    \hypervolume_r(\cS) := \vol(\mathset{x\in [0,r]^K \setmid\exists s\in \cS: s \preceq x}).
\end{equation}
Here $\vol(\cdot)$ denotes the Lebesgue measure on $\RR^K$. We now formalize this statement using the function
\begin{equation*}
    \eps(G,\smoothnessParam,\weightTuple) := G\norm{\estimatedMinimizerWeighted-\minimizerWeighted}_2 + \frac{\smoothnessParam}{2}\norm{\estimatedMinimizerWeighted-\minimizerWeighted}_2^2.
\end{equation*}
\vspace{-0.1in}
\begin{restatable}{prop}{HypervolumeBound}
\label{prop:HypervolumeBound}
Let $G_\objectiveindex:=\sup_{\weightTuple\in\Delta^K}\norm{\nabla_\vartheta\objectiveindexed(\minimizerWeighted)}_2$, assume $\vartheta\mapsto \objectiveindexed(\vartheta)$ is $\smoothnessParam_k$-smooth,
%Define the function $\eps$ as
and define $\eps_{\max} := \max_{k\in[K],\weightTuple\in\Delta^K}\eps(G_k,\smoothnessParam_k,\weightTuple)$. It then holds that
%Then the following inequalities hold:
\begin{enumerate}
\vspace{-0.2cm}
\setlength\itemsep{0em}
    \item for linear scalarization, $\excessscalarization(\estimatedMinimizerWeighted) \leq \eps(0,\scalarization(\boldsymbol{\smoothnessParam}),\weightTuple)$, 
    \item $\objectiveindexed(\estimatedMinimizerWeighted)-\objectiveindexed(\minimizerWeighted)\leq \eps(G_k,\smoothnessParam_k,\weightTuple)$,
    \item $\hypervolume_r(\PFhat) \geq \pr{1-2\eps_{\max}/r}^K\hypervolume_r(\PF)$, 
\vspace{-0.2cm}
\end{enumerate}
for $\PFhat = \{\objectivevector(\estimatedMinimizerWeighted)| \weightTuple\in\Delta^K\}$, $\PF = \mathset{\objectivevector(\minimizerWeighted)\setmid \weightTuple\in\Delta^K}$ and any constant $r\geq 2\sup_{\weightTuple \in \Delta^K}\norm{\objectivevector(\minimizerWeighted)}_\infty$. 
\end{restatable}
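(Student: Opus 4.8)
The plan is to establish the three inequalities in order, the third building on the second. Abbreviate $\delta := \norm{\estimatedMinimizerWeighted-\minimizerWeighted}_2$ and recall the descent lemma: a $\smoothnessParam$-smooth $g$ obeys $g(\vartheta') \le g(\vartheta) + \inner{\nabla g(\vartheta)}{\vartheta'-\vartheta} + \tfrac{\smoothnessParam}{2}\norm{\vartheta'-\vartheta}_2^2$ for all $\vartheta,\vartheta'$. For Part~2, apply this with $g=\objectiveindexed$, $\vartheta=\minimizerWeighted$, $\vartheta'=\estimatedMinimizerWeighted$ and bound the cross term via Cauchy--Schwarz and the definition of $G_\objectiveindex$, namely $\inner{\nabla_\vartheta\objectiveindexed(\minimizerWeighted)}{\estimatedMinimizerWeighted-\minimizerWeighted} \le \norm{\nabla_\vartheta\objectiveindexed(\minimizerWeighted)}_2\,\delta \le G_\objectiveindex\delta$; this gives $\objectiveindexed(\estimatedMinimizerWeighted)-\objectiveindexed(\minimizerWeighted) \le G_\objectiveindex\delta + \tfrac{\smoothnessParam_\objectiveindex}{2}\delta^2 = \eps(G_\objectiveindex,\smoothnessParam_\objectiveindex,\weightTuple)$. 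For Part~1, observe that under linear scalarization $\scalarizationObjectiveComposition(\vartheta) = \sum_{\objectiveindex=1}^K\weight_\objectiveindex\objectiveindexed(\vartheta)$ is a convex combination of $\smoothnessParam_\objectiveindex$-smooth functions, hence $\scalarization(\smoothnessParamTuple)$-smooth with $\scalarization(\smoothnessParamTuple)=\sum_{\objectiveindex=1}^K\weight_\objectiveindex\smoothnessParam_\objectiveindex$, and that its gradient vanishes at $\minimizerWeighted$ since $\minimizerWeighted$ is an unconstrained minimizer on $\RR^m$; the descent lemma with this zero cross term gives $\excessscalarization(\estimatedMinimizerWeighted) = \scalarizationObjectiveComposition(\estimatedMinimizerWeighted)-\scalarizationObjectiveComposition(\minimizerWeighted) \le \tfrac12\scalarization(\smoothnessParamTuple)\delta^2 = \eps(0,\scalarization(\smoothnessParamTuple),\weightTuple)$. (A first-order term appears in Part~2 but not Part~1 precisely because $\minimizerWeighted$ need not be a critical point of an individual $\objectiveindexed$.)

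\textbf{The hypervolume bound (Part 3).} Part~2 and the definition of $\eps_{\max}$ yield, for every $\weightTuple\in\Delta^K$, the coordinatewise inequality $\objectivevector(\estimatedMinimizerWeighted) \preceq \objectivevector(\minimizerWeighted) + \eps_{\max}\ones$ in $\RR^K$, which is the only property of the estimator Part~3 uses. We may assume the nontrivial regime $2\eps_{\max}<r$, so $1-2\eps_{\max}/r\in(0,1]$, and write $D(\cS):=\{x\in[0,r]^K : \exists\, s\in\cS,\ s\preceq x\}$, so $\hypervolume_r(\cS)=\vol(D(\cS))$. I would introduce the affine map
\[
    T:\RR^K\to\RR^K,\qquad T(x)=\Big(1-\tfrac{2\eps_{\max}}{r}\Big)x + 2\eps_{\max}\ones,
\]
which is coordinatewise increasing, maps $[0,r]^K$ onto $[2\eps_{\max},r]^K\subseteq[0,r]^K$, and has Jacobian determinant $(1-2\eps_{\max}/r)^K$. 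The crux is the inclusion $T(D(\PF))\subseteq D(\PFhat)$: for $x\in D(\PF)$, pick $\weightTuple$ with $\objectivevector(\minimizerWeighted)\preceq x$; then $\objectivevector(\estimatedMinimizerWeighted)\preceq T(x)$, checked per coordinate --- if $x_\objectiveindex\le r/2$ then $(2\eps_{\max}/r)x_\objectiveindex\le\eps_{\max}$, so $T(x)_\objectiveindex \ge x_\objectiveindex+\eps_{\max} \ge \objectiveindexed(\minimizerWeighted)+\eps_{\max} \ge \objectiveindexed(\estimatedMinimizerWeighted)$; and if $x_\objectiveindex>r/2$ then $T(x)_\objectiveindex > (1-2\eps_{\max}/r)(r/2)+2\eps_{\max} = r/2+\eps_{\max} \ge \sup_{\weightTuple\in\Delta^K}\norm{\objectivevector(\minimizerWeighted)}_\infty+\eps_{\max} \ge \objectiveindexed(\minimizerWeighted)+\eps_{\max} \ge \objectiveindexed(\estimatedMinimizerWeighted)$, the key step using $r\ge 2\sup_{\weightTuple\in\Delta^K}\norm{\objectivevector(\minimizerWeighted)}_\infty$. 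Since $\objectivevector(\estimatedMinimizerWeighted)\in\PFhat$ and $T(x)\in[0,r]^K$, we get $T(x)\in D(\PFhat)$. As $T$ is injective and affine, $\hypervolume_r(\PFhat)=\vol(D(\PFhat))\ge\vol(T(D(\PF)))=(1-2\eps_{\max}/r)^K\vol(D(\PF))=(1-2\eps_{\max}/r)^K\hypervolume_r(\PF)$.

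\textbf{Main obstacle.} Parts~1 and~2 are routine once the descent lemma and first-order optimality of $\minimizerWeighted$ are invoked. The real work is Part~3, and specifically finding the transport map $T$: a pure translation by $\eps_{\max}\ones$ carries $D(\PF)$ into $D(\PFhat)$ but spills out at the faces $\{x_\objectiveindex=r\}$ and produces no clean multiplicative constant, whereas a pure dilation cannot swallow the additive slack $\eps_{\max}\ones$; the affine map above is tuned to do both simultaneously. Verifying $T(D(\PF))\subseteq D(\PFhat)$ via the two-case coordinate argument --- which is exactly where the hypothesis $r\ge 2\sup_{\weightTuple}\norm{\objectivevector(\minimizerWeighted)}_\infty$ is used --- is the one genuinely delicate step.
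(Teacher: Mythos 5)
Your proposal is correct. Parts 1 and 2 match the paper's proof essentially verbatim: the descent lemma applied to the scalarization with the stationarity condition $\nabla_\vartheta\scalarizationObjectiveComposition(\minimizerWeighted)=0$ for Part 1, and Cauchy--Schwarz with the definition of $G_\objectiveindex$ for Part 2.

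For Part 3 you take a genuinely different route. The paper invokes a representation of $\hypervolume_r(\cS)$ as an expected random hypervolume scalarization, $\hypervolume_r(\cS) = c_K\,\EE_{u}\bigl[\max_{s\in\cS}\min_{k}((r-s_k)/u_k)^K\bigr]$ (adapted from Zhang and Golovin), then bounds the ratio $\min_{k,\weightTuple}(r-\objectiveindexed(\estimatedMinimizerWeighted))/(r-\objectiveindexed(\minimizerWeighted))$ from below by $1-2\eps_{\max}/r$ using $r-\objectiveindexed(\minimizerWeighted)\geq r/2$, and raises it to the $K$-th power inside the expectation. You instead exhibit an explicit affine map $T(x)=(1-2\eps_{\max}/r)x+2\eps_{\max}\ones$ with Jacobian determinant $(1-2\eps_{\max}/r)^K$ and prove the set inclusion $T(D(\PF))\subseteq D(\PFhat)$ directly; your two-case coordinate check is sound, and the hypothesis $r\geq 2\sup_{\weightTuple}\norm{\objectivevector(\minimizerWeighted)}_\infty$ enters in exactly the same role as in the paper (absorbing the additive error $\eps_{\max}$ into a multiplicative contraction). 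Your argument is more elementary and self-contained --- it needs only the change-of-variables formula for affine maps rather than the spherical-integration identity behind the paper's Lemma 1 --- while the paper's scalarization representation is a reusable tool in its own right (e.g., for random-scalarization algorithms). Both proofs share the same minor unaddressed technicality, namely that $\PFhat$ need not be contained in $[0,r]^K$ and that $D(\PF)$ must be Lebesgue measurable; neither issue affects the result.
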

The proof of \cref{prop:HypervolumeBound} can be found in \cref{subsec:proof-HypervolumeBound}.

\section{TWO ESTIMATORS}
\label{sec:estimators}
%A FAILURE MODE OF PLUG-IN ESTIMATION}

A commonly used approach to recover $\minimizerWeighted$
%in the multi-objective learning literature 
is to consider a \emph{plug-in} estimator that is
the minimizer of the objective \eqref{eq:scalarization} where $\distributionXxYTuple$ is replaced by $\empdistributionXxYTuple$ (see, e.g., \citep{Jin2007multi}).
Albeit simple, this approach suffers from the curse of dimensionality when $m$ is large relative to the sample size (i.e., in the high-dimensional regime). We now discuss how low-dimensional structure can be leveraged for MOL.
%the high-dimensional regime requires introducing regularization to this approach. But no work so far effectively deals with structure in MOL.

\subsection{Naive approach: direct regularization}
For a single objective, a common practice in high-dimensional statistics is to use a regularizer that reflects the structural simplicity (such as sparsity) of the single objective minimizer. 
Since the scalarized objective in \eqref{eq:scalarization} can be viewed as a generic scalar loss, one may be tempted
to analogously add 
%to follow common practice for high-dimensional (single-objective) learning by adding 
a penalty term $\rho:\RR^m\to \RR$ to the empirical objective and find a minimizer of a directly regularized ($\regularizedpluginOp$) scalarization, that it, to use\footnote{Alternatively, one may also add a penalty in each objective separately or view the penalty as an additional objective. For linear scalarization, any of these alternative strategies would result in a final estimator that is equivalent to \eqref{eq:penalized-scalarization}.}
%solve a directly regularized ($\regularizedpluginOp$) scalarization
\begin{equation}
\label{eq:penalized-scalarization}
    \drEstimatedMinimizerWeighted \in \argmin_{\vartheta\in\RR^m} \scalarizationObjectiveComposition(\vartheta,\empdistributionXxYTuple)+\rho(\vartheta).
\end{equation}
%or constrain the parameter space to a subset of $\RR^m$. \fy{not necessary}
This approach can be effective 
%is often used 
if the inductive bias on the multi-objective solution \eqref{eq:scalarization} is the same across the Pareto set \citep{Jin2008Pareto,Cortes2020Agnostic,Mierswa2007Controlling,Bieker2022Treatment,Hotegni2024multiobjectiveoptimizationsparsedeep}.
%The rationale behind regularization in high-dimensional learning is the assumption that the minimizer exhibits a certain structure that is captured by the penalty $\rho$ (e.g., sparsity by the $\ell_1$-norm). Hence, estimators of the form \eqref{eq:penalized-scalarization} could work well if the corresponding Pareto-optimal points adhere to this structure. %\fy{abstractly speaking, we try to learn whole pareto front from a few easy points,i.e. sth like (rephrase!!)} 
However, assuming that all points $\minimizerWeighted$ in the Pareto set have the same simple structure is less justified for a generic problem.
%unrealistic depending on the problem at hand.  %(e.g., the entire Pareto-optimal set is sparse).
In particular, we typically assume structural simplicity of distributional parameters, such as sparsity of $\beta_k$ in \cref{ex:multiple-linear-regression}.
Direct regularization penalties such as \eqref{eq:penalized-scalarization} then help in single-objective learning 
because the \emph{minimizers $\minimizer_k$ happen to coincide with distributional parameters}.
%only because the \emph{minimizers $\minimizer_k$ happen to coincide with distributional parameters}, like $\beta_k$ in \cref{ex:multiple-linear-regression}, that exhibit sparsity.
But even if some distributional parameters are sparse, large parts of the Pareto set will \emph{not} coincide with them, and hence can still be non-sparse (cf.\ \cref{fig:toy-pareto-front}).
Therefore, applying a regularization penalty that works for the individual objectives (such as an $\ell_1$-norm penalty) does not generally improve the estimate from \eqref{eq:penalized-scalarization} for all $\weightTuple$, except when $\weight_\objectiveindex=1$ and we are estimating $\minimizer_\objectiveindex$. 
%Let us exemplify this effect in a simple example.
\begin{restatable}[Sparse fixed-design linear regression]{ex}{FixedDesignLinearRegression}
\label{ex:fixed-design-linear-regression}
    %Consider the setting where 
    Let $n\geq d$. For each $k\in\mathset{1,2}$, we observe a fixed design matrix $ \X_k \in \mathfrak{X}(\gamma)$, where $\mathfrak{X}(\gamma)$ is defined as 
    \begin{equation*}
    \mathfrak{X}(\gamma) = \mathset{\X\in \RR^{n\times d}\setmid \gamma^{-1}\identity_d \preceq n^{-1}\X^\top \X \preceq \gamma \identity_d}
\end{equation*}
for some $\gamma >1$.
Further, we observe noisy responses
\begin{equation*}y^\objectiveindex=\X_\objectiveindex\beta_\objectiveindex+\xi^{\objectiveindex} \quad\text{with}\quad \xi^\objectiveindex\overset{\text{i.i.d.}}{\sim} \cN(0,\sigma^2\identity_n),
\end{equation*}
where $\beta_k$ is from the set $\Gamma\subset \RR^d$ of $1$-sparse vectors.
%with additive i.i.d.\ Gaussian noise with variance $\sigma^2$.
With slight abuse of notation, we define the population and empirical objectives for $\objectiveindex\in\mathset{1,2}$ as
\begin{align*}
    \objectiveindexed (\vartheta,\distributionXxYindexed) &= n^{-1}\norm{\X_\objectiveindex(\vartheta-\beta_\objectiveindex)}_2^2, \\ 
    \objectiveindexed (\vartheta,\empdistributionXxYindexed) &= n^{-1}\norm{\X_\objectiveindex\vartheta-y^k}_2^2.
\end{align*}
%Let $n\geq d$, 
%and define for any $\gamma>1$
%\begin{equation*}
%    \mathfrak{X}(\gamma) = \mathset{\X\in \RR^{n\times d}\setmid \frac{1}{\gamma}\identity_d \preceq \frac{1}{n}\X^\top \X \preceq \gamma \identity_d}.
%\end{equation*}
\end{restatable}
\vspace{-0.2cm}
Note that here $\minimizer_k=\beta_k$ are sparse 
and direct regularization with $\ell_1$-norm (i.e., the LASSO \citep{Tibshirani1996lasso}) would mitigate the curse of dimensionality for $\weightTuple$ with $\weight_k=1$, leading to an estimation error of order $\sigma^2\log (d) / n$ \citep{Bickel2009simultaneous}. 
The following proposition shows, however, that for general $\weightTuple$, any estimator with direct regularization \eqref{eq:penalized-scalarization} \emph{cannot} leverage the sparsity of $\beta_\objectiveindex$ and incurs a sample complexity that is linear in the dimension $d$. 
%\emph{cannot} leverage the sparsity of $\beta_\objectiveindex$ to mitigate the curse of dimensionality when estimating $\minimizerWeighted$.
\begin{restatable}[Insufficiency of direct regularization]{prop}{InsufficiencyPluginRegularization}
\label{prop:InsufficiencyPluginRegularization}
Consider \cref{ex:fixed-design-linear-regression} and linear scalarization. For any $\weightTuple$ with $\weight_1,\weight_2>0$, $\sigma^2\leq 2n\gamma^2/(d+1)$ %and if we use linear scalarization, then, for 
    and any regularizer $\rho$, an estimator $\drEstimatedMinimizerWeighted$ from \eqref{eq:penalized-scalarization} satisfies
    \begin{equation*}
        \sup_{\substack{\beta_1,\beta_2\in \Gamma \\\X_1,\X_2\in \mathfrak{X}(\gamma)}} \EE\br{\norm{\drEstimatedMinimizerWeighted-\minimizerWeighted}_2^2}\gtrsim \frac{\sigma^2 d}{n \gamma}. 
    \end{equation*}
\end{restatable}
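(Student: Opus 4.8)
The plan is to exploit that an estimator of the form \eqref{eq:penalized-scalarization} is far more constrained than it looks. With linear scalarization and the quadratic losses of \cref{ex:fixed-design-linear-regression}, the objective in \eqref{eq:penalized-scalarization} equals $n^{-1}(\vartheta-Z)^\top A(\vartheta-Z)+\rho(\vartheta)$ up to a $\vartheta$-independent constant, where $A:=\weight_1\X_1^\top\X_1+\weight_2\X_2^\top\X_2$ and $Z:=A^{-1}(\weight_1\X_1^\top y^1+\weight_2\X_2^\top y^2)$. Hence, once $\rho$ is fixed, $\drEstimatedMinimizerWeighted$ is a deterministic (measurable) selection $\Phi_{A,\rho}(Z)$ of a \emph{single} statistic $Z$, and since $A\minimizerWeighted=\sum_k\weight_k\X_k^\top\X_k\beta_k$ by the normal equations, $Z\sim\cN\!\pr{\minimizerWeighted,\covariance_W}$ with $\covariance_W=\sigma^2 A^{-1}\pr{\weight_1^2\X_1^\top\X_1+\weight_2^2\X_2^\top\X_2}A^{-1}$. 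The first step is to record this reduction; the key point is that $\Phi_{A,\rho}$ depends on the designs \emph{only through $A$} — not through the individual Gram matrices $\X_k^\top\X_k$ — and $\rho$ cannot depend on the designs at all. Estimating $\minimizerWeighted$ from $Z$ is thus a pure Gaussian denoising problem, and the whole lower bound reduces to showing that no map of $Z$ can denoise well over the family of instances we are free to choose.

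Second, I would build an adversarial family indexed by a hypercube $\theta\in\mathset{\pm1}^d$, all instances sharing the \emph{same} matrix $A$ (forcing the estimator to apply the same $\Phi_{A,\rho}$ to the same $Z$), but with Gram matrices $\X_1^\top\X_1,\X_2^\top\X_2$ varying over the set $\mathset{(\M_1,\M_2)\text{ satisfying }\gamma^{-1}\identity\preceq n^{-1}\M_k\preceq\gamma\identity,\ \weight_1\M_1+\weight_2\M_2=A}$ and with $1$-sparse ground truths $\beta_1(\theta),\beta_2(\theta)\in\Gamma$, chosen so that the Pareto points $\minimizerWeighted(\theta)=A^{-1}\pr{\weight_1\X_1^\top\X_1\beta_1(\theta)+\weight_2\X_2^\top\X_2\beta_2(\theta)}$ trace out a scaled hypercube of side $\asymp\delta$ in $\RR^d$, with $\delta\asymp\sqrt{\sigma^2/(n\gamma)}$. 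The non-trivial observation is that although a \emph{single} design yields only a low-dimensional set of achievable Pareto points (because each $\beta_k$ is $1$-sparse, so $A^{-1}(\weight_1\M_1\beta_1+\weight_2\M_2\beta_2)$ ranges only over a union of low-dimensional pieces), ranging over the admissible designs while holding $A$ fixed makes the union of achievable Pareto points genuinely $d$-dimensional — precisely the degrees of freedom that one fixed penalty $\rho$ cannot track. I would choose $A$ in the ``amplifying'' regime $A\succeq c' n\identity_d$, so that $\covariance_W\succeq c\,\sigma^2/(n\gamma)\,\identity_d$, while keeping the achievable Pareto points inside a ball of radius $\asymp\sqrt\gamma$; the hypothesis $\sigma^2\le 2n\gamma^2/(d+1)$ is exactly what makes these two requirements compatible, i.e.\ what guarantees that the ball is large enough to contain a hypercube whose side $\delta$ still satisfies $\delta\sqrt d\lesssim\sqrt\gamma$.

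Third, I would finish with a standard information-theoretic argument. Since all instances share $A$ and, by construction, have comparable noise covariances, the law $\cN(\minimizerWeighted(\theta),\covariance_W(\theta))$ of $Z$ is close in KL to the law under any Hamming neighbor $\theta'$ — one gets $\mathrm{KL}\lesssim\delta^2\cdot(n\gamma/\sigma^2)=O(1)$ from $\covariance_W\succeq c\sigma^2/(n\gamma)\identity_d$ — so Assouad's lemma yields $\sup_\theta\EE\br{\norm{\drEstimatedMinimizerWeighted-\minimizerWeighted(\theta)}_2^2}\gtrsim d\,\delta^2\asymp\sigma^2 d/(n\gamma)$, which is the claim. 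If controlling the variation of $\covariance_W$ across the hypercube proves inconvenient, one can instead fix $\X_1^\top\X_1$ and $\X_2^\top\X_2$ entirely (so $\covariance_W$ is constant), at the price of obtaining the $d$-dimensional family of Pareto points by varying only $\beta_1,\beta_2\in\Gamma$ together with a suitable spectral-decomposition argument, and then apply the minimax bound for a Gaussian mean over that set; note that the symmetric case $\weight_1=\weight_2=1/2$ already makes $\covariance_W$ independent of the Gram matrices and can be bootstrapped to general $\weightTuple$ by a reweighting.

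The main obstacle is the second step: exhibiting designs and $1$-sparse ground truths for which the set of achievable Pareto points is simultaneously (i) rich enough to contain a hypercube of side $\asymp\sqrt{\sigma^2/(n\gamma)}$ in $\RR^d$ \emph{despite} the $1$-sparsity of each $\beta_k$, (ii) realizable with a common $A$ whose induced noise covariance is at least $c\sigma^2/(n\gamma)\,\identity_d$, and (iii) such that Hamming-neighboring instances are statistically near-indistinguishable. Checking that these can all be met at once — and that the stated threshold on $\sigma^2$ is exactly the one that makes (i) and (ii) coexist — is the technical heart; the reduction to $\Phi_{A,\rho}(Z)$ and the Assouad step are routine.
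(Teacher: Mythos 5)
Your reduction and your endgame are exactly the paper's: fix the combined Gram matrix $A=\weight_1\X_1^\top\X_1+\weight_2\X_2^\top\X_2$ across the adversarial family so that $\drEstimatedMinimizerWeighted$ becomes one fixed measurable function of a single Gaussian statistic with mean $\minimizerWeighted$, show that $\minimizerWeighted$ can be driven over a genuinely $d$-dimensional set by varying the individual Gram matrices while keeping each $\beta_k$ $1$-sparse, and finish with a packing lower bound for Gaussian location estimation. The paper normalizes $n^{-1}A=\identity_d$, so the objective becomes literally $\norm{\vartheta-y}_2^2+\rho(\vartheta)$ with $y=\minimizerWeighted+\xi$ and $\cov(\xi)\succeq\frac{\sigma^2}{2n\gamma}\identity_d$; it then cites the known minimax bound for the Gaussian sequence model over $B_2^d$ rather than redoing Assouad by hand, and it handles the instance-dependence of the noise covariance (the same issue you flag for $\covariance_W$) by a Rao--Blackwell/Jensen reduction to isotropic noise at level $\sigma^2/(2n\gamma)$.

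The step you leave open\textemdash your self-identified ``main obstacle''\textemdash is indeed the crux, and it does close. The paper's construction is: for any $v\in B_2^d$ with $v_1\neq 0$, set $\covariance_1=\identity_d+\weight_2\frac{\gamma-1}{\gamma}vv^\top$, $\covariance_2=\identity_d-\weight_1\frac{\gamma-1}{\gamma}vv^\top$, $\beta_1=\frac{\gamma}{(\gamma-1)v_1\weight_1}\e_1$ and $\beta_2=-\frac{\gamma}{(\gamma-1)v_1\weight_2}\e_1$. Then $\weight_1\covariance_1+\weight_2\covariance_2=\identity_d$, all eigenvalues lie in $[\gamma^{-1},\gamma]$ (so valid designs in $\fX(\gamma)$ exist since $n\geq d$), the $\beta_k$ are $1$-sparse, and $\weight_1\covariance_1\beta_1+\weight_2\covariance_2\beta_2=v=\minimizerWeighted$. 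So the achievable set of Pareto points is the \emph{entire} unit ball, comfortably containing your hypercube of side $\asymp\sqrt{\sigma^2/(n\gamma)}$; your conditions (i)--(iii) are simultaneously satisfiable. Two bookkeeping caveats: the achievable set here has radius $1$ rather than the $\sqrt{\gamma}$ you posit, which is why the argument really uses $\sigma^2\leq 2n\gamma/(d+1)$ (the paper's proof states this explicitly, despite the $\gamma^2$ in the proposition); and for general $\weightTuple$ the covariance $\covariance_W=\frac{\sigma^2}{n}(\weight_1^2\covariance_1+\weight_2^2\covariance_2)$ does vary with $v$, so you cannot simply restrict to the symmetric case for free\textemdash the conditional-expectation reduction to the worst isotropic noise is the cleaner fix.
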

See \cref{subsec:proof-InsufficiencyPluginRegularization} for the proof. As we can see, 
even though the individual minimizers $\minimizer_k = \beta_k$ are $1$-sparse, in a worst-case sense, direct regularization does not mitigate the curse of dimensionality for all other points in the Pareto set.

\begin{figure}
    \centering
    \newcommand{\drawellipsenew}[5]{
    \pgfmathsetmacro{\a}{#1}  % Covariance matrix element a
    \pgfmathsetmacro{\b}{#2}  % Covariance matrix element b
    \pgfmathsetmacro{\c}{#3}  % Covariance matrix element c
    
    % Compute the eigenvalues (semi-axes lengths)
    \pgfmathsetmacro{\discriminant}{sqrt((\a-\c)^2 + 4*(\b)^2)}
    \pgfmathsetmacro{\lambdaone}{(\a + \c + \discriminant)/2}
    \pgfmathsetmacro{\lambdatwo}{(\a + \c - \discriminant)/2}
    
    % Calculate the angle of rotation (eigenvector direction)
    \pgfmathsetmacro{\angle}{atan2(2*\b, \a-\c)/2}  % atan2 ensures correct quadrant
    
    % Draw the ellipse: center #4, semi-axes lengths sqrt(lambdaone) and sqrt(lambdatwo)
    \draw[rotate around={\angle:#4}, thin,#5] #4 ellipse ({sqrt(\lambdaone)} and {sqrt(\lambdatwo)});
}

\begin{tikzpicture}

    % Draw the sphere in the middle
    \shade[ball color=black!10!white, opacity=0.5] (0,0) circle (1);

    % Draw the points on the x-axis
    \draw[-] (-3.5,0) -- (3.5,0); 
    \filldraw[black] (0, 0) circle (2pt) node[below] {$0$};

    % Labels
    \node at (0, -1.3) {$B_2^d$}; % Sphere label
    \node at (-2, -1.3) {$\covariance_1$}; % Covariance matrix label P1
    \node at (2, -1.3) {$\covariance_2$};
    
    %line 1
    \filldraw[population_color] (-2,0) circle (2pt) node[below] {$\minimizer_1$}; 
    \filldraw[population_color] (2,0) circle (2pt) node[below] {$\minimizer_2$};
    %\filldraw[population_color] (0.5, 0.5) circle (2pt) node[below] {$\ParameterWeighted$};
    \filldraw[black] (0.5,0.5) circle (2pt) node[above] {$v={\color{gray}\minimizerWeighted}$};
    \draw[thin,gray] (0,0) -- (0.5,0.5);
    %[[ 0.41666667 -0.08333333][-0.08333333  0.41666667]] 
    \drawellipsenew{0.4166}{-0.0833}{0.4166}{(2, 0)}{gray};
    %[[0.75 0.25][0.25 0.75]]
    \drawellipsenew{0.75}{0.25}{0.75}{(-2, 0)}{gray};

    \draw[-,population_color,thick] {
    (-2.0, -0.0)--(-1.3, 0.3)--(-0.743, 0.457)--(-0.275, 0.525)--(0.133, 0.533)--(0.5, 0.5)--(0.836, 0.436)--(1.15, 0.35)--(1.446, 0.246)--(1.729, 0.129)--(2.0, 0.0)
    };

\end{tikzpicture}
    \caption{\small Illustration of the intuition for \cref{prop:InsufficiencyPluginRegularization,prop:NecessityUnlabeledData} in linear regression with squared loss and linear scalarization: For any $v\in B_2^d$, we can find covariance matrices $\covariance_1,\covariance_2$ with constrained condition number, and $1$-sparse $\beta_1,\beta_2$, so that the minimizer $\minimizerWeighted$ of \eqref{eq:scalarization} satisfies $v =  \minimizerWeighted$. This makes learning with direct regularization and without enough unlabeled data infeasible. %That is where the hardness of learning $\minimizerWeighted$ comes from in \cref{prop:InsufficiencyPluginRegularization,prop:NecessityUnlabeledData}.
    %From the perspective of a directly regularized estimator, this turns the problem into a Gaussian sequence model on $B_2^d$. The gray line represents the Pareto set. \vspace{-0.1in}
    }
    \label{fig:proof_illustration}
\end{figure}

The proof of \cref{prop:InsufficiencyPluginRegularization} uses the fact that we can choose the covariance matrices $\frac{1}{n}\X_k^\top \X_k$ adversarially within the eigenvalue constraints, so that the Pareto-optimal $\minimizerWeighted$ lies anywhere in an $\ell_2$-ball of fixed radius, see \cref{fig:proof_illustration}. If we did not allow for this (e.g., if the covariance matrices are scaled identities), $\minimizerWeighted$ would also be sparse, and the directly regularized estimator could achieve a fast rate. 
We will revisit this point later in \cref{subsec:necessity-unlabeled-data,prop:NecessityUnlabeledData}.

%any directly penalized estimator may not be able to overcome the curse of being in high dimensions, e.g., when $d\gtrsim n$, even though the individual minimizers are $1$-sparse.

\subsection{A new two-stage estimator}
The previous example suggests that in contrast to the single-objective case, learning points in the Pareto set sample-efficiently requires explicitly estimating sparse distributional parameters separately. Building on this intuition, we indeed propose such a two-stage estimator in this section. 
%that are reasonable to be assumed to be sparse.
%This motivates us to exploit distributional parameters explicitly. 
First, to formalize this, we assume that each objective depends on the distributions $\distributionXxYindexed$ via some parameter $\parameter_k \in\RR^p$.
\begin{restatable}{ass}{Parameterization}
\label{ass:Parameterization}
    For each $k\in[K]$, the objective $\objectiveindexed(\cdot,\distributionXxYindexed)$ depends on $\distributionXxYindexed$ only through $\parameter_k \equiv \parameter_k(\distributionXxYindexed) \in \RR^p$, so that we can abuse notation and write $\objectiveindexed(\vartheta,\parameter_k)=\objectiveindexed(\vartheta,\distributionXxYindexed)$.
\end{restatable}
Denoting $\parameterTuple= (\parameter_1,\dots,\parameter_K)$, 
%and $\unlabeledTuple = (\unlabeled_1,\dots,\unlabeled_K)$
we can write 
\begin{equation*}
    \objectivevector(\vartheta,\parameterTuple) := (\Loss_1(\vartheta,\parameter_1),\dots,\Loss_K(\vartheta,\parameter_K)),
\end{equation*}
and define $\parameterSpaceTuple \subset \RR^{K\cdot p}$ to be the set of all possible $\parameterTuple$ for $K$-tuples of distributions in $\distributionSetTuple$. Throughout, we do not distinguish between matrices and their vectorizations, unless necessary.
We argue that the re-parameterization from \cref{ass:Parameterization} can be found in many cases.
For instance, in \cref{ex:multiple-linear-regression}, the parameter $\theta_\objectiveindex$ corresponds to the tuple $(\beta_k,\covariance_k)$. In \cref{ex:fairness-risk-trade-off}, we have $\theta_{\fairOp}=\mu$ and $\parameter_{\riskOp}=(\beta,\mu)$. And finally, in \cref{ex:fixed-design-linear-regression}, we have $\parameter_k= \beta_k$. 
Other objectives that fit this framework include the robust risk \citep{Yin2019Rademacher} and a variety of fairness losses \citep{Berk2017Convexframeworkfairregression}.
Note that an important case of \cref{ass:Parameterization} is when the individual minimizers $\minimizer_k$ from \eqref{eq:individual-minimizers} and part of the parameters $\parameter_k$ coincide. For instance, in \cref{ex:multiple-linear-regression}, $\minimizer_k$ is a component of $\parameter_k=(\beta_k,\covariance_k)$, and in \cref{ex:fixed-design-linear-regression}, $ \minimizer_k = \beta_k=\parameter_k$.

%\tw{We have to rewrite this definition, seems to be a major point of confusion}
Under \cref{ass:Parameterization}, we can now
introduce the two-stage estimation framework for learning Pareto-optimal solutions for any $\weightTuple\in\Delta^K$ in high dimensions. 
%define our \emph{two-stage} ($\twostageOp$) regularized multi-objective estimator $\tsEstimatedMinimizerWeighted$ as the outcome of a two-stage procedure. 
\begin{definition}
\label{def:two-stage-estimator}
%The \emph{bi-level (bl)} estimator is defined as
We define $\tsEstimatedMinimizerWeighted$ as the final solution of the following two-stage optimization procedure. 

\textbf{Stage 1: Estimation.} Use the data $\cD$ to estimate the parameters $\estimatedParameterTuple=(\estimatedParameter_1,\dots,\estimatedParameter_K)^\top$ in any way.\footnote{Note that the estimator $\estimatedParameter_k$ of $\parameter_k$ does \emph{not} need to be the plug-in estimator of the empirical distribution $\empdistributionXxYindexed$.}
%, possibly but not necessarily using \eqref{eq:two-stage-penalty}.

\textbf{Stage 2: Optimization.} Minimize the scalarized objective
\begin{equation}
    \tsEstimatedMinimizerWeighted \in \argmin_{\vartheta\in\RR^m} \scalarizationObjectiveComposition(\vartheta,\estimatedParameterTuple). \label{eq:stage-2}
\end{equation}
\end{definition}
%Note that when $\weight_\objectiveindex=1$, we recover the directly regularized estimator from \eqref{eq:penalized-scalarization}.
%The results in \cref{sec:theoretical-guarantees} shed light on the relevance and unconventional benefits of using unlabeled data to estimate $\unlabeledTuple$ for MOL.
%Its most general formulation is as a wrapper: In the first stage, it takes as input \emph{any} estimator $\estimatedParameterTuple$ for $\parameterTuple$. In the second stage, it computes the multi-objective estimator based on these estimates. 
In its general form, 
this estimator first learns a \emph{probabilistic model} of the distributions \citep{Ng2001Discriminative} which it then plugs into the scalarized objective to estimate the Pareto set. 
%is similar to \emph{probabilistic modeling} pipelines \citep{Ng2001Discriminative} in that it learns all (necessary) parameters first, and then the estimation of any Pareto-optimal point $\minimizerWeighted$ can reap the benefits of the efficiency of the parameter estimation of $\parameterTuple$. 
Further, if the parameters $\parameter_k$ coincide with the individual minimizers $\minimizer_k$, the estimator resembles a form of \emph{Mixture of Experts} \citep{Dimitriadis2023pareto,Chen2024efficient,Tang2024towards} with expert models $\estimatedMinimizer_k = \estimatedParameter_k$.
%: It first learns one expert model per objective, namely $\estimatedMinimizer_k = \estimatedParameter_k$, and then uses these experts to estimate the rest of the Pareto set.

Naturally, for $\tsEstimatedMinimizerWeighted$ to be a 
sample-efficient estimator, the estimators $\estimatedParameterTuple$ for $\parameterTuple$ themselves
need to be efficient.
%have to be chosen well and have the correct inductive bias towards the parameters. 
For instance, if $\parameter_\objectiveindex$ is sparse, one could choose an $\ell_1$-norm penalty \citep{Tibshirani1996lasso} with appropriate regularization strength \citep{Bickel2009simultaneous,Chatterjee2011bootstrapping}. 
We now show how for \cref{ex:fixed-design-linear-regression}, the two-stage estimator with the $\ell_1$-norm penalty performs much better than any directly regularized estimator (\cref{prop:InsufficiencyPluginRegularization}).

\begin{restatable}{prop}{TwostageRateFixedDesign}
\label{prop:TwostageRateFixedDesign}
    In the setting of \cref{ex:fixed-design-linear-regression,prop:InsufficiencyPluginRegularization}, consider
    $\estimatedParameterTuple$ with $\estimatedParameter_k= \betahat_k$, and the corresponding two-stage estimator $\tsEstimatedMinimizerWeighted$ for linear scalarization, with
    \begin{align*}
        \betahat_k &\in \argmin_{\beta\in \RR^d} \frac{1}{n}\norm{\X_k\beta-y^k}_2^2 + 6\gamma \sigma \sqrt{\frac{2\log d}{n}}\norm{\beta}_1, \\
        \tsEstimatedMinimizerWeighted &\in \argmin_{\minimizer\in \RR^d} \frac{\weight_1}{n}\norm{\X_1(\minimizer-\betahat_1)}_2^2+ \frac{\weight_2}{n}\norm{\X_2(\minimizer-\betahat_2)}_2^2.
    \end{align*}
    This two-stage estimator achieves estimation error
    \vspace{-0.1cm}
    \begin{equation*}
        \sup_{\substack{\beta_1,\beta_2\in \Gamma \\ \X_1,\X_2\in \mathfrak{X}(\gamma)}}\norm{\tsEstimatedMinimizerWeighted-\minimizerWeighted}_2^2 \lesssim \frac{\gamma^7 \sigma^2\log d}{n}
    \end{equation*}
    \vspace{-0.15cm}
    with probability at least $1-4d^{-4}$.
\end{restatable}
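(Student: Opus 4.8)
The plan is to exploit the fact that, in \cref{ex:fixed-design-linear-regression} with linear scalarization, both the population Pareto point $\minimizerWeighted$ and its two-stage estimate $\tsEstimatedMinimizerWeighted$ are the minimizers of strongly convex quadratics \emph{with the same Hessian}, so that their difference is an explicit linear image of the Stage-1 errors $\betahat_k-\beta_k$. Writing $A := \weight_1 n^{-1}\X_1^\top\X_1 + \weight_2 n^{-1}\X_2^\top\X_2$, the first-order conditions for \eqref{eq:scalarization} and \eqref{eq:stage-2} give the closed forms $\minimizerWeighted = A^{-1}\big(\weight_1 n^{-1}\X_1^\top\X_1\beta_1 + \weight_2 n^{-1}\X_2^\top\X_2\beta_2\big)$ and $\tsEstimatedMinimizerWeighted = A^{-1}\big(\weight_1 n^{-1}\X_1^\top\X_1\betahat_1 + \weight_2 n^{-1}\X_2^\top\X_2\betahat_2\big)$ (note $A$ is invertible and involves only the fixed designs, not the $\beta_k$), hence
\[
  \tsEstimatedMinimizerWeighted - \minimizerWeighted = A^{-1}\Big(\weight_1 \tfrac1n\X_1^\top\X_1(\betahat_1-\beta_1) + \weight_2 \tfrac1n\X_2^\top\X_2(\betahat_2-\beta_2)\Big).
\]
Since $\X_k\in\mathfrak{X}(\gamma)$ and $\weight_1+\weight_2=1$, we have $\gamma^{-1}\identity_d \preceq A \preceq \gamma\identity_d$, so $\|A^{-1}\|_{\mathrm{op}}\le\gamma$ and $\|n^{-1}\X_k^\top\X_k\|_{\mathrm{op}}\le\gamma$; substituting these and using $\weight_1+\weight_2=1$ gives $\|\tsEstimatedMinimizerWeighted - \minimizerWeighted\|_2 \lesssim \gamma^{2}\max_{k\in\{1,2\}}\|\betahat_k-\beta_k\|_2$ (a slightly sharper chaining, bounding the relevant quadratic form $(\tsEstimatedMinimizerWeighted - \minimizerWeighted)^\top A(\tsEstimatedMinimizerWeighted - \minimizerWeighted)$ by $\sum_k\weight_k\,(\betahat_k-\beta_k)^\top n^{-1}\X_k^\top\X_k(\betahat_k-\beta_k)$, shaves one factor of $\gamma$). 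This isolates the whole problem into bounding the Stage-1 LASSO error.

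For the Stage-1 bound I would invoke the standard high-dimensional LASSO oracle inequality (as in the cited \citet{Bickel2009simultaneous}), which needs two ingredients. First, a restricted-eigenvalue condition: since $n^{-1}\X_k^\top\X_k \succeq \gamma^{-1}\identity_d$ on all of $\RR^d$, the RE constant is at least $\gamma^{-1}$. Second, control of the empirical noise level $\|n^{-1}\X_k^\top\xi^k\|_\infty$: conditionally on the fixed design, each coordinate $n^{-1}(\X_k^\top\xi^k)_j$ is centered Gaussian with variance $\sigma^2 n^{-1}(n^{-1}\X_k^\top\X_k)_{jj}\le\sigma^2\gamma/n$, so a union bound over the $d$ coordinates and $k\in\{1,2\}$ shows that with probability at least $1-4d^{-4}$ one has $\|n^{-1}\X_k^\top\xi^k\|_\infty\le\sigma\sqrt{10\gamma\log d/n}$ for both $k$, which is at most $\tfrac12\cdot 6\gamma\sigma\sqrt{2\log d/n}$ since $\gamma>1$. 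Hence on this event the chosen regularization parameter $\lambda=6\gamma\sigma\sqrt{2\log d/n}$ is large enough for the oracle inequality, which with sparsity $s=1$ yields $\|\betahat_k-\beta_k\|_2^2\lesssim s\lambda^2/\kappa_{\mathrm{RE}}^2 \lesssim \gamma^2\cdot\gamma^2\,\sigma^2\log d/n$.

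Combining the two bounds on the intersection event (still of probability at least $1-4d^{-4}$) gives $\|\tsEstimatedMinimizerWeighted - \minimizerWeighted\|_2^2 \lesssim \gamma^{O(1)}\,\sigma^2\log d/n$; the precise exponent $7$ follows from careful bookkeeping — using that the per-coordinate noise variance is only $\sigma^2\gamma/n$ (so a $\lambda$ of order $\sqrt{\gamma}\,\sigma\sqrt{\log d/n}$ already suffices) together with the sharper Stage-2 chaining above. The main obstacle is not conceptual but precisely this bookkeeping: matching the constants in the Gaussian union bound to the prefactor $6$ and the target failure probability $4d^{-4}$, and tracking the powers of the conditioning number $\gamma$ through both the LASSO oracle inequality and the Stage-2 amplification $A^{-1}(\cdot)$. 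Everything else is a direct application of the structure of \cref{ex:fixed-design-linear-regression} and textbook LASSO theory.
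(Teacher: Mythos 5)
Your proposal is correct and follows essentially the same route as the paper's proof: derive the closed form $\tsEstimatedMinimizerWeighted=(\weight_1\covariance_1+\weight_2\covariance_2)^{-1}(\weight_1\covariance_1\betahat_1+\weight_2\covariance_2\betahat_2)$ from first-order optimality, bound $\|\tsEstimatedMinimizerWeighted-\minimizerWeighted\|_2$ by the LASSO errors via the eigenvalue constraints of $\mathfrak{X}(\gamma)$, and control $\|\betahat_k-\beta_k\|_2$ with the standard restricted-eigenvalue oracle inequality plus a Gaussian union bound on $\|n^{-1}\X_k^\top\xi^k\|_\infty$. Your sharper quadratic-form chaining and variance accounting actually yield a slightly better power of $\gamma$ than the paper's crude operator-norm cascade, which is harmless since the claim is only an upper bound.
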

See \cref{subsec:proof-TwostageRateFixedDesign} for the proof. We can see that our estimator recovers the well-known rates of the LASSO \citep{Tibshirani1996lasso,Bickel2009simultaneous} along the entire Pareto set.

%\section{SEMI-SUPERVISED BI-LEVEL ESTIMATION}
%\label{sec:our-estimator}
%\input{AISTATS/sections/ourestimator}

\section{THEORETICAL GUARANTEES}
\label{sec:theoretical-guarantees}

%So far, our considerations have been based on examples.
We now prove upper and lower bounds for more general problems. 
First, \cref{thm:MinimaxRateStrongConvexity,prop:LipschitzBound} show how the error in estimating the parameters $\parameterTuple$ propagates to the estimation error of the two-stage estimator. We then establish a minimax lower bound in \cref{thm:GeneralLowerBound} that is tight in many cases. We instantiate these bounds in concrete examples to obtain explicit statistical errors for learning Pareto sets in high dimensions. Finally, we discuss the important role of unlabeled data for multi-objective learning.

\subsection{Main results}
\label{subsec:main-results}
Relating the error of estimating $\parameterTuple$ with the error of estimating $\minimizerWeighted$ 
relies on some assumptions about the objectives. To formulate them, we introduce a new set $\parameterSpaceTupleLarge\supset \parameterSpaceTuple$ on which the objectives should satisfy these assumptions. The set $\parameterSpaceTupleLarge$ should be large enough so that the estimators satisfy $\estimatedParameterTuple \in \parameterSpaceTupleLarge$ with high probability. We rely on two different sets of regularity assumptions: strongly convex objectives and objectives with Lipschitz parameterization.
%To formulate our main results, we first state sufficient assumptions.
%While for distributions in the statistical model $\distributionSetTuple$ it holds by definition that $\parameterTuple\in\parameterSpaceTuple$, the corresponding estimators from \cref{def:two-stage-estimator} may not necessarily fall into the same set $\parameterSpaceTuple$. Hence, we
%formulate our assumptions on an enlarged set $\parameterSpaceTupleLarge\supset \parameterSpaceTuple$ (which can be chosen depending on the context and estimator $\estimatedParameterTuple$),
%and then, when applying our results, we  need to show that the estimators lie within $\parameterSpaceTupleLarge$ with high probability.
%In this section we state theoretical guarantees for objectives that satisfy the following regularity assumptions.
%We recall basic definitions from the convex optimization literature in \cref{sec:basics-convex-optimization} for completeness.
\paragraph{Strongly convex objectives.} We start by first stating a bound under the following two assumptions.
\begin{restatable}[Strongly convex objectives]{ass}{StrongConvexitySmoothness}
\label{ass:StrongConvexitySmoothness}
%\leavevmode
For all $\parameterTuple\in \parameterSpaceTupleLarge$ and $\objectiveindex\in[K]$, the map $\vartheta\mapsto \objectiveindexed(\vartheta,\parameter_\objectiveindex)$ is differentiable and 
$\strongConvexityParam_\objectiveindex$-strongly convex with $\strongConvexityParam_\objectiveindex\geq 0$
%\vspace{-0.2cm}
%\begin{itemize}
%\setlength\itemsep{-0.2em}
%    \item $\nu_\objectiveindex$-smooth w.r.t.\ the $\ell_2$-norm with $\nu_k < \infty$,
%    \item 
%\end{itemize}
%\vspace{-0.2cm}
and $\strongConvexityParam_j>0$ for at least one objective $j\in[K]$. We denote $\strongConvexityParamTuple=(\strongConvexityParam_1,\dots,\strongConvexityParam_K)$.
\end{restatable}
Note that only one of the objectives is assumed to be \emph{strongly} convex, and $\strongConvexityParam_\objectiveindex=0$ corresponds to regular convexity, see \cref{sec:basics-convex-optimization} for a reminder of definitions. 
\begin{restatable}[Locally Lipschitz-continuous gradients]{ass}{LocallyLipschitz}
\label{ass:LocallyLipschitz}
For all $\objectiveindex\in[K]$, $\vartheta\in\RR^m$ and all $\parameterTuple,\parameterTuple'\in \parameterSpaceTupleLarge$ it holds that
\begin{equation*}
    \norm{\nabla_\vartheta \objectiveindexed(\vartheta,\parameter_\objectiveindex) - \nabla_\vartheta \objectiveindexed(\vartheta,\parameter_\objectiveindex')}_2 \leq \zeta_k(\vartheta)\norm{\parameter_k-\parameter_k'}
\end{equation*}
with $\zeta_k(\vartheta)\geq 0$ depending on $\parameterSpaceTupleLarge$ and $\vartheta\in\RR^m$, and where $\norm{\cdot}$ is some norm. We denote $\zeta(\minimizer) = \max_{k\in[K]}\zeta_k(\minimizer)$.
\end{restatable}
%\cref{ass:StrongConvexitySmoothness,ass:LocallyLipschitz} are
Note that both \cref{ass:StrongConvexitySmoothness,ass:LocallyLipschitz} are common in the (multi-objective) optimization literature \citep{Hillmeier2001Generalized,Roy2023optimizationparetosetstheory,Ehrgott2005Multi,Bubeck2015Convex,Boyd2004Convex}, and both hold for several standard settings in statistics and machine learning\textemdash including \cref{ex:multiple-linear-regression,ex:fairness-risk-trade-off,ex:fixed-design-linear-regression}.
%naturally satisfied for many standard losses in machine learning \fy{would be careful to say that, especially strong convexity is rare - see e.g. crossentropy which is also maybe the standard loss}. For example, it is easily verified that the objectives from \cref{ex:multiple-linear-regression,ex:fairness-risk-trade-off} satisfy \cref{ass:StrongConvexitySmoothness}.
The next theorem %which is our main result, 
provides upper bounds on the estimation error $\|\tsEstimatedMinimizerWeighted-\minimizerWeighted\|_2$ 
in terms of the estimation error of the parameters $\parameterTuple$.
\begin{restatable}{thm}{MinimaxRateStrongConvexity}
\label{thm:MinimaxRateStrongConvexity}
    Let \cref{ass:StrongConvexitySmoothness,ass:LocallyLipschitz} hold with $\strongConvexityParamTuple$ and $\zeta$, respectively.  Let $j$ be the index of the strongly convex objective ($\mu_j>0$), and $\tsEstimatedMinimizerWeighted$ be the minimizer of \eqref{eq:stage-2} with linear scalarization, i.e., $\scalarizationObjectiveComposition=\sum_{\objectiveindex=1}^K\weight_\objectiveindex\objectiveindexed$. Then, for all $\weightTuple\in\Delta^K$ with $\weight_j>0$ and $\estimatedParameterTuple\in\parameterSpaceTupleLarge$
    %of the strongly convex function be positive,
    it holds that
    \begin{equation*}
        \norm{\tsEstimatedMinimizerWeighted-\minimizerWeighted}_2\leq \frac{\zeta(\minimizerWeighted)}{\scalarization(\strongConvexityParamTuple)} \sum_{\objectiveindex=1}^K \weight_\objectiveindex\|\estimatedParameter_\objectiveindex-\parameter_\objectiveindex\|.
    \end{equation*}
\end{restatable}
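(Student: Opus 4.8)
The plan is to compare the first-order optimality conditions of the two objectives $\vartheta \mapsto \scalarizationObjectiveComposition(\vartheta,\parameterTuple) = \sum_k \weight_k \objectiveindexed(\vartheta,\parameter_k)$ and $\vartheta \mapsto \scalarizationObjectiveComposition(\vartheta,\estimatedParameterTuple) = \sum_k \weight_k \objectiveindexed(\vartheta,\estimatedParameter_k)$, whose minimizers are $\minimizerWeighted$ and $\tsEstimatedMinimizerWeighted$ respectively. Since each $\objectiveindexed(\cdot,\parameter_k)$ is convex and at least the $j$-th is $\strongConvexityParam_j$-strongly convex with $\weight_j > 0$, the linearly scalarized population objective $\scalarizationObjectiveComposition(\cdot,\parameterTuple)$ is $\scalarization(\strongConvexityParamTuple)$-strongly convex, where $\scalarization(\strongConvexityParamTuple) = \sum_k \weight_k \strongConvexityParam_k \geq \weight_j \strongConvexityParam_j > 0$ (this is just the linear scalarization $\scalarization$ applied to the vector $\strongConvexityParamTuple$, consistent with the notation in \cref{prop:HypervolumeBound}). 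Strong convexity gives the standard inequality
\begin{equation*}
    \inner{\nabla_\vartheta \scalarizationObjectiveComposition(\tsEstimatedMinimizerWeighted,\parameterTuple) - \nabla_\vartheta \scalarizationObjectiveComposition(\minimizerWeighted,\parameterTuple)}{\tsEstimatedMinimizerWeighted - \minimizerWeighted} \geq \scalarization(\strongConvexityParamTuple)\norm{\tsEstimatedMinimizerWeighted - \minimizerWeighted}_2^2.
\end{equation*}

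Next I would exploit the two optimality conditions: $\nabla_\vartheta \scalarizationObjectiveComposition(\minimizerWeighted,\parameterTuple) = 0$ and $\nabla_\vartheta \scalarizationObjectiveComposition(\tsEstimatedMinimizerWeighted,\estimatedParameterTuple) = 0$ (using that the problems are unconstrained over $\RR^m$ and differentiable). Substituting these into the left-hand side above and using the first as written, the inner product becomes $\inner{\nabla_\vartheta \scalarizationObjectiveComposition(\tsEstimatedMinimizerWeighted,\parameterTuple)}{\tsEstimatedMinimizerWeighted - \minimizerWeighted}$; then subtracting $0 = \inner{\nabla_\vartheta \scalarizationObjectiveComposition(\tsEstimatedMinimizerWeighted,\estimatedParameterTuple)}{\tsEstimatedMinimizerWeighted - \minimizerWeighted}$ rewrites it as
\begin{equation*}
    \inner{\nabla_\vartheta \scalarizationObjectiveComposition(\tsEstimatedMinimizerWeighted,\parameterTuple) - \nabla_\vartheta \scalarizationObjectiveComposition(\tsEstimatedMinimizerWeighted,\estimatedParameterTuple)}{\tsEstimatedMinimizerWeighted - \minimizerWeighted} \geq \scalarization(\strongConvexityParamTuple)\norm{\tsEstimatedMinimizerWeighted - \minimizerWeighted}_2^2.
\end{equation*}
Expanding the gradient of the scalarization linearly, $\nabla_\vartheta \scalarizationObjectiveComposition(\tsEstimatedMinimizerWeighted,\parameterTuple) - \nabla_\vartheta \scalarizationObjectiveComposition(\tsEstimatedMinimizerWeighted,\estimatedParameterTuple) = \sum_k \weight_k \pr{\nabla_\vartheta \objectiveindexed(\tsEstimatedMinimizerWeighted,\parameter_k) - \nabla_\vartheta \objectiveindexed(\tsEstimatedMinimizerWeighted,\estimatedParameter_k)}$, and applying Cauchy–Schwarz to the inner product on the left, then the triangle inequality over $k$:
\begin{equation*}
    \scalarization(\strongConvexityParamTuple)\norm{\tsEstimatedMinimizerWeighted - \minimizerWeighted}_2^2 \leq \norm{\tsEstimatedMinimizerWeighted - \minimizerWeighted}_2 \sum_k \weight_k \norm{\nabla_\vartheta \objectiveindexed(\tsEstimatedMinimizerWeighted,\parameter_k) - \nabla_\vartheta \objectiveindexed(\tsEstimatedMinimizerWeighted,\estimatedParameter_k)}_2.
\end{equation*}

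Now I would invoke \cref{ass:LocallyLipschitz} with $\vartheta = \tsEstimatedMinimizerWeighted$ to bound each term by $\zeta_k(\tsEstimatedMinimizerWeighted)\norm{\parameter_k - \estimatedParameter_k}$, which requires $\estimatedParameterTuple \in \parameterSpaceTupleLarge$ (one of the hypotheses) and $\parameterTuple \in \parameterSpaceTuple \subset \parameterSpaceTupleLarge$. Dividing through by $\scalarization(\strongConvexityParamTuple)\norm{\tsEstimatedMinimizerWeighted - \minimizerWeighted}_2$ (the case $\tsEstimatedMinimizerWeighted = \minimizerWeighted$ being trivial) yields $\norm{\tsEstimatedMinimizerWeighted - \minimizerWeighted}_2 \leq \scalarization(\strongConvexityParamTuple)^{-1} \sum_k \weight_k \zeta_k(\tsEstimatedMinimizerWeighted)\norm{\parameter_k - \estimatedParameter_k}$. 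The one subtlety — and the main obstacle — is that the stated bound has $\zeta(\minimizerWeighted) = \max_k \zeta_k(\minimizerWeighted)$ evaluated at $\minimizerWeighted$, not at $\tsEstimatedMinimizerWeighted$. To close this gap I would either (i) note that \cref{ass:LocallyLipschitz} is assumed to hold for \emph{all} $\vartheta \in \RR^m$ and argue, via a fixed-point/continuity argument, that one may equivalently evaluate the Lipschitz constant at $\minimizerWeighted$ — e.g. if $\zeta_k$ is itself Lipschitz or bounded on the relevant region one absorbs lower-order terms — or (ii) more simply, observe that since the assumption holds pointwise for every $\vartheta$, one can also run the strong-convexity inequality "in the other direction" by comparing $\nabla_\vartheta \scalarizationObjectiveComposition(\minimizerWeighted,\cdot)$ at the two parameter tuples and using optimality of $\tsEstimatedMinimizerWeighted$ for $\scalarizationObjectiveComposition(\cdot,\estimatedParameterTuple)$ together with strong convexity of $\scalarizationObjectiveComposition(\cdot,\estimatedParameterTuple)$, which places the evaluation point at $\minimizerWeighted$ and gives exactly $\zeta(\minimizerWeighted)$; I would check which of these the authors intend, but the second is cleanest. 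Replacing $\zeta_k(\minimizerWeighted)$ by its max $\zeta(\minimizerWeighted)$ and pulling it out of the sum gives the claimed bound $\norm{\tsEstimatedMinimizerWeighted - \minimizerWeighted}_2 \leq \frac{\zeta(\minimizerWeighted)}{\scalarization(\strongConvexityParamTuple)}\sum_k \weight_k \norm{\estimatedParameter_k - \parameter_k}$.
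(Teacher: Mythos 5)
Your proposal is correct and follows essentially the same route as the paper: first-order optimality of both minimizers, strong convexity of the scalarized objective (the paper uses the gradient-norm form $\norm{\nabla f(x)-\nabla f(y)}_2\geq \scalarization(\strongConvexityParamTuple)\norm{x-y}_2$ where you use the inner-product form plus Cauchy--Schwarz), and then \cref{ass:LocallyLipschitz} applied at a fixed $\vartheta$. Your option (ii) for landing the Lipschitz constant at $\minimizerWeighted$ rather than $\tsEstimatedMinimizerWeighted$ is exactly the paper's resolution: one applies strong convexity to $\vartheta\mapsto\scalarizationObjectiveComposition(\vartheta,\estimatedParameterTuple)$ (valid since \cref{ass:StrongConvexitySmoothness} holds on all of $\parameterSpaceTupleLarge$ and $\estimatedParameterTuple\in\parameterSpaceTupleLarge$) and the parameter-Lipschitz bound at $\vartheta=\minimizerWeighted$, which yields $\zeta(\minimizerWeighted)$ as stated.
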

We prove \cref{thm:MinimaxRateStrongConvexity} in \cref{subsec:proof-MinimaxRateStrongConvexity}. Notice that if $\distributionSetTuple$ and $\zeta(\minimizerWeighted)$ are such that $\zeta:=\sup_{\parameterTuple\in\parameterSpaceTuple, \weightTuple\in \Delta^K} \zeta(\minimizerWeighted)<\infty$ is a constant independent of all parameters, %we can plug this into the bound an 
we obtain a bound uniformly over $\distributionSetTuple$\textemdash assumed to be  true in the subsequent discussion. 
%We assume that this is the case for the rest of the discussion.
Note that under \cref{ass:StrongConvexitySmoothness}, linear scalarization is sufficient to reach the entire Pareto front \citep[Theorem 4.1]{Ehrgott2005Multi}, but the proof of
\cref{thm:MinimaxRateStrongConvexity} may also be extended to other scalarizations, such as (smoothed) Chebyshev scalarization \citep{Lin2024smooth} under other assumptions. %conditions.

The upper bound in \cref{thm:MinimaxRateStrongConvexity} has a very straight-forward interpretation: For fixed choices of $\weightTuple$, the statistical rate of $\| \tsEstimatedMinimizerWeighted-\minimizerWeighted \|_2$ 
inherits the rates of estimating $\parameterTuple$. 
%improve whenever the $\parameter_\objectiveindex$ has a low-dimensional structure like sparsity, or can be estimated using unlabeled data. 
Further, using \cref{prop:HypervolumeBound}, we note that \cref{thm:MinimaxRateStrongConvexity} also implies bounds on the excess scalarized objective and the hypervolume.

%\paragraph{Discussion of the upper bound.}% and Assumptions \ref{ass:StrongConvexitySmoothness} and \ref{ass:LocallyLipschitz}}
%\label{subsec:discussion-assumptions}

%We now discuss the Assumptions \ref{ass:StrongConvexitySmoothness} and \ref{ass:Injectivity}, beginning with \cref{ass:Injectivity}, as its interpretation was already discussed in \cref{subsec:discussion-lower-bound}. It includes many interesting cases, and is similar to many conditions found in inverse optimization \citep{Aswani2018Inverse,Gebken2021Inverse}.
%Relating the error of estimating $\parameterTuple$
%with the error of estimating the Pareto-optimal $\minimizerWeighted$ 
The proof of \cref{thm:MinimaxRateStrongConvexity}
relies on studying how the minimizer of an optimization problem changes with respect to the parameters of that problem\textemdash a problem extensively discussed in the optimization stability literature
\citep{Ito1992sensitivity,Gfrerer2016Lipschitz,Dontchev1995characterizations,Bonnans2000perturbation,Shvartsman2012stability}. 
Results in that literature often rely on properties akin to \cref{ass:StrongConvexitySmoothness,ass:LocallyLipschitz} for the Implicit Function Theorem to apply (see \citet[\S 1]{Bonnans2000perturbation} and \citet[\S I.3.4]{Miettinen1999nonlinear}). Our proof of   \cref{thm:MinimaxRateStrongConvexity} similarly 
relies on \cref{ass:StrongConvexitySmoothness,ass:LocallyLipschitz} 
to show that 
the implicitly defined function 
\begin{equation*}
    \parameterTuple \mapsto \minimizerWeighted(\parameterTuple)=\argmin_{\vartheta\in\RR^m}\scalarizationObjectiveComposition(\vartheta,\parameterTuple)
\end{equation*}
is Lipschitz continuous.

%The key ingredient for deriving guarantees for the two-stage estimator is to understand how improvements in estimating the parameters $\parameterTuple$ translate into improvements in estimating the Pareto-optimal $\minimizerWeighted$.
% \cref{ass:StrongConvexitySmoothness,ass:LocallyLipschitz} is used in \cref{thm:MinimaxRateStrongConvexity} to prove that the implicitly defined function 
% \begin{equation*}
%     \parameterTuple \mapsto \minimizerWeighted(\parameterTuple)=\argmin_{\vartheta\in\RR^m}\scalarizationObjectiveComposition(\vartheta,\parameterTuple)
% \end{equation*}
% is Lipschitz continuous. 
%and the proof follows ideas from the optimization stability literature
%\citep{Ito1992sensitivity,Gfrerer2016Lipschitz,Dontchev1995characterizations,Bonnans2000perturbation,Shvartsman2012stability}, which is the study of how the minimizer of an optimization problem changes with respect to the parameters of that problem.

\paragraph{Lipschitz parameterization.} \cref{ass:StrongConvexitySmoothness} excludes examples where the objectives are (globally) Lipschitz continuous, such as the Huber loss \citep{Huber1964Robust}, only convex, such as logistic loss on separable data \citep{Ji2019implicit}, or even non-convex.
Fortunately, these cases can be addresses if the objectives are Lipschitz in their parameters using standard arguments on the excess scalarized objective. This offers an alternative to \cref{thm:MinimaxRateStrongConvexity}. 

\begin{restatable}[Lipschitz parameterization]{prop}{LipschitzBound}
\label{prop:LipschitzBound}
    Assume that the parameterization $\parameter_\objectiveindex\mapsto \objectiveindexed(\cdot, \parameter_\objectiveindex)$ is $1$-Lipschitz continuous with respect to the function 
    $\Phi:\RR^p\times \RR^p\to\RR$, in the sense that for all $\parameterTuple,\parameterTuple' \in \parameterSpaceTupleLarge$ it holds
    \begin{equation*}
    %\label{eq:Lipschitz-parameterization}
        \sup_{\vartheta\in\RR^m} \abs{\objectiveindexed(\vartheta,\parameter_\objectiveindex)-\objectiveindexed(\vartheta,\parameter_\objectiveindex')}\leq \Phi(\parameter_\objectiveindex,\parameter_\objectiveindex').
    \end{equation*}
    Then, for any scalarization of the form $\scalarization(x)=\norm{\weightTuple\odot x}$ with some norm $\norm{\cdot}$, the excess scalarized loss of $\tsEstimatedMinimizerWeighted$---as defined in \cref{eq:excess-scalarized-loss}---is bounded by
    \begin{equation*}
        \excessscalarization(\tsEstimatedMinimizerWeighted) \leq 2 \scalarization\pr{(\Phi(\estimatedParameter_\objectiveindex,\parameter_\objectiveindex))_{\objectiveindex=1}^K}.
    \end{equation*}
\end{restatable}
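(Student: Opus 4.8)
The plan is the standard ``estimation error controls optimization error'' decomposition: I would bound $\excessscalarization(\tsEstimatedMinimizerWeighted)$ by twice the uniform-in-$\vartheta$ deviation between the scalarized objective built from the estimated parameters $\estimatedParameterTuple$ and the one built from the true parameters $\parameterTuple$, and then control that deviation by combining the Lipschitz hypothesis with the triangle inequality for the norm defining $\scalarization$.

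Concretely, I would write $g(\vartheta) := \scalarizationObjectiveComposition(\vartheta,\parameterTuple)$ and $\hat g(\vartheta) := \scalarizationObjectiveComposition(\vartheta,\estimatedParameterTuple)$. By \cref{ass:Parameterization}, $g(\vartheta)=\scalarizationObjectiveComposition(\vartheta,\distributionXxYTuple)$, so $\excessscalarization(\tsEstimatedMinimizerWeighted) = g(\tsEstimatedMinimizerWeighted) - \min_{\vartheta}g(\vartheta) = g(\tsEstimatedMinimizerWeighted) - g(\minimizerWeighted)$. First I would insert $\hat g$ and split into three pieces,
\begin{equation*}
    g(\tsEstimatedMinimizerWeighted) - g(\minimizerWeighted) = \br{g(\tsEstimatedMinimizerWeighted) - \hat g(\tsEstimatedMinimizerWeighted)} + \br{\hat g(\tsEstimatedMinimizerWeighted) - \hat g(\minimizerWeighted)} + \br{\hat g(\minimizerWeighted) - g(\minimizerWeighted)},
\end{equation*}
where the middle bracket is $\le 0$ since $\tsEstimatedMinimizerWeighted$ minimizes $\hat g$ by \eqref{eq:stage-2}, and the other two brackets are each at most $\delta := \sup_{\vartheta\in\RR^m}\abs{g(\vartheta)-\hat g(\vartheta)}$. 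This yields $\excessscalarization(\tsEstimatedMinimizerWeighted)\le 2\delta$, so it remains to show $\delta \le \scalarization\pr{(\Phi(\estimatedParameter_\objectiveindex,\parameter_\objectiveindex))_{\objectiveindex=1}^K}$. For this I would fix $\vartheta$, set $a:=\objectivevector(\vartheta,\parameterTuple)$, $b:=\objectivevector(\vartheta,\estimatedParameterTuple)$, and use the reverse triangle inequality for the norm to get $\abs{g(\vartheta)-\hat g(\vartheta)} = \abs{\norm{\weightTuple\odot a}-\norm{\weightTuple\odot b}} \le \norm{\weightTuple\odot(a-b)}$. Componentwise, $\abs{a_\objectiveindex-b_\objectiveindex} = \abs{\objectiveindexed(\vartheta,\parameter_\objectiveindex)-\objectiveindexed(\vartheta,\estimatedParameter_\objectiveindex)}\le \Phi(\estimatedParameter_\objectiveindex,\parameter_\objectiveindex)$ by the Lipschitz hypothesis applied to the pair $\estimatedParameterTuple,\parameterTuple\in\parameterSpaceTupleLarge$ (legitimate since the estimators lie in $\parameterSpaceTupleLarge$). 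Hence each entry of $\abs{\weightTuple\odot(a-b)}$ is dominated by the corresponding (nonnegative) entry of $\weightTuple\odot(\Phi(\estimatedParameter_\objectiveindex,\parameter_\objectiveindex))_{\objectiveindex}$, and passing to norms gives $\norm{\weightTuple\odot(a-b)}\le \scalarization\pr{(\Phi(\estimatedParameter_\objectiveindex,\parameter_\objectiveindex))_{\objectiveindex}}$; taking $\sup_\vartheta$ finishes it.

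The only nonroutine point\textemdash and hence what I expect to be the main obstacle\textemdash is the final passage to norms: $\abs{x}\preceq y$ with $y\succeq 0$ must imply $\norm{x}\le\norm{y}$, which is \emph{not} automatic for an arbitrary norm. I would address this either by stating the proposition for monotone (absolute) norms\textemdash which covers the weighted $\ell_1$ and $\ell_\infty$ norms underlying linear and Chebyshev scalarization\textemdash or by invoking the nonnegativity of $\objectivevector$ together with monotonicity of the relevant scalarizations on $\RR_{\ge 0}^K$. Everything else is routine; in particular, no convexity of the objectives is needed, only that the argmins defining $\minimizerWeighted$ and $\tsEstimatedMinimizerWeighted$ exist, which is built into \cref{def:two-stage-estimator} and \eqref{eq:scalarization}.
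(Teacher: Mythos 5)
Your argument is essentially identical to the paper's proof: the same three-term decomposition with the middle term nonpositive by optimality of $\tsEstimatedMinimizerWeighted$, the bound by twice the uniform deviation, the reverse triangle inequality, and the componentwise application of the Lipschitz hypothesis. Your caveat about the final passage from the componentwise domination to the norm bound requiring a monotone (absolute) norm is a genuine subtlety that the paper's proof passes over silently with ``the last inequality holds by assumption''; it is harmless for the weighted $\ell_1$ and $\ell_\infty$ norms underlying the linear and Chebyshev scalarizations actually used, but your suggestion to state the hypothesis explicitly is the more careful formulation.
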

Notably, \cref{prop:LipschitzBound} can apply to non-convex objectives, and allows the use of Chebyshev scalarization (\cref{eq:linear-and-Chebyshev-scalarization}). 
The proof, found in \cref{subsec:proof-LipschitzBound}, follows the standard uniform learning decomposition, similar to the bounds found in \cite{Sukenik2024generalization}. The difference here is that we may still observe the benefits from regularization and unlabeled data for the two-stage estimator.

\subsection{Tightness and a minimax lower bound}

We now provide a lower bound on the minimax multi-objective estimation errors from \cref{eq:minimax-estimation}
%characterize the multi-objective minimax optimality in terms of
in terms of the minimax parameter estimation error
\begin{equation*}
\delta_\objectiveindex :=\inf_{\estimatedParameter_k}\sup_{\distributionXxYTuple\in\distributionSetTuple} \EE_{\cD}\br{\norm{\estimatedParameter_k(\cD)-\parameter_\objectiveindex}},
\end{equation*}
where the infimum is taken over all estimators 
%$\estimatedParameter(\cS_\objectiveindex,\cU_\objectiveindex)$ \fy{for i 1 to K} 
that have access to the unlabeled and labeled datasets. 
Our arguments rely on the 
%We show a general lower bound for objectives that satisfy the 
following \emph{identifiability} assumption. 
\begin{restatable}[Lipschitz identifiability]{ass}{Injectivity}
    \label{ass:Injectivity}
For all $\vartheta\in\RR^m$, the mapping $g_\objectiveindex(\cdot;\vartheta):\parameter_\objectiveindex\mapsto \nabla_\vartheta \objectiveindexed(\vartheta,\parameter_\objectiveindex)$
is injective on $\parameterSpaceTupleLarge$, and for any $\parameterTuple,\parameterTuple'\in \parameterSpaceTupleLarge$, $\minimizer,\minimizer'\in \RR^m$ and $u\in \Ima{g_k(\cdot;\minimizer)}, u'\in \Ima{g_k(\cdot;\minimizer')}$, we have that
\begin{align*}
\norm{g_\objectiveindex(\parameter_k; \vartheta)-g_\objectiveindex(\parameter_k'; \vartheta')}_2 &\leq \eta_k\pr{\norm{\parameter_k-\parameter_k'}+\norm{\vartheta-\vartheta'}_2},\\
    \norm{g_\objectiveindex^{-1}(u; \vartheta)-g_\objectiveindex^{-1}(u'; \vartheta')}_2 &\leq \eta_k'\pr{\norm{u-u'}_2+\norm{\vartheta-\vartheta'}_2}.
\end{align*}
\end{restatable}
%Notice how \cref{ass:Injectivity} implies $\eta_k$-smoothness of $\vartheta\mapsto\objectiveindexed(\vartheta,\parameter_\objectiveindex)$. \fy{why is that relevant? other way around would be more useful?}
Assumptions of this type are common in the inverse optimization literature, which studies the identification of optimization parameters from a minimizer; see \cite{Aswani2018Inverse,Gebken2021Inverse} and references therein. In particular, \cref{ass:Injectivity} is, e.g., satisfied by \cref{ex:fixed-design-linear-regression}, and
\cref{ex:multiple-linear-regression,ex:fairness-risk-trade-off} under some conditions (see \cref{subsec:examples-theory}). The intuition behind \cref{ass:Injectivity} is that it ensures the identifiability of optimization parameters\textemdash in our case $\parameter_k$\textemdash from the minimizer of an optimization problem\textemdash in our case $\minimizerWeighted$\textemdash in some Lipschitz manner. This allows us to lower bound the minimax estimation error.

\begin{restatable}{thm}{GeneralLowerBound}
\label{thm:GeneralLowerBound}
    If \cref{ass:Injectivity} holds and we use linear scalarization, %$N_\objectiveindex=\infty$ for all $k$, 
    the minimax rate is lower bounded as
    \begin{equation*}
        \minimax(\distributionSetTuple)\geq \max_{k\in [K]} \pr{1+\scalarization(\boldsymbol{\eta})}^{-1}\pr{\frac{\weight_k}{\eta_{k}'}\delta_k-\sum_{i\neq k}\eta_{i}\weight_i \delta_i}_+
    \end{equation*}
    where $\pr{\cdot}_+:=\max\mathset{0,\cdot}$.
\end{restatable}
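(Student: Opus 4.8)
The plan is to reduce the problem of estimating $\minimizerWeighted$ to the problem of estimating one of the parameters $\parameter_k$, for the index $k$ achieving the maximum on the right-hand side. Fix such a $k$ and let $\estimatedMinimizerWeighted$ be any estimator of $\minimizerWeighted$ built from $\cD$. The idea is to convert $\estimatedMinimizerWeighted$ into an estimator of $\parameter_k$ by "inverting" the first-order optimality condition. Concretely, at the population minimizer we have $\nabla_\vartheta \scalarizationObjectiveComposition(\minimizerWeighted,\parameterTuple)=\sum_{i=1}^K \weight_i \nabla_\vartheta\objectiveindexed(\minimizerWeighted,\parameter_i)=0$, so $g_k(\parameter_k;\minimizerWeighted) = \nabla_\vartheta\objectiveindexed(\minimizerWeighted,\parameter_k) = -\weight_k^{-1}\sum_{i\neq k}\weight_i \nabla_\vartheta\objectiveindexed(\minimizerWeighted,\parameter_i) = -\weight_k^{-1}\sum_{i\neq k}\weight_i g_i(\parameter_i;\minimizerWeighted)$. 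This suggests the plug-in estimator $\estimatedParameter_k := g_k^{-1}\bigl(-\weight_k^{-1}\sum_{i\neq k}\weight_i g_i(\estimatedParameter_i;\estimatedMinimizerWeighted);\,\estimatedMinimizerWeighted\bigr)$, where $\estimatedParameter_i$ for $i\neq k$ are near-optimal estimators of $\parameter_i$ (achieving error close to $\delta_i$). I would then bound $\norm{\estimatedParameter_k-\parameter_k}$ using the two Lipschitz bounds of \cref{ass:Injectivity}: first the inverse-Lipschitz bound with constant $\eta_k'$ to pass from the discrepancy in the $g_k$-image back to $\parameter_k$-space (this produces the factor $\weight_k/\eta_k'$ after accounting for the $\weight_k^{-1}$ scaling), and then the forward-Lipschitz bounds with constants $\eta_i$ to control each $\norm{g_i(\estimatedParameter_i;\estimatedMinimizerWeighted)-g_i(\parameter_i;\minimizerWeighted)}$ by $\eta_i(\norm{\estimatedParameter_i-\parameter_i}+\norm{\estimatedMinimizerWeighted-\minimizerWeighted}_2)$.

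Carrying this through, the chain of triangle inequalities and Lipschitz bounds yields, schematically,
\begin{equation*}
\norm{\estimatedParameter_k-\parameter_k} \leq \eta_k'\Bigl(\tfrac{1}{\weight_k}\sum_{i\neq k}\weight_i\,\eta_i\bigl(\norm{\estimatedParameter_i-\parameter_i}+\norm{\estimatedMinimizerWeighted-\minimizerWeighted}_2\bigr) + \norm{\estimatedMinimizerWeighted-\minimizerWeighted}_2\bigr).
\end{equation*}
Taking the supremum over $\distributionXxYTuple\in\distributionSetTuple$ and expectations over $\cD$, the left side is at least $\delta_k$ by definition of the minimax parameter error, the terms $\EE\norm{\estimatedParameter_i-\parameter_i}$ can be made arbitrarily close to $\delta_i$ (passing to the infimum over these auxiliary estimators), and $\sup\EE\norm{\estimatedMinimizerWeighted-\minimizerWeighted}_2$ is exactly the quantity $\minimax(\distributionSetTuple)$ we are lower-bounding. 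Rearranging to isolate $\minimax(\distributionSetTuple)$, collecting the $\norm{\estimatedMinimizerWeighted-\minimizerWeighted}_2$ coefficients $\eta_k'(1+\weight_k^{-1}\sum_{i\neq k}\weight_i\eta_i)$, and simplifying using $\scalarization(\boldsymbol\eta)=\sum_i \weight_i\eta_i$ gives the stated bound $\minimax(\distributionSetTuple)\geq (1+\scalarization(\boldsymbol\eta))^{-1}(\weight_k\delta_k/\eta_k' - \sum_{i\neq k}\eta_i\weight_i\delta_i)_+$. Taking the maximum over $k$ at the end is free since the argument works for every $k$, and the positive-part truncation is automatic because the minimax rate is nonnegative.

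The main obstacle I anticipate is the careful bookkeeping of the constants so that the $\weight_k^{-1}$ factors, the $\eta_i$'s and $\eta_k'$'s combine into exactly $(1+\scalarization(\boldsymbol\eta))^{-1}$ and $\weight_k/\eta_k'$ rather than some looser combination — in particular making sure the coefficient of $\minimax(\distributionSetTuple)$ on the right, after moving everything to one side, is $\eta_k'(1 + \weight_k^{-1}\scalarization_{-k}(\boldsymbol\eta))$ and that dividing through by $\eta_k'$ and multiplying through by $\weight_k$ produces the clean form. A secondary subtlety is justifying that $\estimatedParameter_k$ as constructed is a legitimate estimator (it must be measurable and, crucially, its output must land in $\parameterSpaceTupleLarge$ so that \cref{ass:Injectivity}'s inverse map $g_k^{-1}$ is defined there); one handles this by noting $g_k^{-1}$ is defined on $\Ima{g_k(\cdot;\vartheta)}$ and, if the argument falls outside this image, projecting or clipping to $\parameterSpaceTupleLarge$, which only decreases the error since $\parameter_k\in\parameterSpaceTuple\subset\parameterSpaceTupleLarge$. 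The rest is routine.
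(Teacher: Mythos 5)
Your proposal matches the paper's proof essentially step for step: the same plug-in construction $\estimatedParameter_k^{\text{new}}=g_k^{-1}\bigl(-\weight_k^{-1}\sum_{i\neq k}\weight_i\nabla_\vartheta\Loss_i(\estimatedMinimizerWeighted,\estimatedParameter_i);\estimatedMinimizerWeighted\bigr)$ from Pareto stationarity, the same two applications of the Lipschitz bounds in \cref{ass:Injectivity}, and the same rearrangement yielding the $(1+\scalarization(\boldsymbol{\eta}))^{-1}$ and $\weight_k/\eta_k'$ constants before taking suprema, expectations, and the maximum over $k$. The bookkeeping you flag works out exactly as you anticipate, so the plan is correct and identical in approach to the paper's argument.
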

\cref{thm:GeneralLowerBound} is proved in \cref{subsec:proof-GeneralLowerBound}.

%On the other hand, the lower bound characterizes the ``limits" of Pareto-optimal set estimation in the best case when we have a lot of unlabeled data (see also discussion in \cref{subsec:examples-theory}), in which case we can essentially think of $\estimatedUnlabeled_\objectiveindex=\unlabeled_\objectiveindex$.
%We discuss some intuition on why unlabeled data helps in \cref{subsec:examples-theory}.
%In the lower bound we assume that $\estimatedUnlabeledTuple=\unlabeledTuple$, corresponding to infinitely many unlabeled samples. 
%This is for simplicity of exposition only, and could be readily adapted to capture the estimation error of $\estimatedUnlabeledTuple$. 

\paragraph{Minimax optimality of the two-stage estimator.} We can now use the  lower bound from \cref{thm:GeneralLowerBound} to discuss the minimax optimality of our two-stage procedure. 
%compare the upper bound from \cref{thm:MinimaxRateStrongConvexity} and lower bound from \cref{thm:GeneralLowerBound}. 
In particular, we first note that the lower bound is tight for all $\weightTuple \in \Delta^K$ that satisfy for some large enough constant $C=C(\strongConvexityParamTuple,\zeta,\boldsymbol{\eta},\boldsymbol{\eta}')>0$ and some $k\in [K]$ that 
\begin{equation}
\label{eq:sufficient-condition-minimax}
    \weight_\objectiveindex\delta_\objectiveindex\geq C\sum_{i\neq k}\weight_i\delta_i.
\end{equation}
To see this, first observe that when inequality \eqref{eq:sufficient-condition-minimax} holds,
the lower bound (neglecting dependence on $\strongConvexityParamTuple,\boldsymbol{\eta},\boldsymbol{\eta}'$) reduces to 
$\minimax(\distributionSetTuple)\gtrsim \max_{k\in[K]}\weight_k\delta_k$.
Supposing $\estimatedParameter_\objectiveindex$ estimates $\parameter_\objectiveindex$ in a minimax optimal manner so that $\sup_{\distributionXxYTuple\in \distributionSetTuple}\EE\|\estimatedParameter_\objectiveindex-\parameter_\objectiveindex\|\asymp \delta_\objectiveindex$, under \eqref{eq:sufficient-condition-minimax}, the upper bound in \cref{thm:MinimaxRateStrongConvexity} also reduces to $\sup_{\distributionXxYTuple\in \distributionSetTuple}\EE\|\tsEstimatedMinimizerWeighted-\minimizerWeighted\|_2 \lesssim \max_{k\in [K]} \weight_k \delta_k$. Hence, up to constants depending on $\strongConvexityParamTuple,\zeta,\boldsymbol{\eta},\boldsymbol{\eta}'$, we obtain that
\begin{equation*}
     \sup_{\distributionXxYTuple\in \distributionSetTuple}\EE\br{\norm{\tsEstimatedMinimizerWeighted-\minimizerWeighted}_2}\asymp \minimax(\distributionSetTuple)\asymp \max_{k\in[K]} \weight_k\delta_k,
\end{equation*}
and the two-stage estimator is minimax optimal. %under the aforementioned assumptions. 

We now demonstrate that \eqref{eq:sufficient-condition-minimax} holds for essentially all $\weightTuple\in \Delta^K$, that is, across the majority of the Pareto front, for large enough sample size and fixed $K$. 
%In the sequel, ``$\lesssim$'' suppresses constants that may depend on the parameters $\etabold,\etabold'$ and $\zeta$.
To that end, consider the simplified problem for $K=2$ objectives with minimax rates $\delta_1,\delta_2$, so that \eqref{eq:sufficient-condition-minimax} reduces to $\weight_1\delta_1 \geq C \weight_2\delta_2$ (or vice versa). We can make the following case distinction. 

\begin{itemize}[leftmargin=*]
\setlength\itemsep{0em}
\vspace{-0.1cm}
    \item Case 1: $\delta_2 =o(\delta_1)$ (or vice versa) 
    as $n,d\to\infty$, that is, estimating the two parameters is unequally hard. In that case, for every fixed $\weightTuple \in \Delta^2$, \eqref{eq:sufficient-condition-minimax} holds when $n,d$ are large enough. For example, in linear regression with the parameters $\parameter_k$ being one sparse and one dense ground truth, the minimax rates are $\delta_1 \asymp \sqrt{d / n}$ and $\delta_2 \asymp \sqrt{\log (d)/n}$. Then \eqref{eq:sufficient-condition-minimax} holds if $\weight_1/\weight_2 \geq C \sqrt{\log (d) / d}$, which constitutes a large subset of $\Delta^2$ that expands to the entire simplex as $d\to \infty$. Therefore, for most $\weightTuple\in\Delta^2$, the estimation error eventually scales with the ``slower'' minimax rate, and the upper bound is tight. 
    \item Case 2: $\delta_1\approx \delta_2$, that is, estimating the distributional parameters in both problems is approximately equally hard. In that case, \eqref{eq:sufficient-condition-minimax} holds for all $\weightTuple\in\Delta^2$ where either $\weight_1\geq C\weight_2$ or $\weight_2\geq C\weight_1$, which constitutes a large part of the simplex, unless $C$ is very large.
    For example, in linear regression when both ground-truths are equally dense so that $\delta_1=\delta_2 \asymp \sqrt{d/n}$, then for almost all $\weightTuple$, we have that $\minimax(\distributionSetTuple)\asymp \max\{\weight_1,\weight_2\} \sqrt{d/n}$. 
\vspace{-0.1cm}
\end{itemize}
A similar argument works for any fixed $K$\textemdash but \eqref{eq:sufficient-condition-minimax} has to be evaluated more carefully if $K$ grows with $n,d$.
We conclude that, under \cref{ass:Parameterization,ass:StrongConvexitySmoothness,ass:LocallyLipschitz,ass:Injectivity}, the difficulty of MOL is dominated by the hardest individual learning problem, if \emph{all other} individual learning tasks are easier in the sense of \eqref{eq:sufficient-condition-minimax}. Then the two-stage estimator is optimal.

\subsection{Application to Examples \ref{ex:multiple-linear-regression} and \ref{ex:fairness-risk-trade-off}}
\label{subsec:examples-theory}

We now apply \cref{thm:MinimaxRateStrongConvexity,thm:GeneralLowerBound} to derive explicit bounds for the two problems from \cref{ex:multiple-linear-regression,ex:fairness-risk-trade-off}.

\begin{figure*}
    \centering
    \subfloat[]{
    \resizebox{0.33\linewidth}{!}{%
    %\input{macros}

% To draw ellipses using covariance matrices
\newcommand{\drawellipse}[5]{
    \pgfmathsetmacro{\a}{#1}
    \pgfmathsetmacro{\b}{#2}
    \pgfmathsetmacro{\c}{#3}
    
    % Corrected eigenvalue calculation with positive b^2
    \pgfmathsetmacro{\discriminant}{sqrt((\a-\c)^2 + 4*(\b)^2)}
    \pgfmathsetmacro{\lambdaone}{(\a + \c + \discriminant)/2}
    \pgfmathsetmacro{\lambdatwo}{(\a + \c - \discriminant)/2}
    
    % Special case: if a == c, we manually handle the rotation
    \ifdim \a pt=\c pt
        \pgfmathsetmacro{\angle}{45 * (\b > 0 ? 1 : -1)} % Set rotation to 45 degrees if b != 0
    \else
        \pgfmathsetmacro{\angle}{atan(2*\b/(\a-\c))/2} % Normal case
    \fi
    
    % Draw ellipse
    \draw[rotate around={\angle:#4}, thin, #5] #4 ellipse ({sqrt(\lambdaone)} and {sqrt(\lambdatwo)});
}

\begin{tikzpicture}

\node[rotate=90] (t1) at (-1.5,0){$\betahat_k\not\approx\beta_k$};
\node[rotate=90] (t2) at (-1.5,-3){$\betahat_k\approx\beta_k$};
\node (S1) at (0.75,1.5){$\estimatedCovariance_i\not\approx\covariance_i$};
\node (S2) at (4.75,1.5){$\estimatedCovariance_i\approx\covariance_i$};

\draw[->] (S1) -- (S2) node[midway, above]{unlabeled data};
\draw[->] (t1) -- (t2);

\node[rotate=90] (labeleddata) at (-2,-1.5){sparsity + regularization};

% top left figure ------------------------------------------

\coordinate (centertopleft1) at (0,0);
\coordinate (centertopleft2) at (1.5,0);
\coordinate (optimalparamtopleft) at (0.75,0);

\coordinate (esttopleft1) at (-0.1,-0.5);
\coordinate (esttopleft2) at (1.9,-0.2);
\coordinate (estparamtopleft) at (0.75, -1.108);

\def\corrone{-0.8}
\def\corrtwo{0.7}

% Ellipses
\drawellipse{1}{0}{1}{(centertopleft1)}{gray}; % for S1
\drawellipse{1}{\corrone}{1}{(esttopleft1)}{dashed,gray}; % for S2

\drawellipse{1}{0}{1}{(centertopleft2)}{gray}; % for S3
\drawellipse{1}{\corrtwo}{1}{(esttopleft2)}{dashed,gray};  % for S4

\fill[population_color] (centertopleft1) circle (2pt) node[above] {$\minimizer_1$};
\fill[population_color] (centertopleft2) circle (2pt) node[above] {$\minimizer_2$};

\fill[ts_color] (esttopleft1) circle (2pt) node[left] {$\estimatedMinimizer_1$};
\fill[ts_color] (esttopleft2) circle (2pt) node[right] {$\estimatedMinimizer_2$};

\fill[population_color] (optimalparamtopleft) circle (2pt) node[above] {$\minimizerWeighted$};
\fill[ts_color] (estparamtopleft) circle (2pt) node[below] {$\tsEstimatedMinimizerWeighted$};

\draw[dashed,ts_color]  {
    (1.9, -0.2)--(1.326, -0.743)--(1.084, -0.948)--(0.941, -1.046)--(0.838, -1.092)--(0.75, -1.108)--(0.664, -1.096)--(0.567, -1.057)--(0.443, -0.977)--(0.255, -0.825)--(-0.1, -0.5)
};

\draw[thin,population_color]  {
    (centertopleft1)--(centertopleft2)
};

% top right figure ------------------------------------------

\coordinate (centertopleft1) at (4,0);
\coordinate (centertopleft2) at (5.5,0);
\coordinate (optimalparamtopleft) at (4.75,0);

\coordinate (esttopleft1) at (3.9,-0.5);
\coordinate (esttopleft2) at (5.9,-0.2);
\coordinate (estparamtopleft) at (4.849, -0.555);

\def\corrone{-0.3}
\def\corrtwo{0.1}

% Ellipses
\drawellipse{1}{0}{1}{(centertopleft1)}{gray}; % for S1
\drawellipse{1}{\corrone}{1}{(esttopleft1)}{dashed,gray}; % for S2

\drawellipse{1}{0}{1}{(centertopleft2)}{gray}; % for S3
\drawellipse{1}{\corrtwo}{1}{(esttopleft2)}{dashed,gray};  % for S4

\fill[population_color] (centertopleft1) circle (2pt) node[above] {$\minimizer_1$};
\fill[population_color] (centertopleft2) circle (2pt) node[above] {$\minimizer_2$};

\fill[ts_color] (esttopleft1) circle (2pt) node[left] {$\estimatedMinimizer_1$};
\fill[ts_color] (esttopleft2) circle (2pt) node[right] {$\estimatedMinimizer_2$};

\fill[population_color] (optimalparamtopleft) circle (2pt) node[above] {$\minimizerWeighted$};
\fill[ts_color] (estparamtopleft) circle (2pt) node[below] {$\tsEstimatedMinimizerWeighted$};

\draw[dashed,ts_color]  {
    (5.9, -0.2)--(5.668, -0.31)--(5.45, -0.399)--(5.243, -0.468)--(5.043, -0.52)--(4.849, -0.555)--(4.66, -0.574)--(4.471, -0.579)--(4.283, -0.568)--(4.093, -0.542)--(3.9, -0.5)
};

\draw[thin,population_color]  {
    (centertopleft1)--(centertopleft2)
};

% bottom left figure ---------------------------------------------

\coordinate (centertopleft1) at (0,-3);
\coordinate (centertopleft2) at (1.5,-3);
\coordinate (optimalparamtopleft) at (0.75,-3);

\coordinate (esttopleft1) at (0,-3.2);
\coordinate (esttopleft2) at (1.7,-3.1);
\coordinate (estparamtopleft) at (0.78, -3.791);

\def\corrone{-0.8}
\def\corrtwo{0.7}

% Ellipses
\drawellipse{1}{0}{1}{(centertopleft1)}{gray}; % for S1
\drawellipse{1}{\corrone}{1}{(esttopleft1)}{dashed,gray}; % for S2

\drawellipse{1}{0}{1}{(centertopleft2)}{gray}; % for S3
\drawellipse{1}{\corrtwo}{1}{(esttopleft2)}{dashed,gray};  % for S4

\fill[population_color] (centertopleft1) circle (2pt) node[above] {$\minimizer_1$};
\fill[population_color] (centertopleft2) circle (2pt) node[above] {$\minimizer_2$};

\fill[ts_color] (esttopleft1) circle (2pt) node[left] {$\estimatedMinimizer_1$};
\fill[ts_color] (esttopleft2) circle (2pt) node[right] {$\estimatedMinimizer_2$};

\fill[population_color] (optimalparamtopleft) circle (2pt) node[above] {$\minimizerWeighted$};
\fill[ts_color] (estparamtopleft) circle (2pt) node[below] {$\tsEstimatedMinimizerWeighted$};

\draw[dashed,ts_color]  {
    (1.7, -3.1)--(1.248, -3.523)--(1.056, -3.68)--(0.941, -3.753)--(0.855, -3.785)--(0.78, -3.791)--(0.705, -3.775)--(0.619, -3.733)--(0.506, -3.654)--(0.332, -3.508)--(0.0, -3.2)
};

\draw[thin,population_color]  {
    (centertopleft1)--(centertopleft2)
};

% bottom right figure ---------------------------------------------

\coordinate (centertopleft1) at (4,-3);
\coordinate (centertopleft2) at (5.5,-3);
\coordinate (optimalparamtopleft) at (4.75,-3);

\coordinate (esttopleft1) at (4,-3.2);
\coordinate (esttopleft2) at (5.7,-3.1);
\coordinate (estparamtopleft) at (4.823, -3.323);

\def\corrone{-0.3}
\def\corrtwo{0.1}

% Ellipses
\drawellipse{1}{0}{1}{(centertopleft1)}{gray}; % for S1
\drawellipse{1}{\corrone}{1}{(esttopleft1)}{dashed,gray}; % for S2

\drawellipse{1}{0}{1}{(centertopleft2)}{gray}; % for S3
\drawellipse{1}{\corrtwo}{1}{(esttopleft2)}{dashed,gray};  % for S4

\fill[population_color] (centertopleft1) circle (2pt) node[above] {$\minimizer_1$};
\fill[population_color] (centertopleft2) circle (2pt) node[above] {$\minimizer_2$};

\fill[ts_color] (esttopleft1) circle (2pt) node[left] {$\estimatedMinimizer_1$};
\fill[ts_color] (esttopleft2) circle (2pt) node[right] {$\estimatedMinimizer_2$};

\fill[population_color] (optimalparamtopleft) circle (2pt) node[above] {$\minimizerWeighted$};
\fill[ts_color] (estparamtopleft) circle (2pt) node[below] {$\tsEstimatedMinimizerWeighted$};

\draw[dashed,ts_color]  {
    (5.509, -3.177)--(5.328, -3.236)--(5.155, -3.279)--(4.987, -3.308)--(4.823, -3.323)--(4.661, -3.324)--(4.499, -3.313)--(4.336, -3.289)--(4.17, -3.251)--(4.0, -3.2)
};

\draw[thin,population_color]  {
    (centertopleft1)--(centertopleft2)
};

\end{tikzpicture}
    \label{fig:covariance-intuition}
    }
    }\hfill
    \subfloat[ ]{
    \includegraphics[width=0.29\linewidth]{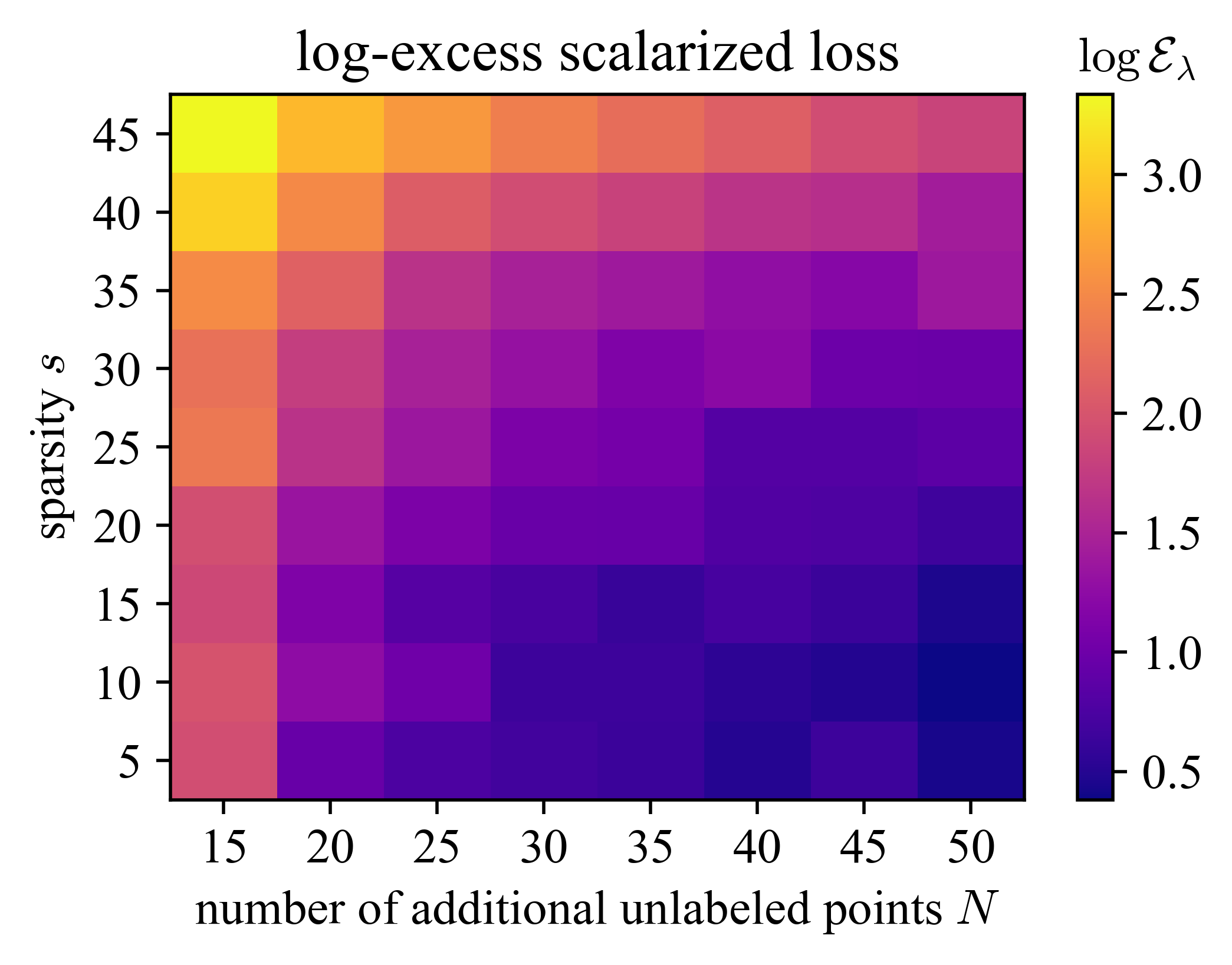}
    \label{fig:covariance-simulation}
    }\hfill
    \subfloat[ ]{
    \includegraphics[width=0.26\linewidth]{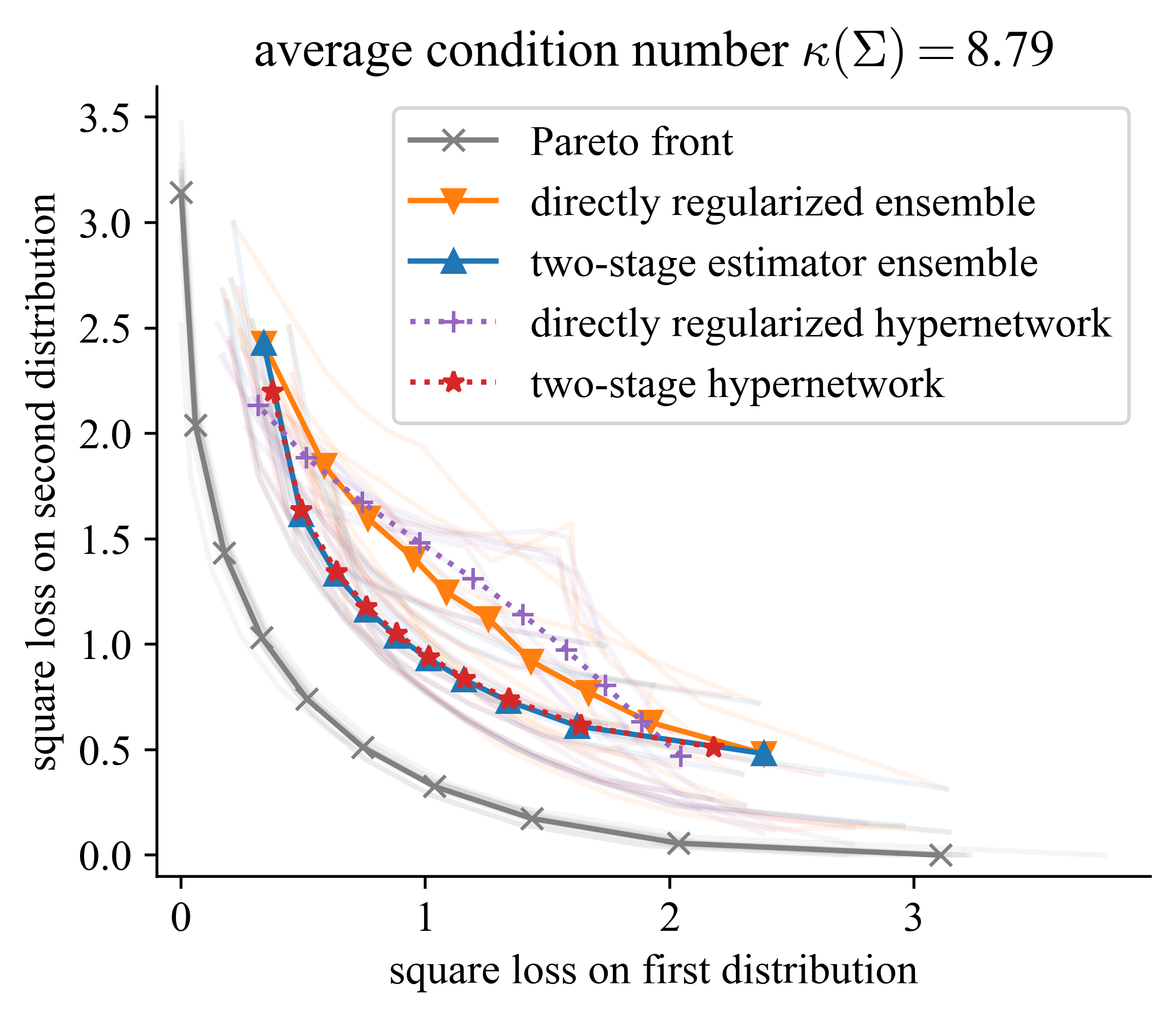}
    \label{fig:multiple_regression}
    }
    \caption{\small{The important roles of both regularization and additional unlabeled data for \cref{ex:multiple-linear-regression} illustrated on an intuitive level \protect\subref{fig:covariance-intuition}, and by evaluating the excess scalarized loss in simulations \protect\subref{fig:covariance-simulation}:
    %The two dimensions in which our estimator improves: 
    Increasing sparsity together with appropriate regularization
    improves the estimate of the parameters $\beta_\objectiveindex$, while 
    %the number of additional 
    an increasing number of unlabeled datapoints $N_\objectiveindex$ improves the estimate of the covariance matrices $\covariance_\objectiveindex$, both improving the estimation of the Pareto front. \protect\subref{fig:multiple_regression}: Pareto fronts for two sparse linear regression problems, using direct regularization and the two-stage approach (\cref{subsec:multi-dist-regression}). We also plot the hypernetwork implementation.
    }} 
    \vspace{-0.1in}
    %Putting both together improves the excess scalarized loss.}
    \label{fig:covariance-effect}
\end{figure*}

\paragraph{Multiple sparse linear regression.} 

Recall \cref{ex:multiple-linear-regression} and consider our two-stage estimator
$\tsEstimatedMinimizerWeighted$, where in stage 1 we estimate $\theta_k=(\beta_k,\Sigma_k)$ with the sample covariance $\estimatedCovariance_k$ (using both the labeled and unlabeled data) and
\begin{align}
    \widehat \beta_k &\in \argmin_{\beta\in \RR^d} \frac{1}{n}\norm{\X_k \beta -y^k}_2^2 +136B\sigma \sqrt{\frac{\log d}{n}} \norm{\beta}_1, \notag \\
     \tsEstimatedMinimizerWeighted & \in \argmin_{\minimizer\in \RR^d} \sum_{k=1}^K \weight_k \norm{\estimatedCovariance_k^{1/2}(\vartheta-\betahat_k)}_2^2. \label{eq:ts-MultiDistribution} 
\end{align}
%and we use the sample covariance matrix  using both labeled and unlabeled data to estimate the covariance $\covariance_k$.
\begin{restatable}{cor}{MultiDistributionOurs}
\label{cor:MultiDistributionOurs}
Let \cref{ass:technical-assumptions-multi-distribution} (in \cref{subsec:proof-MultiDistributionOurs}) hold. Then $\tsEstimatedMinimizerWeighted$ from \eqref{eq:ts-MultiDistribution} achieves for all $\weightTuple\in\Delta^K$ and $\distributionXxYTuple \in \distributionSetTuple$
\begin{equation*}
\norm{\tsEstimatedMinimizerWeighted-\minimizerWeighted}_2\lesssim \frac{B^4}{b^4} \sum_{\objectiveindex=1}^K \weight_k\pr{\frac{\sigma }{b^2} \sqrt{\frac{s\log d}{n_k}} + \sqrt{\frac{d}{n_k+N_k}}}
\end{equation*}
with probability at least $1-c_1K(d^{-3}+\exp(-c_2B^4 d))$, where $c_1,c_2>0$ are some universal constants.
If further $\estimatedCovarianceTuple = \covarianceTuple$ is fixed and known, then
\begin{equation*}
        \minimax(\distributionSetTuple) \gtrsim  \max_{k\in[K]}\weight_k\frac{b^2\sigma}{B^3} \sqrt{\frac{s\log d}{n_k}}.
    \end{equation*}
\end{restatable}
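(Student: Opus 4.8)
The plan is to obtain the upper bound by instantiating \cref{thm:MinimaxRateStrongConvexity} for \cref{ex:multiple-linear-regression}, and to obtain the matching lower bound by invoking \cref{thm:GeneralLowerBound} (or, more cleanly, a direct sub-family reduction). Throughout, take $\parameter_k=(\beta_k,\covariance_k)$ with the norm $\norm{\parameter_k-\parameter_k'}:=\norm{\beta_k-\beta_k'}_2+\norm{\covariance_k-\covariance_k'}_{\mathrm{op}}$, and work on the enlarged set $\parameterSpaceTupleLarge$ of tuples with $\tfrac12 b^2\identity_d\preceq\covariance_k$, $\norm{\covariance_k}_{\mathrm{op}}\leq 2B^2$, and $\norm{\beta_k}_2\leq 2$.

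\textbf{Upper bound.} Since $\nabla_\vartheta\objectiveindexed(\vartheta,\parameter_k)=2\covariance_k(\vartheta-\beta_k)$, the map $\vartheta\mapsto\objectiveindexed(\vartheta,\parameter_k)$ is $\strongConvexityParam_k$-strongly convex with $\strongConvexityParam_k=2\lambda_{\min}(\covariance_k)\geq b^2$ on $\parameterSpaceTupleLarge$, so \cref{ass:StrongConvexitySmoothness} holds and $\scalarization(\strongConvexityParamTuple)=\sum_k\weight_k\strongConvexityParam_k\geq b^2$ for \emph{every} $\weightTuple\in\Delta^K$ — this is why the conclusion holds for all $\weightTuple$. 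Writing $2\covariance_k(\vartheta-\beta_k)-2\covariance_k'(\vartheta-\beta_k')=2\covariance_k(\beta_k'-\beta_k)+2(\covariance_k-\covariance_k')(\vartheta-\beta_k')$ shows \cref{ass:LocallyLipschitz} holds with $\zeta_k(\vartheta)\lesssim B^2(1+\norm{\vartheta}_2)$. Because $\minimizerWeighted=(\sum_k\weight_k\covariance_k)^{-1}\sum_k\weight_k\covariance_k\beta_k$ with $b^2\identity_d\preceq\sum_k\weight_k\covariance_k$ of operator norm $\lesssim B^2$, we get $\norm{\minimizerWeighted}_2\lesssim B^2/b^2$, hence $\zeta(\minimizerWeighted)\lesssim B^4/b^2$. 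Substituting into \cref{thm:MinimaxRateStrongConvexity} gives, on the event $\{\estimatedParameterTuple\in\parameterSpaceTupleLarge\}$,
\begin{equation*}
\norm{\tsEstimatedMinimizerWeighted-\minimizerWeighted}_2\;\lesssim\;\frac{B^4}{b^4}\sum_{k=1}^K\weight_k\Bigl(\norm{\betahat_k-\beta_k}_2+\norm{\estimatedCovariance_k-\covariance_k}_{\mathrm{op}}\Bigr),
\end{equation*}
and it remains only to bound the two estimation errors (up to the precise powers of $B/b$, which I would not optimize — the grouping $B^4/b^4$ together with the two rates is what matters).

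\textbf{Instantiating the estimation errors.} The tuning $\lambda\asymp B\sigma\sqrt{\log d/n_k}$ is the standard LASSO choice; since $X$ is sub-Gaussian with $\covariance_k\succeq b^2\identity_d$, the restricted-eigenvalue condition with constant $\gtrsim b^2$ holds with probability $\geq 1-d^{-3}$ once $n_k\gtrsim s\log d$ (guaranteed by \cref{ass:technical-assumptions-multi-distribution}), and the oracle inequality of \citet{Bickel2009simultaneous} then gives $\norm{\betahat_k-\beta_k}_2\lesssim\frac{\sigma}{b^2}\sqrt{s\log d/n_k}$ with probability $\geq 1-c\,d^{-3}$; on that event $\norm{\betahat_k}_2\leq 2$. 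The sample covariance of the $n_k+N_k$ (labeled and unlabeled) covariate vectors satisfies $\norm{\estimatedCovariance_k-\covariance_k}_{\mathrm{op}}\lesssim\sqrt{d/(n_k+N_k)}$ on an event of probability $\geq 1-\exp(-c_2B^4d)$ once $n_k+N_k\gtrsim d$, and on this event $\estimatedCovariance_k$ also lies in the prescribed operator-norm band; hence the LASSO and covariance-concentration events together imply $\estimatedParameterTuple\in\parameterSpaceTupleLarge$. A union bound over $k\in[K]$ yields the stated probability $1-c_1K(d^{-3}+\exp(-c_2B^4d))$, and plugging the two rates into the display above gives the first claim.

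\textbf{Lower bound.} When $\estimatedCovarianceTuple=\covarianceTuple$ is fixed and known, the distributional parameter is $\parameter_k=\beta_k$, and $g_k(\beta;\vartheta)=\nabla_\vartheta\objectiveindexed(\vartheta,\beta)=2\covariance_k(\vartheta-\beta)$ is an affine bijection in $\beta$ with $\eta_k\lesssim\norm{\covariance_k}_{\mathrm{op}}\lesssim B^2$ and $\eta_k'\lesssim\norm{\covariance_k^{-1}}_{\mathrm{op}}\lesssim b^{-2}$, so \cref{ass:Injectivity} holds; moreover the minimax rate for $s$-sparse linear regression from $n_k$ labeled samples is $\delta_k\gtrsim\frac{\sigma}{b}\sqrt{s\log d/n_k}$ (attained in the worst case $\covariance_k=b^2\identity_d$; unlabeled data does not help since $\covariance_k$ is known). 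Feeding $\boldsymbol{\eta},\boldsymbol{\eta}',(\delta_k)$ into \cref{thm:GeneralLowerBound} gives, for the index $k^\star$ maximizing $\weight_k\delta_k$, a bound of order $\frac{b^2}{B^3}\weight_{k^\star}\delta_{k^\star}$ once the subtracted term is controlled. Since $\sum_{i\neq k^\star}\eta_i\weight_i\delta_i$ is delicate to handle in general, I would instead argue directly: restrict the class to $\covariance_k=b^2\identity_d$ for all $k$ and $\beta_i=0$ for $i\neq k^\star$, so that $\minimizerWeighted=(\sum_k\weight_k\covariance_k)^{-1}\sum_k\weight_k\covariance_k\beta_k=\weight_{k^\star}\beta_{k^\star}$ exactly; then, since the remaining data are uninformative about $\beta_{k^\star}$, $\minimax(\distributionSetTuple)\geq\weight_{k^\star}\inf_{\hat\beta}\sup_{\beta}\EE\norm{\hat\beta-\beta}_2\gtrsim\weight_{k^\star}\frac{\sigma}{b}\sqrt{s\log d/n_{k^\star}}$ by Fano/Assouad over an $s$-sparse packing, which is at least $\max_k\weight_k\frac{b^2\sigma}{B^3}\sqrt{s\log d/n_k}$ because $b\leq B$.

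\textbf{Main obstacle.} The genuinely delicate parts are (i) arranging the definition of $\parameterSpaceTupleLarge$ and of the norm on $\parameterSpaceTuple$ so that the high-probability event $\{\estimatedParameterTuple\in\parameterSpaceTupleLarge\}$ required by \cref{thm:MinimaxRateStrongConvexity} is implied by exactly the LASSO and covariance-concentration events, without an extra failure probability; and (ii) in the converse, the subtracted term in \cref{thm:GeneralLowerBound}, which is why I route the lower bound through the explicit sub-family construction — this also makes transparent why only $n_k$ (and not $N_k$) enters the lower bound, matching the upper bound once $\estimatedCovariance_k=\covariance_k$.
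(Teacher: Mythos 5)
Your upper bound follows the paper's proof essentially verbatim: verify \cref{ass:StrongConvexitySmoothness,ass:LocallyLipschitz} on the enlarged set $\parameterSpaceTupleLarge$, bound $\norm{\minimizerWeighted}_2\lesssim B^2/b^2$ to control $\zeta(\minimizerWeighted)$, and intersect the LASSO and covariance-concentration events so that $\estimatedParameterTuple\in\parameterSpaceTupleLarge$ before invoking \cref{thm:MinimaxRateStrongConvexity}. The differences are cosmetic (you carry the product form $B^2(1+\norm{\vartheta}_2)$ for $\zeta_k$ where the paper uses a max, and you are deliberately loose about powers of $B/b$; both routes land at the stated $B^4/b^4$ grouping and the same failure probability).

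The lower bound is where you genuinely diverge, and your route is arguably cleaner. The paper verifies \cref{ass:Injectivity} with $\eta_k=2B^2$, $\eta_k'=1/(2b^2)$, feeds the conditional Fano bound $\delta_k\gtrsim\frac{\sigma}{B}\sqrt{s\log d/n_k}$ into \cref{thm:GeneralLowerBound}, and then \emph{needs} item 4 of \cref{ass:technical-assumptions-multi-distribution} to make the subtracted term $\sum_{i\neq k}\eta_i\weight_i\delta_i$ nonbinding. Your direct sub-family reduction (set $\beta_i=0$ for $i\neq k^\star$, so only the $k^\star$-th dataset carries information and $\minimizerWeighted$ is an explicit known function of $\beta_{k^\star}$) sidesteps that subtraction entirely and yields the bound for every $k$ without that extra condition\textemdash a real simplification. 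One adjustment is needed, though: the premise is that $\covarianceTuple$ is \emph{fixed and known}, i.e.\ given rather than adversarially choosable, so you cannot "restrict the class to $\covariance_k=b^2\identity_d$". The repair is immediate: with $\beta_i=0$ for $i\neq k^\star$ and arbitrary fixed covariances one has $\minimizerWeighted=M\beta_{k^\star}$ with $M=\weight_{k^\star}\pr{\sum_i\weight_i\covariance_i}^{-1}\covariance_{k^\star}$ a known matrix satisfying $\norm{M^{-1}}_2\leq B^2/(\weight_{k^\star}b^2)$, so an estimator of $\minimizerWeighted$ with error $\eps$ yields an estimator of $\beta_{k^\star}$ with error at most $B^2\eps/(\weight_{k^\star}b^2)$; combined with the sparse-regression minimax rate $\delta_{k^\star}\gtrsim\frac{\sigma}{B}\sqrt{s\log d/n_{k^\star}}$ for a fixed design whose top eigenvalue is $B^2$ (this is where the $1/B$, rather than the $1/b$ you quote, comes from, and is the rate the paper derives via the conditional Fano argument), this gives exactly $\weight_{k^\star}\frac{b^2\sigma}{B^3}\sqrt{s\log d/n_{k^\star}}$.
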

The proof can be found in \cref{subsec:proof-MultiDistributionOurs}.
We can see that when there are enough unlabeled samples, $N_k \gg dn_k /\log d$, both bounds match up to constants.

\paragraph{Fairness-risk tradeoff in linear regression.}
Recall \cref{ex:fairness-risk-trade-off}. As the fairness objective $\Lfair$ violates strong convexity, we have to restrict ourselves to the case that $\weightrisk>0$.
%Our estimator is comparable to the projection estimator from \cite{Gouic2020Projectionfairnessstatisticallearning} when we let $\lambfair\to 1$.
We apply our two-stage estimator with the following natural estimators for $\theta = (\beta, \mu)$ in the first stage
\vspace{-0.2cm}
\begin{align*}
    \widehat \beta &\in \argmin_{\beta\in \RR^d} \frac{1}{n}\norm{\X \beta -y}_2^2 +136\sigma \sqrt{\frac{\log d}{n}} \norm{\beta}_1, \\
    \muhat &= \frac{1}{n+N} \sum_{i=1}^{n+N} A_i X_i,
\end{align*}
\vspace{-0.2cm}
and using linear scalarization, so that $\tsEstimatedMinimizerWeighted$ solves
\begin{equation}
    \label{eq:ts-fairnessCorollary}
    \min_{\minimizer\in \RR^d} \weightrisk \frac{1}{n}\norm{(\identity_d+\muhat\muhat^\top)^{\frac{1}{2}}(\minimizer-\betahat)}_2^2 
    + \weightfair \inner{\minimizer}{\muhat}^2.
\end{equation}   
%\end{equation}
The following corollary applies \cref{thm:MinimaxRateStrongConvexity} to this setting, and its proof can be found in \cref{subsec:proof-FairnessCorollary}.
\begin{restatable}{cor}{FairnessCorollary}
\label{cor:FairnessCorollary}
    In the setting of \cref{ex:fairness-risk-trade-off}, assume that $n\gtrsim \sigma^2 s\log d$. Then the two-stage estmator $\tsEstimatedMinimizerWeighted$ from \eqref{eq:ts-fairnessCorollary} achieves for all $\weightTuple\in\Delta^2$ with $\weightrisk>0$
    \begin{equation*}
        \norm{\tsEstimatedMinimizerWeighted-\minimizerWeighted}_2\lesssim \sigma\sqrt{\frac{s\log d}{n}}+\frac{1}{\weightrisk}\sqrt{\frac{d}{n+N}}
    \end{equation*}
    with probability at least $1-cd^{-3}$, where $c>0$ is some universal constant.
\end{restatable}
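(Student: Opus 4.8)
The plan is to apply \cref{thm:MinimaxRateStrongConvexity} with $K=2$, identifying the risk objective as the strongly convex one. First I would verify the assumptions of the theorem for \cref{ex:fairness-risk-trade-off}. Using the identity $\Lfair(\vartheta,\distributionXxY) = \inner{\mu}{\vartheta}^2$ established in Appendix \ref{sec:fairness-regression-appendix}, and the fact (also recorded there) that $\Lrisk(\vartheta,\distributionXxY) = \norm{(\identity_d+\mu\mu^\top)^{1/2}(\vartheta-\beta)}_2^2 + \sigma^2 - \inner{\mu}{\beta}^2$ or similar, the scalarized objective $\scalarization\circ\objectivevector = \weightrisk \Lrisk + \weightfair \Lfair$ is $2\weightrisk$-strongly convex in $\vartheta$ because $\identity_d + \mu\mu^\top \succeq \identity_d$; hence \cref{ass:StrongConvexitySmoothness} holds with $\strongConvexityParam_{\riskOp}\geq 2$ (so $\scalarization(\strongConvexityParamTuple)\gtrsim \weightrisk$), provided we work on a set $\parameterSpaceTupleLarge$ where $\norm{\mu}_2$ is bounded. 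For \cref{ass:LocallyLipschitz} I would compute $\nabla_\vartheta \Lrisk$ and $\nabla_\vartheta \Lfair$ explicitly — both are affine in $\vartheta$ with coefficients that are smooth functions of $(\beta,\mu)$ — and bound the difference in gradients across two parameter values $(\beta,\mu)$, $(\beta',\mu')$ by a constant (depending on the radius of $\parameterSpaceTupleLarge$ and on $\norm{\vartheta}_2$, which can be taken bounded on the Pareto set) times $\norm{\beta-\beta'}_2 + \norm{\mu-\mu'}_2$, giving a constant $\zeta$. Plugging into \cref{thm:MinimaxRateStrongConvexity} yields
\begin{equation*}
\norm{\tsEstimatedMinimizerWeighted-\minimizerWeighted}_2 \lesssim \frac{1}{\weightrisk}\pr{\weightrisk \norm{\betahat-\beta}_2 + \norm{\betahat-\beta}_2\cdot(\text{from }\mu) + \weightfair\norm{\muhat-\mu}_2 + \weightrisk \norm{\muhat-\mu}_2},
\end{equation*}
which after using $\weightfair\leq 1$ and $\weightrisk \leq 1$ collapses to a bound of the form $\norm{\betahat-\beta}_2 + \frac{1}{\weightrisk}\norm{\muhat-\mu}_2$ (the factor $1/\weightrisk$ surviving only on the $\mu$-term, matching the corollary's statement).

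Next I would control the two first-stage estimation errors. For $\betahat$, this is the standard LASSO analysis: under $s$-sparsity of $\beta$, $X|A\sim\cN(A\mu,\identity_d)$ with bounded $\norm{\mu}_2$ so the (pooled) design has sub-Gaussian rows with well-conditioned population covariance $\identity_d + \mu\mu^\top$, and the restricted eigenvalue / compatibility condition holds with high probability once $n\gtrsim s\log d$ (which is exactly the hypothesis $n\gtrsim \sigma^2 s\log d$, up to the $\sigma^2$ factor absorbed by rescaling). With the stated penalty $136\sigma\sqrt{\log d / n}$ dominating $\norm{\X^\top\xi}_\infty/n$ with probability $1-O(d^{-3})$, the usual oracle inequality gives $\norm{\betahat-\beta}_2 \lesssim \sigma\sqrt{s\log d / n}$. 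For $\muhat = \frac{1}{n+N}\sum A_i X_i$: since $\EE[AX] = \EE[A\cdot A\mu] = \mu$ (using $A^2=1$ and $A\perp\xi$), $\muhat$ is unbiased, and $A_iX_i - \mu$ is a sub-Gaussian vector in $\RR^d$, so a standard vector-Bernstein / $\chi^2$-concentration bound gives $\norm{\muhat-\mu}_2 \lesssim \sqrt{d/(n+N)}$ with probability $1 - O(d^{-3})$ (or exponentially small in $d$). Combining via a union bound over the two high-probability events gives the claimed rate with probability $1 - cd^{-3}$. The last thing to check is that on the intersection of these events, $\estimatedParameterTuple = (\betahat,\muhat) \in \parameterSpaceTupleLarge$, i.e., that $\norm{\muhat}_2$ stays within the radius used to define $\parameterSpaceTupleLarge$ — this follows since $\norm{\muhat-\mu}_2$ is small and $\norm{\mu}_2$ is bounded by assumption, so one enlarges $\parameterSpaceTupleLarge$ by a constant additive factor.

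The main obstacle I anticipate is not any single estimate but the bookkeeping of constants in \cref{ass:LocallyLipschitz}: the Lipschitz constant $\zeta$ of the gradient map in the parameters involves $\norm{\vartheta}_2$, so one must first argue that $\minimizerWeighted$ (and hence the relevant evaluation points) lie in a bounded region — here one uses that $\minimizerWeighted$ minimizes $\weightrisk\Lrisk + \weightfair\Lfair$, that $\Lrisk$ is strongly convex with minimizer near $\beta$ (norm $\leq 1$), and that adding the nonnegative $\weightfair\Lfair$ term only shrinks the solution toward the origin, so $\norm{\minimizerWeighted}_2 \lesssim 1$ uniformly in $\weightTuple$. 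A secondary subtlety is making sure the pooled-sample LASSO (labeled data only enters $\betahat$, but $\muhat$ uses labeled + unlabeled) is handled with the correct sample sizes $n$ versus $n+N$, which is what produces the asymmetric $\sqrt{s\log d/n}$ versus $\sqrt{d/(n+N)}$ structure in the final bound. Once the region is bounded and the two concentration bounds are in place, the corollary follows by direct substitution into \cref{thm:MinimaxRateStrongConvexity}.
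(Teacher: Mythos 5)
Your proposal follows essentially the same route as the paper's proof: verify \cref{ass:StrongConvexitySmoothness,ass:LocallyLipschitz} for the explicit gradients of $\Lrisk$ and $\Lfair$, show $(\betahat,\muhat)\in\parameterSpaceTupleLarge$ on the intersection of a LASSO event and a $\chi^2$-concentration event for $\muhat$, and plug into \cref{thm:MinimaxRateStrongConvexity}, with the $1/\weightrisk$ surviving only on the $\mu$-term. The only place where you gesture rather than compute is the bound $\norm{\minimizerWeighted}_2\lesssim 1$ (your ``adding $\weightfair\Lfair$ shrinks the solution'' heuristic is not valid in general); the paper settles this with the explicit closed form $\minimizerWeighted=(\weightfair\mu\mu^\top+\weightrisk(\identity_d+\mu\mu^\top))^{-1}\weightrisk(\identity_d+\mu\mu^\top)\beta$, whose norm is at most $\norm{\beta}_2\le 1$ since the two matrices commute.
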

If $\mu$ is fixed and known, the matching minimax lower bound on the first term follows from the same arguments as in \cref{cor:MultiDistributionOurs} without any further assumptions.
%Hence, for the estimator to work well, we need to make sure to observe enough samples from a minority group $a$.

\subsection{Necessity of unlabeled data}
\label{subsec:necessity-unlabeled-data}
% Our upper bounds from \cref{thm:MinimaxRateStrongConvexity,prop:LipschitzBound} imply that a small estimation error of all parameters leads to a small estimation error of the Pareto set. In the previous section, we saw two concrete examples of this. 
\Cref{cor:MultiDistributionOurs,cor:FairnessCorollary} showed that provided with sufficiently much unlabeled data\textemdash so that we can estimate the covariance matrices well\textemdash and if the signal is sparse, we can learn the Pareto set efficiently. The corresponding lower bounds show that this rate is tight when the covariance matrices are known. But is the unlabeled data really necessary?
%We now show that indeed, estimating the marginal distribution well using enough unlabeled data can indeed be necessary.
%In \cref{ex:fixed-design-linear-regression} and hence also \cref{prop:InsufficiencyPluginRegularization,prop:TwostageRateFixedDesign}, we ``know the covariate distribution'', as we are in a fixed-design setting. 
%This allows us to highlight the insufficiency of direct regularization, but also hides an important component that we now observed in \cref{cor:MultiDistributionOurs,cor:FairnessCorollary}: The role of unlabeled data.
%In both corollaries, we saw that the two-stage estimator only benefits from the sparsity, provided there is enough unlabeled data. 

We now show that this phenomenon is not an artifact of our bounds
%In an instance of  \cref{ex:multiple-linear-regression}, we show that having enough unlabeled data is \emph{necessary} for mitigating a sample complexity that scales linearly in $d$.
%for benefiting from the sparse ground truths at all, by considering a case of \cref{ex:multiple-linear-regression}.
%\cref{prop:NecessityUnlabeledData} implies that 
using an instance of \cref{ex:multiple-linear-regression}.
In particular, we prove that even when %we know the ground truths 
the ground truths $\beta_k$ are precisely known, having enough unlabeled data to accurately estimate the covariance matrices $\covariance_k$ 
is crucial to efficiently estimate the Pareto set in high dimensions. 
To gain some intuition for this,
we illustrate the effects of both regularization and the estimation of the covariance matrix in \cref{fig:covariance-effect}\subref{fig:covariance-intuition}. Notice how the estimators $\tsEstimatedMinimizerWeighted$ are only close to the Pareto set if both $\beta_\objectiveindex$ and $\covariance_k$ are estimated well.
\begin{restatable}[Necessity of unlabeled data]{prop}{NecessityUnlabeledData}
\label{prop:NecessityUnlabeledData}
    Consider the special case of the statistical model $\distributionSetTuple$ in \cref{ex:multiple-linear-regression}, by restricting $\beta_1=-\beta_2=\beta$ for some $\beta$ with $\norm{\beta}_2=1$ that is fixed and known. 
    Further, for $k\in \mathset{1,2}$, let 
    %The samples are independent draws from 
    $\distributionXxY^k$ be such that $\distributionX^k = \cN(0,\covariance_k)$
    %\times \distributionX^2=\cN(0,\covariance_1) \times \cN(0,\covariance_2)$, 
    with symmetric, unknown covariance matrices $\covariance_k$ satisfying $\frac{1}{2}\I_d  \preceq \covariance_k \preceq \frac{3}{2} \I_d$. %, but unknown.
    Consider $\weightTuple=(1/2,1/2)$ and we observe $n_k=n$ labeled and $N_k=N$ unlabeled datapoints each. Then, if $d\geq 3$ and $\sqrt{d/(512(n+N))} \leq 1/(4e)$, it holds that
    \begin{equation*}
        \minimax(\distributionSetTuple)\gtrsim \sqrt{\frac{d}{n+N}}.
    \end{equation*}
\end{restatable}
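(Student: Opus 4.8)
The plan is to reduce the problem to a local packing argument over the unknown covariance matrices, exploiting the fact that the Pareto point is available in closed form. Since $\weightTuple=(1/2,1/2)$ and we use linear scalarization, the first-order optimality condition $\covariance_1(\minimizerWeighted-\beta_1)+\covariance_2(\minimizerWeighted-\beta_2)=0$ together with $\beta_1=-\beta_2=\beta$ gives $\minimizerWeighted=(\covariance_1+\covariance_2)^{-1}(\covariance_1-\covariance_2)\beta$. For the lower bound I would restrict $\distributionSetTuple$ to a sub-family: fix $\covariance_2=\I_d$, assume without loss of generality (by rotational invariance of the Gaussian and of $\norm{\cdot}_2$, and of the class $\tfrac12\I_d\preceq\covariance\preceq\tfrac32\I_d$) that $\beta=e_1$, and let $\covariance_1=\I_d+\Delta_v$ range over symmetric rank-two perturbations $\Delta_v=e_1v^\top+ve_1^\top$ indexed by $v$ in the $(d-1)$-dimensional sphere $\mathset{v\in\RR^d:v\perp e_1,\ \norm{v}_2=r}$, for a radius $r$ to be chosen. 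Then $\norm{\Delta_v}_{\mathrm{op}}=\norm{v}_2=r$, so the eigenvalue constraint holds as soon as $r\le 1/2$; and since $\beta$ is known, the labels $Y=\inner{X}{\beta}+\xi$ and the (fixed) second distribution carry no information about $v$, so by the chain rule for KL each hypothesis is identified only through $n+N$ i.i.d.\ samples of $X\sim\cN(0,\I_d+\Delta_v)$.

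The key algebraic step is to linearize $v\mapsto\minimizerWeighted$. Using $\Delta_v e_1=v$ we have $\minimizerWeighted=(2\I_d+\Delta_v)^{-1}v$; decomposing $\minimizerWeighted=a e_1+w$ with $w\perp e_1$ and solving the resulting $2\times2$ system yields $w=\frac{1}{2-\norm{v}_2^2/2}\,v$ and $a=-\frac{\norm{v}_2^2}{4-\norm{v}_2^2}$. Hence $\proj{e_1^\perp}{\minimizerWeighted}=c_r\,v$ is a scalar multiple of $v$ with $c_r=(2-r^2/2)^{-1}\ge\tfrac12$ for $r\le 1/2$. Since orthogonal projection is $1$-Lipschitz, for any two radius-$r$ perturbations $v,v'$ this gives $\norm{\minimizerWeighted(v)-\minimizerWeighted(v')}_2\ge\norm{\proj{e_1^\perp}{\minimizerWeighted(v)}-\proj{e_1^\perp}{\minimizerWeighted(v')}}_2=c_r\norm{v-v'}_2\ge\tfrac12\norm{v-v'}_2$, converting separation of the covariances into separation of the Pareto points with no lower-order loss.

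With this in hand I would apply Fano's method. Take a packing of the radius-$r$ sphere in $e_1^\perp$, e.g.\ $v^{(\tau)}=\frac{r}{\sqrt{d-1}}\tau$ for $\tau$ ranging over a Gilbert--Varshamov code in $\mathset{\pm1}^{d-1}$ with minimum Hamming distance $\ge (d-1)/4$; this gives $M$ hypotheses with $\log M\gtrsim d$, each of norm exactly $r$, and pairwise distances $\ge r$ — hence induced Pareto points pairwise $\gtrsim r$-separated. For the information term, the Gaussian KL identity together with the well-conditioning (all covariances have spectrum in $[\tfrac12,\tfrac32]$) gives $D_{\mathrm{KL}}\pr{\cN(0,\I_d+\Delta_v)\,\|\,\cN(0,\I_d+\Delta_{v'})}\lesssim\norm{\Delta_v-\Delta_{v'}}_{\mathrm F}^2=2\norm{v-v'}_2^2\lesssim r^2$, so the pairwise KL between the full data distributions is $\lesssim (n+N)r^2$. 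Plugging into Fano, the minimax $\ell_2$-error is $\gtrsim r$ whenever $(n+N)r^2\lesssim d$; choosing $r\asymp\sqrt{d/(n+N)}$ — admissible precisely under the stated smallness condition $\sqrt{d/(512(n+N))}\le 1/(4e)$, which ensures both $r\le 1/2$ and that the constants in Fano (and in the packing bound) line up — yields $\minimax(\distributionSetTuple)\gtrsim\sqrt{d/(n+N)}$.

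The step I expect to be the main obstacle is the linearization: the map from covariances to the Pareto point is nonlinear (it involves $(\covariance_1+\covariance_2)^{-1}$), and a naive Taylor expansion would only give separation up to an $O(r^2)$ additive slack that could swamp the $\Theta(r)$ signal. The projection-onto-$e_1^\perp$ identity circumvents this by exploiting the rank-two structure of $\Delta_v$, which is exactly why the construction uses symmetric rank-two perturbations supported on $\mathrm{span}(e_1,v)$ rather than generic ones. The remaining pieces — the Gilbert--Varshamov packing, the Gaussian KL computation, and bookkeeping the constants to match the precise hypothesis on $d,n,N$ — are routine.
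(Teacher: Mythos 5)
Your proof is correct and follows the same overall strategy as the paper's (Fano's method over a local packing of covariance matrices, engineered so that the induced Pareto points inherit the packing separation), but the two key constructions are genuinely different. The paper perturbs \emph{both} covariances antisymmetrically, setting $\covariance_1=\identity_d+A(v)$ and $\covariance_2=\identity_d-A(v)$ with $A(v)=v\beta^\top+\beta v^\top-\inner{v}{\beta}\beta\beta^\top$, so that $\covariance_1+\covariance_2=2\identity_d$ and $A(v)\beta=v$; this makes the map $v\mapsto\minimizerWeighted$ \emph{exactly} the identity on a full ball $tB_2^d$, with no inversion or linearization to control. You instead fix $\covariance_2=\identity_d$, perturb only $\covariance_1$ by the rank-two $\Delta_v=e_1v^\top+ve_1^\top$ with $v\perp e_1$, and recover exact (not merely first-order) separation by showing $\proj{e_1^\perp}{\minimizerWeighted}=c_r v$ with $c_r\geq 1/2$ on the radius-$r$ sphere; this is a clean way to defeat the $O(r^2)$ Taylor slack you correctly identify as the danger, at the mild cost of packing a $(d-1)$-dimensional sphere rather than a $d$-dimensional ball (harmless for $d\geq 3$). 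The information-theoretic step also differs: you invoke the max-pairwise-KL form of Fano with the standard Gaussian KL bound $D_{\mathrm{KL}}\lesssim\norm{\Delta_v-\Delta_{v'}}_{\mathrm F}^2$ under well-conditioning, whereas the paper bounds the mutual information $I(Z;J)$ directly via a second-order expansion of $\log\det$ (its Lemma on the concavity defect of $\log\det$), yielding $I(Z;J)\leq 32(n+N)t^2$. Both routes close; yours is arguably more elementary (no auxiliary log-det lemma), while the paper's identity-map construction and mutual-information bound make the constant bookkeeping slightly cleaner and reuse the same $A(v)$ device as in its Proposition on the insufficiency of direct regularization. One small point worth making explicit if you write this up: your reduction to $n+N$ effective samples uses that the conditional law of $Y$ given $X$ does not depend on $\covariance_1$ (since $\beta$ is known), so labeled and unlabeled covariate draws contribute identically to the KL --- you state this correctly, and it is exactly why the bound degrades in $n+N$ rather than $n$.
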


Note that, of course, if the covariance matrices have sparse structure, then the amount of labeled data necessary also reduces \citep[Section 6.5]{Wainwright2019}.
The proof of \cref{prop:NecessityUnlabeledData} is provided in \cref{subsec:proof-NecessityUnlabeledData} and follows a similar idea to \cref{prop:InsufficiencyPluginRegularization}, visualized in \cref{fig:proof_illustration}.

%Importantly, this is possible with unlabeled data only.
%Note that this is in stark contrast to single-objective learning, where unlabeled data does not necessarily help the learning process unless there are additional assumptions on the model \cite{?}.

\section{EXPERIMENTS}
\label{sec:experiments}
\begin{figure*}
    \centering
    \subfloat{ % []
        \includegraphics[height=0.175\linewidth]{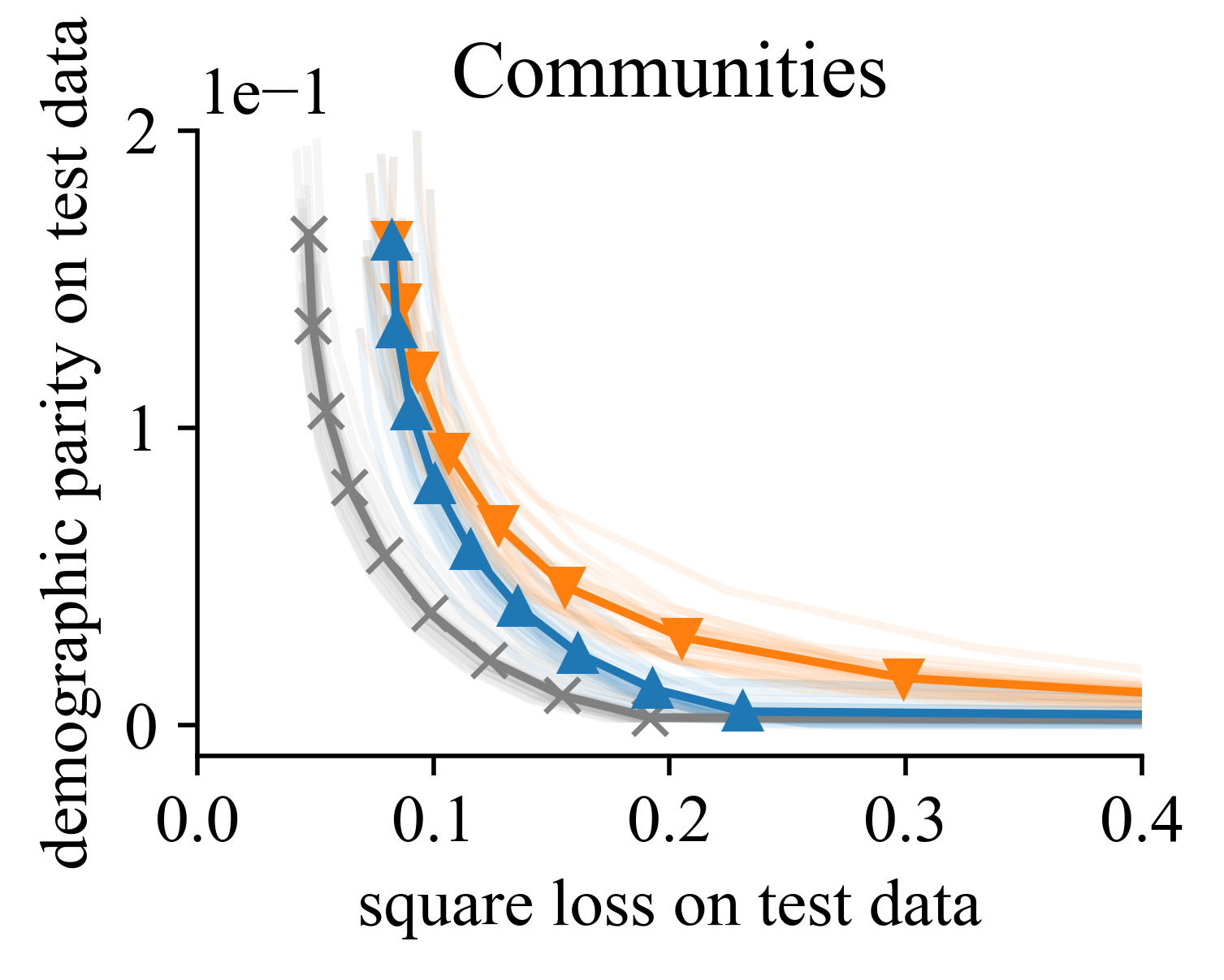}
        \label{fig:crimes-communities}
    }\hspace{-1em}
    \subfloat{ % []
        \includegraphics[height=0.175\linewidth]{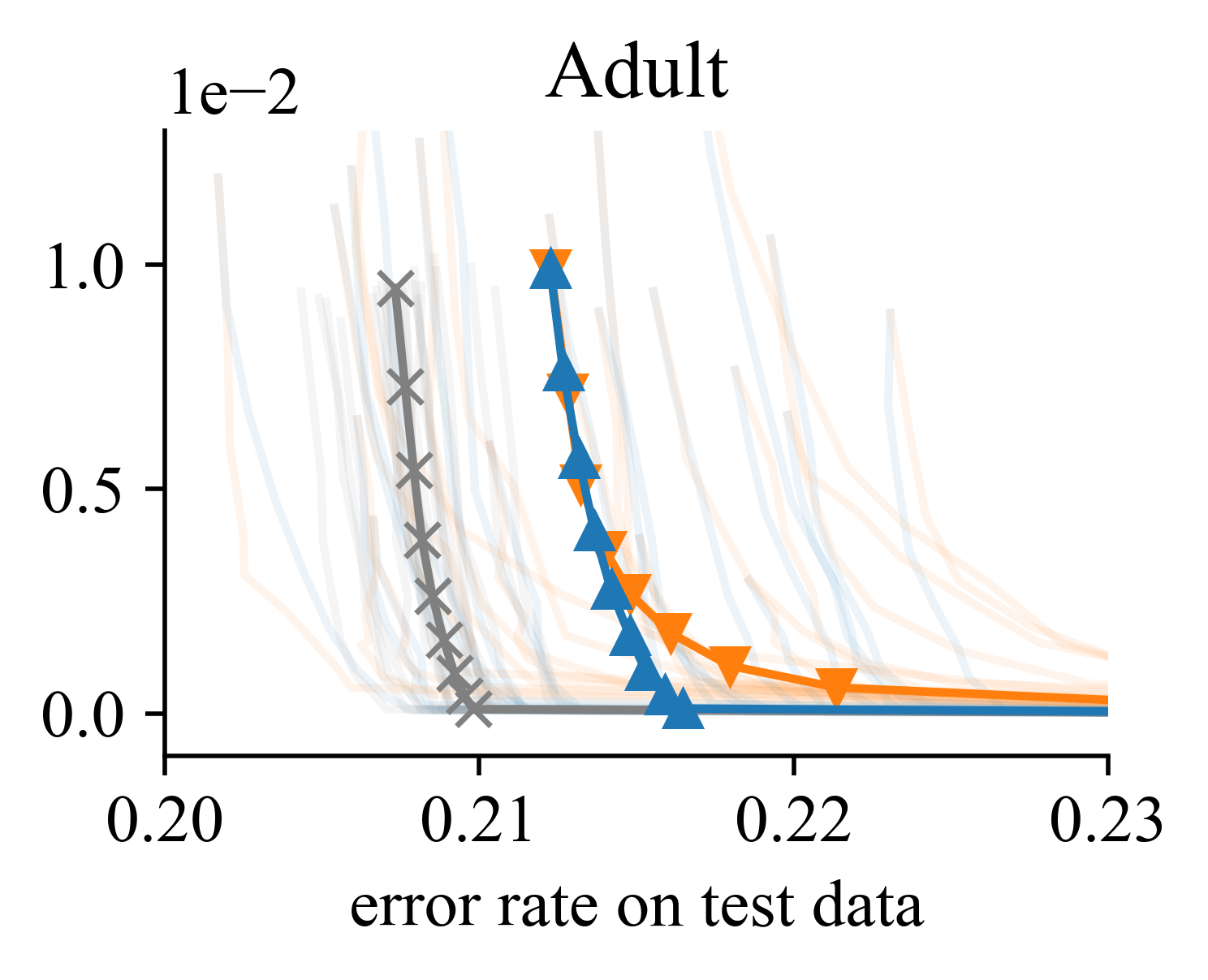}
        \label{fig:adult}
    }\hspace{-1em}
    \subfloat{ %[]
        \includegraphics[height=0.175\linewidth]{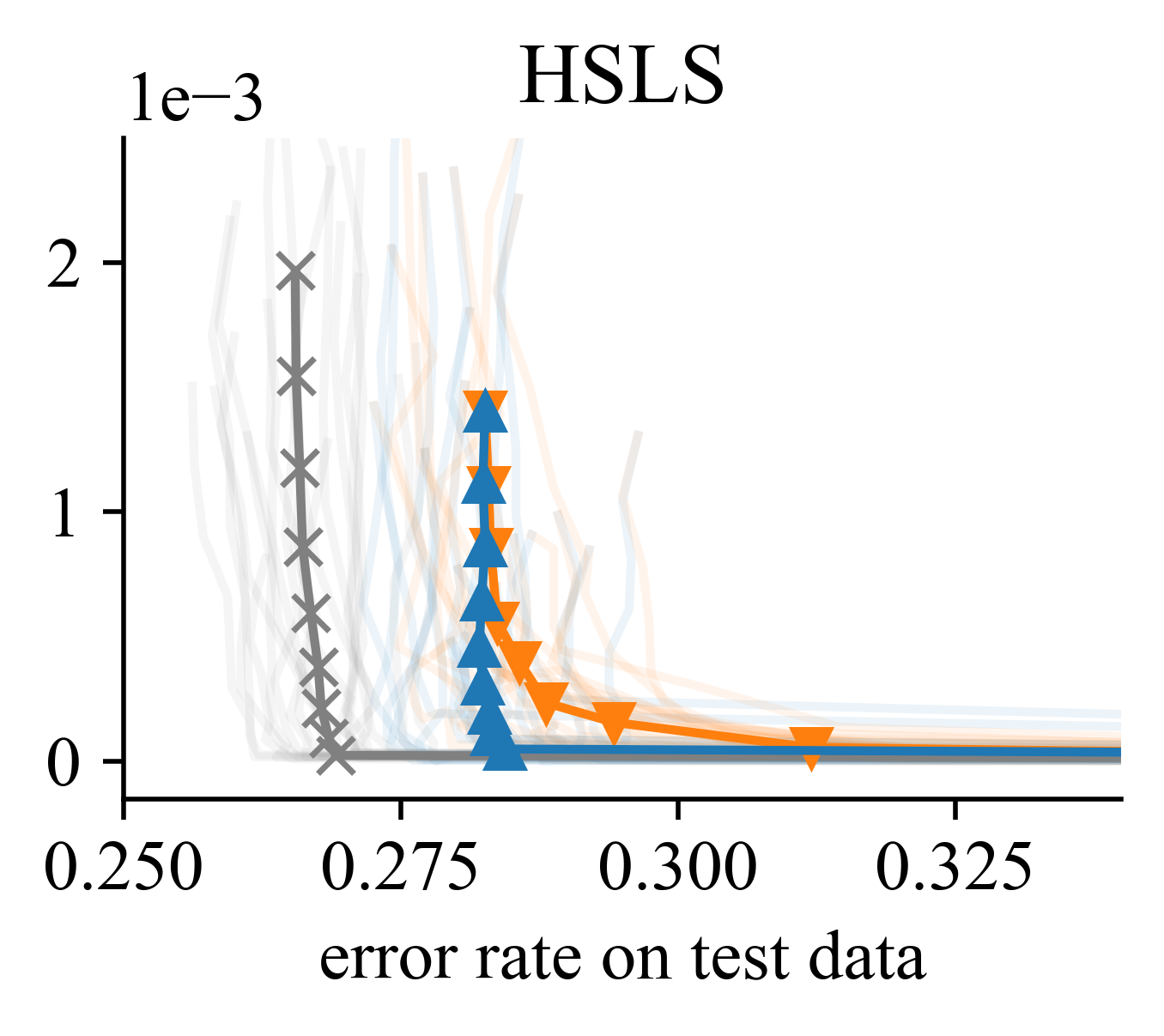}
        \label{fig:hsls}
    }\hspace{-1em}
    \subfloat{ %[ \hspace{2cm} ]
        \includegraphics[height=0.175\linewidth]{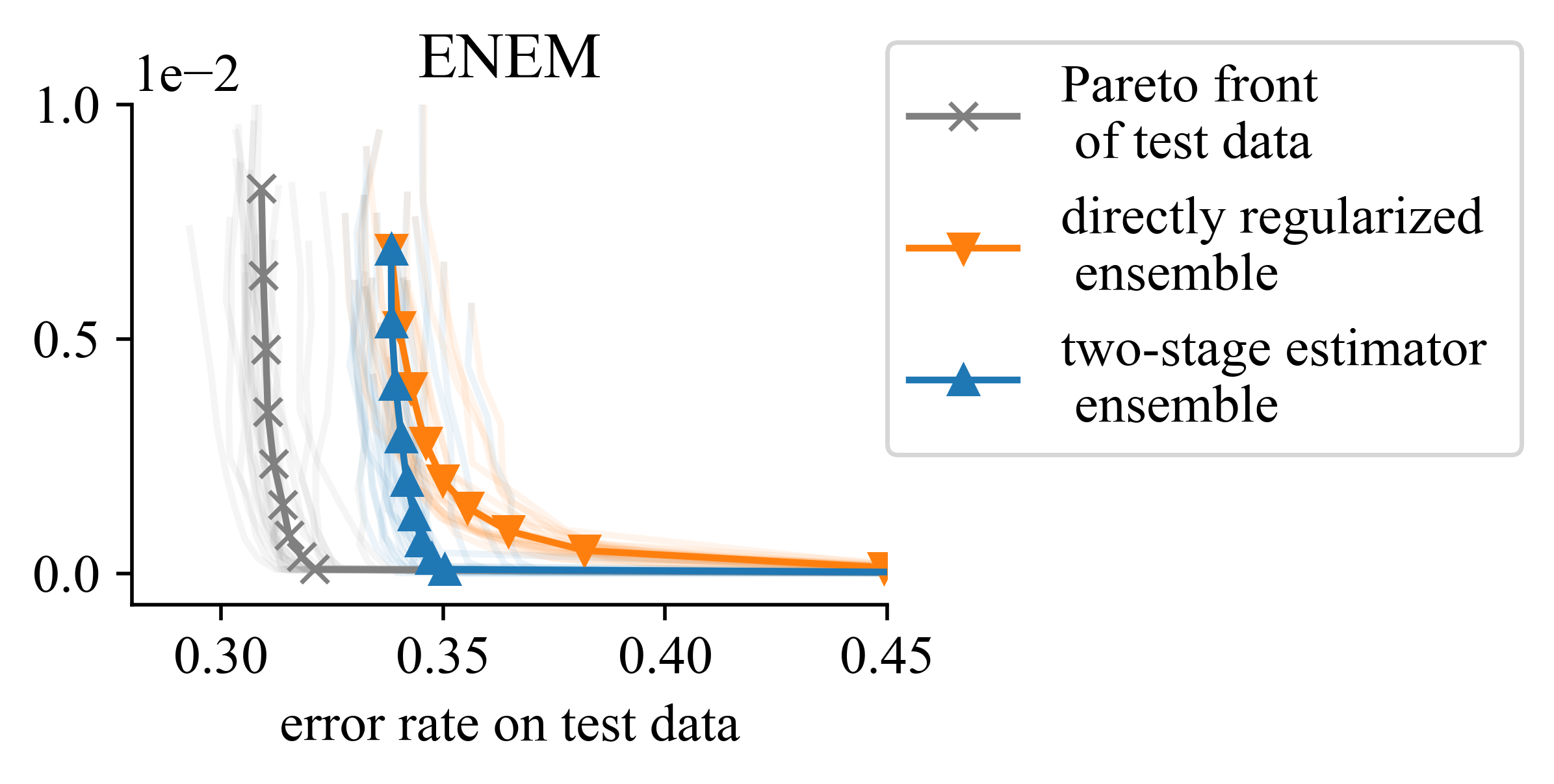}
        \label{fig:enem}
    }
    \caption{\small{Pareto fronts on test data and their estimates using direct regularization (orange) and our method (blue) 
    for the fairness experiments described in \cref{subsec:fairness-risk-experiment,subsec:fairness-datasets}, using data from \citet{Redmond2002Communities,Becker1996Adult,Jeong2022fairness,Alghamdi2022beyond}. 
    Each experiment is repeated 20 times and we plot the results (transparent), as well as their average (thick lines).}
    %\protect\subref{fig:synthetic-multi-dist}: Discussed in \cref{subsec:}
    %\protect\subref{fig:synthetic-multi-dist-2}: 
    %\protect\subref{fig:crimes-communities}: 
    %\protect\subref{fig:kernel-regression}:}
    }
    \label{fig:experiments}
\end{figure*}

In this section, we present some experiments on synthetic  data for \cref{ex:multiple-linear-regression} and real data for \cref{ex:fairness-risk-trade-off} to illustrate the results from \cref{subsec:examples-theory}.

\paragraph{Ensembles and hypernetworks.} So far, we have considered families of estimators $\{\estimatedMinimizerWeighted | \weightTuple\in \Delta^K \}$. In practice, computing such families is not possible, as it would require storing an infinite amount of estimators. Instead, in our experiments, we use two methods for approximating these sets. For one, we use what we call an \emph{ensemble}, which is a finite set of estimators $\{\estimatedMinimizerWeighted | \weightTuple\in \Lambda \}$, where $\Lambda$ is some discretization of the simplex; we use $\abs{\Lambda}=10$.
Secondly, we also use so-called \emph{hypernetworks} \citep{Navon2021learning}, which are neural networks that learn the Pareto set as a function $\widehat h:\Delta^K\to \RR^d$, $\weightTuple\mapsto \estimatedMinimizerWeighted$. For example, for linear models, a prediction on $x$ is then given by $\langle\widehat{h}(\weightTuple),x\rangle$. We give implementation details of hypernetworks for the directly-regularized and two-stage estimators in \cref{sec:implementation-hypernetworks}.

\subsection{Multiple sparse linear regression}
\label{subsec:multi-dist-regression}

The first simulation is on synthetic data in the setting of \cref{ex:multiple-linear-regression} for two objectives. We present two experiments.
In the first experiment, we fix $\weight_k=1/2$, $d=50$, $n_k=15$ and two arbitrarily chosen covariance matrices $\covariance_1,\covariance_2$. The covariates are then sampled from Gaussians. We vary the sparsity $s$ of two random ground-truths (normalized in $\ell_2$-norm) between $5$ and $45$, and the number of additional unlabeled datapoints $N_i$ between $15$ and $50$. For each configuration, we repeat the experiment $10$ times and show the resulting average log-excess scalarized loss $\log \excessscalarization$ from \eqref{eq:excess-scalarized-loss} of the two-stage estimator with appropriately chosen $\ell_1$-penalty in \cref{fig:covariance-effect}\subref{fig:covariance-simulation}.
The smaller the number of non-zero entries $s$ and the more unlabeled data are available, the better the estimator performs, as predicted by \cref{cor:MultiDistributionOurs}.
In the second experiment, we compare our estimator with the directly regularized plug-in estimator for fixed dimension $d=80$ and fixed sample sizes $n_i=25$, $N_i = 60$, with two randomly chosen $1$-sparse ground truths and covariance matrices. 
\cref{fig:covariance-effect}\subref{fig:multiple_regression} shows the Pareto fronts of 50 different runs and their point-wise average. 
%\footnote{See \cite{Bassi2021Uncertainty} on quantifying uncertainty for Pareto fronts.} 
As expected, the biggest benefit of our method lies in the cases where $\weight_1\approx \weight_2$.

%\vspace{-0.2in}
\subsection{Fairness-risk trade-off in linear regression}
\label{subsec:fairness-risk-experiment}

We also apply our estimator to four fairness datasets described in \cref{subsec:fairness-datasets}: The Communities and Crime dataset \citep{Redmond2002Communities}, The Adult dataset \citep{Becker1996Adult}, The HSLS dataset \citep{Ingels2011high,Jeong2022fairness}, and the ENEM dataset \citep{Alghamdi2022beyond}. 
To simulate the (moderately) high-dimensional regime, we both subsample and add noisy features, as described in \cref{subsec:fairness-datasets}. 
Our two-stage estimator and the directly regularized estimator \eqref{eq:penalized-scalarization} are then applied using an $\ell_1$-penalty.
We repeat all experiments $20$ times and show the resulting estimated Pareto fronts, as well as their point-wise average in \cref{fig:experiments}.
On all datasets the two-stage estimator outperforms the directly regularized estimator in parts of the Pareto front where the weights are bounded away from $1$.

%\vspace{-0.1in}
\section{RELATED WORK}
\label{subsec:related-work}

Multi-objective learning is a rich field of research \citep{Jin2007multi,Jin2008Pareto}, rooted in multi-objective optimization \citep{Deist2023Multiobjective,Shah2016Pareto,Duh2012Learning,Hu2023Revisiting} and with connections to many adjacent fields of machine learning such as 
fairness \citep{Yaghini2023Learning, tifrea2023, Martinez2020Minimax},
%efficient communication in 
federated or distributed learning \citep{Kairouz2021Advances,Li2021Ditto,Wang2017Efficient,Wang2020Tackling,Lee2017Communication}, 
%avoidance of catastrophic forgetting in 
continual learning \citep{Parisi2019Continual,Wang2022SparCL}, reinforcement learning \citep{Hayes2022Practical,VanMoffaert2014Novel}, transfer learning \citep{Li2021Transfer} and OOD generalization \citep{Chen2023pair}. 
%. Many works focus on trade-offs between specific objectives (cf.\ \Cref{sec:intro}), and others focus on leveraging particular multi-objective optimization techniques in a learning setting \citep{Deist2021Multiobjectivelearningpredictpareto,Deist2023Multiobjective,Shah2016Pareto,Duh2012Learning,Hu2023Revisiting, tifrea2023}.
%Just to name some examples, 
%In the latter category, 
MOL is closely related to, but distinct from, 
multi-\emph{task} learning. The latter, for example, considers settings where sharing parameters across multiple learning tasks is \emph{helpful} for training task-specific models \citep{Caruana1997Multitask,Baxter2000Model,Sener2018Multitask,Li2020Knowledge}. 
The use of sparsity for multi-task learning has been studied, e.g., in \cite{Lounici2009Taking,Guo2011Sparse}.
%people look at sparsity in context of MOL, not for learning gain, but for 
%Ideas from MOL also frequently appear in adjacent learning fields with sparsity studied for various reasons \fy{didn't understand from "with ..."}, such as efficient communication in federated or distributed learning \citep{Kairouz2021Advances,Li2021Ditto,Wang2017Efficient,Wang2020Tackling,Lee2017Communication}, avoidance of catastrophic forgetting in continual learning \citep{Parisi2019Continual,Wang2022SparCL}, reinforcement learning \citep{Hayes2022Practical,VanMoffaert2014Novel}, and transfer learning \citep{Li2021Transfer}. 
While several works consider special cases of MOL, such as multi-distribution learning  \citep{Haghtalab2023Unifying,Zhang2024Optimal} or fairness \citep{xian23},
%statistical guarantees for MOL have only been 
%The question of provable 
generalization of MOL in its general form is still rather poorly studied, 
with few exceptions like \cite{Sukenik2024generalization,Cortes2020Agnostic,Chamon2020probably}. These works however do not yield meaningful bounds in the high-dimensional regime that we study in this paper. 

Finally, we discuss the body of work on \emph{Pareto set learning} (PSL) \citep{Navon2021learning,Lin2019Pareto,Lin2022pareto,Dimitriadis2023pareto,Chen2024efficient,Tang2024towards} where the word ``learning'' does not refer to generalization as in statistical learning: 
%In this line of work, the work "learning" 
%PSL has somewhat of a misleading name: 
Albeit often being applied in machine learning settings, 
the performance of PSL is usually not measured by comparing the ``learned'' estimators with the ``true'' Pareto-optimal solutions that optimize trade-offs on the test data. 
%the aim of PSL is usually not to improve the generalization error. 
Instead, it aims to reduce the memory or runtime of MOO methods by harnessing, e.g., neural networks to approximate some \emph{deterministic} Pareto set. %Usually, off-the-shelf PSL only uses direct (or no) regularization, and hence is not expected to have a statistical advantage. 
That said, most PSL methods can be used in our two-stage framework in stage 2
%it can be combined with our two-stage method by using the objectives from stage 2 
(cf.\ \cref{subsec:multi-dist-regression,sec:implementation-hypernetworks}).

\section{CONCLUSIONS \& FUTURE WORK}
\label{sec:discussion}
In this work, we propose an estimator for the Pareto front of MOL problems that performs well in the high-dimensional regime by leveraging the sparsity of distributional parameters and unlabeled data. We investigate the estimator theoretically, with a key strength of the analysis being that it is agnostic to the estimators of the distributional parameters.
Further, we prove the optimality of the proposed estimator under certain conditions. 
While the focus of this work is primarily on sparsity, the estimator can also, in principle, exploit other forms of low-dimensional structure. Through synthetic and real experiments, we demonstrate the good performance of our estimator in applications.

Our work opens up many exciting future research directions.
For one, we leave it as future work to derive a more general framework for obtaining lower bounds in the MOL setting beyond identifiability.
We also note that other results from the stability literature could relax the convexity in \Cref{ass:StrongConvexitySmoothness}. Moreover, there are other choices of objectives where our theory can be applied, e.g., in the robustness-accuracy trade-off \citep{Raghunathan2020Understanding}. Finally, the seemingly important role of unlabeled data in multi-objective learning deserves to be investigated further.
%It remains an exciting direction for future research to demonstrate that the proposed estimator performs well in other multi-objective problems. 

\section*{Acknowledgments}
We thank Junhyung Park for valuable feedback on the manuscript.
AT was supported by a PhD fellowship from the Swiss Data Science Center. 
TW was supported by the SNF Grant 204439. This work was done in part while TW 
and FY were visiting the Simons Institute for the Theory of Computing.

\bibliographystyle{abbrvnat}
{
\small
\bibliography{references}
}

\newpage
\section*{Checklist}
\begin{comment}
% %%% BEGIN INSTRUCTIONS %%%
The checklist follows the references. For each question, choose your answer from the three possible options: Yes, No, Not Applicable.  You are encouraged to include a justification to your answer, either by referencing the appropriate section of your paper or providing a brief inline description (1-2 sentences). 
Please do not modify the questions.  Note that the Checklist section does not count towards the page limit. Not including the checklist in the first submission won't result in desk rejection, although in such case we will ask you to upload it during the author response period and include it in camera ready (if accepted).

\textbf{In your paper, please delete this instructions block and only keep the Checklist section heading above along with the questions/answers below.}
% %%% END INSTRUCTIONS %%%
\end{comment}

 \begin{enumerate}

 \item For all models and algorithms presented, check if you include:
 \begin{enumerate}
   \item A clear description of the mathematical setting, assumptions, algorithm, and/or model. Yes (\cref{sec:setting})
   \item An analysis of the properties and complexity (time, space, sample size) of any algorithm. Yes (\cref{sec:theoretical-guarantees})
   \item (Optional) Anonymized source code, with specification of all dependencies, including external libraries. No
 \end{enumerate}

 \item For any theoretical claim, check if you include:
 \begin{enumerate}
   \item Statements of the full set of assumptions of all theoretical results. Yes (Assumptions \ref{ass:Parameterization},\ref{ass:StrongConvexitySmoothness},\ref{ass:Injectivity})
   \item Complete proofs of all theoretical results. Yes (\cref{sec:deferred-proofs})
   \item Clear explanations of any assumptions. Yes (\cref{sec:theoretical-guarantees})  
 \end{enumerate}

 \item For all figures and tables that present empirical results, check if you include:
 \begin{enumerate}
   \item The code, data, and instructions needed to reproduce the main experimental results (either in the supplemental material or as a URL). Yes
   \item All the training details (e.g., data splits, hyperparameters, how they were chosen). Yes (\cref{sec:experiments})
   \item A clear definition of the specific measure or statistics and error bars (e.g., with respect to the random seed after running experiments multiple times). Yes (\cref{sec:experiments})
   \item A description of the computing infrastructure used. (e.g., type of GPUs, internal cluster, or cloud provider). No (The experiments are very small-scale, run on a standard MacBook Pro in under 5 minutes)
 \end{enumerate}

 \item If you are using existing assets (e.g., code, data, models) or curating/releasing new assets, check if you include:
 \begin{enumerate}
   \item Citations of the creator If your work uses existing assets. Yes (\cref{sec:experiments})
   \item The license information of the assets, if applicable. Not Applicable (We are not releasing new or existing assets)
   \item New assets either in the supplemental material or as a URL, if applicable. Not Applicable (We are not releasing new or existing assets)
   \item Information about consent from data providers/curators. Not Applicable (All datasets are licensed under a Creative Commons Attribution 4.0 International (CC BY 4.0) license.)
   \item Discussion of sensible content if applicable, e.g., personally identifiable information or offensive content. Not Applicable
 \end{enumerate}

 \item If you used crowdsourcing or conducted research with human subjects, check if you include:
 \begin{enumerate}
   \item The full text of instructions given to participants and screenshots. Not Applicable
   \item Descriptions of potential participant risks, with links to Institutional Review Board (IRB) approvals if applicable. Not Applicable
   \item The estimated hourly wage paid to participants and the total amount spent on participant compensation. Not Applicable
 \end{enumerate}

\end{enumerate}

%\usepackage{aistats2025}
% If your paper is accepted, change the options for the package
% aistats2025 as follows:
%
%\usepackage[accepted]{aistats2025}
%
% This option will print headings for the title of your paper and
% headings for the authors names, plus a copyright note at the end of
% the first column of the first page.

% If you set papersize explicitly, activate the following three lines:
%\special{papersize = 8.5in, 11in}
%\setlength{\pdfpageheight}{11in}
%\setlength{\pdfpagewidth}{8.5in}

% If you use natbib package, activate the following three lines:
%\usepackage[round]{natbib}
%\renewcommand{\bibname}{References}
%\renewcommand{\bibsection}{\subsubsection*{\bibname}}

% If you use BibTeX in apalike style, activate the following line:
%\bibliographystyle{apalike}
\onecolumn

\begin{appendices}

\crefalias{section}{appendix}
\crefalias{subsection}{appendix}

\counterwithin{assumption}{section}
\counterwithin{lemma}{section}
\counterwithin{claim}{section}

\section{PRELIMINARIES FROM CONVEX OPTIMIZATION}
\label{sec:basics-convex-optimization}

In this section, we briefly recall some basic concepts from convex optimization. A general introduction to convex optimization can be found, for example, in \cite{Boyd2004Convex, Bubeck2015Convex}.

Let $f:\RR^d\to \RR$ denote a differentiable function with gradient $\nabla f:\RR^d\to\RR^d$.
We recall the definitions of convexity, strict convexity, strong convexity and smoothness:
For some $\strongConvexityParam\geq 0$, the function $f$ is called \emph{$\strongConvexityParam$-strongly convex}, if for all $x,y\in\RR^d$ it holds that $f(x)-f(y)-\inner{\nabla f(y)}{x-y}\geq \frac{\strongConvexityParam}{2}\norm{x-y}_2^2$. If $f$ is $0$-strongly convex, it is simply convex.
For some $0\leq \nu<\infty $, the function $f$ is called \emph{$\nu$-smooth}, if its gradient is $\nu$-Lipschitz, that is, for all $x,y\in\RR^d$ it holds that $\norm{\nabla f(x)-\nabla f(y)}_2\leq \nu\norm{x-y}_2$.

We use the following well-known facts multiple times throughout the proofs of our results in \cref{sec:deferred-proofs}.
Let $f_1,\dots,f_K:\RR^d\to\RR$ be differentiable, ${\weightTuple}\in\Delta^K$ and denote $f_{\weightTuple} = \sum_{\objectiveindex=1}^K \weight_\objectiveindex f_\objectiveindex$.
\begin{enumerate}[label=(A.\arabic*), ref=(A.\arabic*), leftmargin=1cm]
\setlength\itemsep{0em}
    \item \label{item:implication-strong-convexity} If $f$ is $\strongConvexityParam$-strongly convex, then $\forall x,y\in \RR^d,\ \norm{\nabla f(x)-\nabla f(y)}_2\geq \strongConvexityParam \norm{x-y}_2$.
    \item \label{item:implication-smoothness} If $f$ is $\nu$-smooth, then $\forall x,y\in \RR^d,\ \abs{f(x)-f(y)-\inner{\nabla f(y)}{x-y}}\leq \frac{\nu}{2}\norm{x-y}_2^2$.
    \item \label{item:weighted-sum-strongly-convex} If $f_1,\dots,f_K$ are $\strongConvexityParam_k$-strongly convex functions, then the function $f_{\weightTuple}$ is $\sum_{\objectiveindex=1}^K\weight_\objectiveindex\strongConvexityParam_\objectiveindex$-strongly convex. 
    \item \label{item:weighted-sum-smooth} If $f_1,\dots,f_K$ are $\nu_\objectiveindex$-smooth functions, then the function $f_{\weightTuple}$ is $\sum_{\objectiveindex=1}^K\weight_\objectiveindex\nu_\objectiveindex$-smooth.
    \item \label{item:mixture-quadratics} If for $k\in[K]$ we have $f_k(x)= (x-y_k)^\top M_k (x-y_k)$ with $M_k\in\RR^{d\times d}$ and $y_k\in \RR^d$, then it holds that
    \begin{equation*}
        \argmin_{x\in \RR^d} f_{{\weightTuple}}(x) = \mathset{\pr{\sum_{k=1}^K\weight_k M_k}^{\dagger} \pr{\sum_{k=1}^K\weight_kM_ky_k}+z\setmid z\in\ker\pr{\sum_{k=1}^K\weight_k M_k}},
    \end{equation*}
    where $\dagger$ denotes the Moore–Penrose inverse.
\end{enumerate}

\section{DETAILS FOR THE FAIRNESS-RISK TRADEOFF IN LINEAR REGRESSION}
\label{sec:fairness-regression-appendix}

In this section, we describe in more detail the fairness metric from \cref{ex:fairness-risk-trade-off} that used in \cref{subsec:examples-theory} and \cref{sec:experiments}, and elaborate on its interpretation.

Recall the setting described in \cref{ex:fairness-risk-trade-off}:
The random variables $Y,X,A$ are distributed according to $Y=\inner{X}{\beta}+\xi$ with an $s$-sparse ground-truth $\beta\in\RR^d$, where $\xi\sim \cN(0,\sigma^2)$, and for $a\in\mathset{-1,1}$ we have $(X|A=a)\sim \cN(a\mu,\identity_d)$, where $A$ is a Rademacher random variable that is uniformly distributed on $\mathset{-1,1}$. $A$ represents an observed binary protected group attribute (such as gender or ethnicity), of two groups that have different covariate means, which for simplicity we model as $\EE\br{X|A=\pm 1}=\pm \mu$.

In this context, a commonly used fairness metric is called \emph{demographic parity}, which asserts that predictions of the machine learning model $f_\vartheta$ are independent of the group attribute $A$, that is, the model $f_\vartheta$ satisfies demographic parity, if
\begin{equation*}
    (f_\vartheta(X)|A=1) \stackrel{\law}{=} (f_\vartheta(X)|A=-1)
\end{equation*}
where $\stackrel{\law}{=}$ denotes equality in distribution. However, the assertion of demographic parity is quite strong, as it imposes an exact equality. Instead, often it is preferred to measure how much the predictor $f_\vartheta$ violates this constraint.

To measure this, we consider a notion of unfairness introduced in a recent line of work \citep{Gouic2020Projectionfairnessstatisticallearning,Chzhen2022,Fukuchi2024Demographic}. The demographic parity score that is based on the $2$-Wasserstein distance. Most of this appendix section is based on \cite{Chzhen2022} and references therein.
Denote $\cP_2(\RR)$ the space of probability distributions $\nu$ on $\RR$ with bounded second moment, that is, if $X\sim \nu$, then $\EE\br{X^2}<\infty$.
The $2$-Wasserstein distance on $\cP_2(\RR)$, denoted $W_2^2$, is defined as
\begin{equation*}
    W_2^2(\mu,\nu)=\inf_{\pi\in \Pi(\mu,\nu)} \int_{\RR^2} \abs{x-y}^2 d\pi(x,y)
\end{equation*}
where $\Pi(\mu, \nu)$ denotes the set of measures on $\RR^2$ with marginals $\mu$ and $\nu$. See \cite[Appendix A.1]{Chzhen2022} for more details.
The fairness notion that measures violation of demographic parity is then defined as the $2$-Wasserstein distance between the group-wise distributions of $\inner{X}{\vartheta}|A=a$, denoted $\law\pr{\inner{X}{\vartheta}|A=a}$ and their barycenter;
\begin{equation}
    \Lfair(\vartheta,\distributionXxY) = \min_{\nu\in \cP_2(\RR)} \Big\{\frac{1}{2} W_2^2\pr{\law(\inner{X}{\vartheta}\setmid A=1), \nu} + \frac{1}{2}W_2^2\pr{\law(\inner{X}{\vartheta}\setmid A=-1), \nu}  \Big\}.
    \label{eq:bary_fairness}
\end{equation}
We now show that under our assumptions, this fairness metric can be rewritten in the simple form $\Lfair(\vartheta,\distributionXxY) =\inner{\mu}{\vartheta}^2$. 
\begin{lemma}
\label{lem:fairness-objective-rewrite}
    Under the distribution described above, we have that
    \begin{equation*}
        \Lfair(\vartheta,\distributionXxY) = \inner{\mu}{\vartheta}^2.
    \end{equation*}
\end{lemma}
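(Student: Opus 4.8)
The plan is to reduce everything to one-dimensional Gaussians and use the quantile-function representation of the $2$-Wasserstein distance on $\RR$. First I would compute the conditional laws of the prediction: since $X\mid A=a \sim \cN(a\mu,\identity_d)$, the scalar $\inner{X}{\vartheta}\mid A=a$ is Gaussian with mean $a\inner{\mu}{\vartheta}$ and variance $\norm{\vartheta}_2^2$. Writing $m:=\inner{\mu}{\vartheta}$ and $s:=\norm{\vartheta}_2$, the two group-wise laws are therefore $\cN(m,s^2)$ and $\cN(-m,s^2)$, whose quantile functions are $Q_{\pm 1}(t) = \pm m + s\,\Phi^{-1}(t)$ for $t\in(0,1)$, with $\Phi$ the standard normal CDF.

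Next I would invoke the classical identity $W_2^2(\rho,\nu) = \int_0^1 (Q_\rho(t)-Q_\nu(t))^2\,dt$ for probability measures on $\RR$, which turns the definition of $\Lfair$ into $\Lfair(\vartheta,\distributionXxY) = \min_{\nu\in\cP_2(\RR)} \int_0^1 \big[\tfrac12(Q_1(t)-Q_\nu(t))^2 + \tfrac12(Q_{-1}(t)-Q_\nu(t))^2\big]\,dt$. For each fixed $t$ the bracket is a strictly convex quadratic in the scalar $Q_\nu(t)$, minimized at the midpoint $\tfrac12(Q_1(t)+Q_{-1}(t)) = s\,\Phi^{-1}(t)$. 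Crucially, this pointwise-optimal choice is itself a legitimate quantile function — it is exactly the quantile function of $\cN(0,s^2)\in\cP_2(\RR)$ — so it simultaneously minimizes the integral. Substituting it back, $Q_{1}(t) - s\Phi^{-1}(t) = m$ and $Q_{-1}(t) - s\Phi^{-1}(t) = -m$, so the integrand is the constant $\tfrac12 m^2 + \tfrac12 m^2 = m^2$, and integrating over $[0,1]$ yields $\Lfair(\vartheta,\distributionXxY) = m^2 = \inner{\mu}{\vartheta}^2$.

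The only point needing a little care is the exchange of the minimization over $\nu\in\cP_2(\RR)$ with the $t$-integral, i.e. verifying that the pointwise minimizer is the quantile function of an honest probability measure with finite second moment; here this is immediate because it is literally $s\,\Phi^{-1}(\cdot)$, but in general it follows since a pointwise average of non-decreasing left-continuous functions is again of that form. An alternative, slightly less self-contained route would restrict the minimization to Gaussian $\nu$ and use the Bures formula $W_2^2(\cN(m_1,s^2),\cN(m_2,s^2)) = (m_1-m_2)^2$ together with the known characterization of the $W_2$-barycenter of Gaussians; the quantile argument is preferable because it directly establishes optimality over all of $\cP_2(\RR)$ with no extra work.
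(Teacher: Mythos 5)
Your proof is correct and reaches the result by the same overall route as the paper: identify the conditional laws $\cN(\pm\inner{\mu}{\vartheta},\norm{\vartheta}_2^2)$, show the barycenter is $\cN(0,\norm{\vartheta}_2^2)$, and evaluate the two Wasserstein distances. The difference is one of self-containedness: the paper outsources both the barycenter characterization and the formula $W_2^2(\cN(m,s^2),\cN(0,s^2))=m^2$ to Lemmas A.1 and A.2 of \citet{Chzhen2022}, whereas you derive both from the one-dimensional quantile representation $W_2^2(\rho,\nu)=\int_0^1(Q_\rho(t)-Q_\nu(t))^2\,dt$ and a pointwise minimization. Your key observation---that the pointwise minimizer $\tfrac12(Q_1(t)+Q_{-1}(t))=s\,\Phi^{-1}(t)$ is itself a legitimate quantile function of an element of $\cP_2(\RR)$, so the interchange of $\min_\nu$ and $\int_0^1$ is justified---is exactly the point that needs care, and you address it correctly. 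The quantile argument buys a proof that establishes optimality over all of $\cP_2(\RR)$ without relying on the cited barycenter lemmas; the paper's version is shorter but less transparent. No gaps.
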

\begin{proof}
    Recall that for $a\in\mathset{-1,1}$ we have $(X|A=a) \sim \cN(a\mu,\identity_d)$, and hence $(\inner{X}{\vartheta}|A=a)\sim \cN(\inner{a\mu}{\vartheta},\norm{\vartheta}_2^2) $. Therefore, by \citet[Lemma A.2]{Chzhen2022}, the optimization problem in \cref{eq:bary_fairness}
    % \begin{equation*}
    %     \min_{\nu\in \distributionSetTuple_2(\RR)} \Big\{\frac{1}{2} W_2^2\pr{\law(\inner{X}{\vartheta}\setmid A=1), \nu} + \frac{1}{2}W_2^2\pr{\law(\inner{X}{\vartheta}\setmid A=2), \nu}  \Big\}
    % \end{equation*}
    is solved by $\nu = \cN(0,\norm{\vartheta}_2^2)$. Further, by \citet[Lemma A.1]{Chzhen2022}, we can plug this into $\Lfair$ and get
    \begin{align*}
        \Lfair(\minimizer, \distributionXxY)= &= \min_{\nu\in \cP_2(\RR)} \Big\{\frac{1}{2} W_2^2\pr{\law(\inner{X}{\vartheta}\setmid A=1), \nu} + \frac{1}{2}W_2^2\pr{\law(\inner{X}{\vartheta}\setmid A=-1), \nu}  \Big\}\\
        &= \frac{1}{2}W_2^2\pr{\cN(\inner{\mu}{\vartheta},\norm{\vartheta}_2^2), \cN(0,\norm{\vartheta}_2^2)} + \frac{1}{2}W_2^2\pr{\cN(\inner{-\mu}{\vartheta},\norm{\vartheta}_2^2), \cN(0,\norm{\vartheta}_2^2)} \\
        &= \frac{1}{2}\inner{\mu}{\vartheta}^2 + \frac{1}{2} \inner{-\mu}{\vartheta}^2 \\
        &= \inner{\mu}{\vartheta}^2,
    \end{align*}
    which concludes the proof.
\end{proof}
Based on this reformulation, it is easy to see that unless $\inner{\mu}{\minimizer}=0$, there is a trade-off between fairness and risk, since the risk is minimized at $\minimizer$ and the demographic parity score is minimized by any vector $\vartheta$ so that $\inner{\mu}{\vartheta}=0$. 
Moreover, this reformulation significantly speeds up the computation, which is why we use it in our experiments in \cref{sec:experiments}. Hence, we are implicitly modeling the data to come from a Gaussian model as described above.

\subsection{Description of fairness datasets}
\label{subsec:fairness-datasets}

We briefly describe the four datasets and their usage in our experiments in \cref{subsec:fairness-risk-experiment}.

\begin{itemize}
    \item For the Communities and Crime dataset \citep{Redmond2002Communities}, the task is to predict the number of violent crimes in a community. The protected attribute is a quantization of the share of white residents of the community.
    To simulate the (moderately) high-dimensional regime for the Communities and Crime dataset (data dimension $d=145$), we subsample uniformly $n=150$ labeled and $N=350$ unlabeled datapoints, and use the remaining samples as test samples to estimate risk and fairness scores from \cref{subsec:examples-theory}.
    \item For the Adult dataset \citep{Becker1996Adult}, the task is to predict the income of individuals, and the protected attribute is their gender. Since the Adult dataset only has dimension $d=13$, additionally to subsampling, we add $1000$ noisy features (sampled from a Gaussian) to artificially increase the data dimension to $d=1013$. We then uniformly sample $n=1000$ labeled and $N=2000$ unlabeled examples, with the remaining samples serving as the test set.
    \item For the high-school longitudinal study (HSLS) dataset \citep{Ingels2011high,Jeong2022fairness}, the target is to predict math test performance of 9th-grade high-school students, and the protected attribute is the students’ ethnicity. As the data dimension is $d=59$, we subsample $n=1000$ labeled and $N=4000$ unlabeled datapoints.
    \item In the ENEM dataset \citep{Alghamdi2022beyond}, the aim is to predict Brazilian college entrance exam scores based on the students' demographic information and socio-economic questionnaire answers, and the protected attribute is also their ethnicity. Here, the dimension is $d=139$, and we subsample $n=2000$ labeled and $N=8000$ unlabeled datapoints.
\end{itemize}

Note that the risk in \cref{ex:fairness-risk-trade-off,subsec:examples-theory,subsec:fairness-risk-experiment} is for regression. The datasets Adult, HSLS and ENEM are usually classification tasks.
Hence, for comparability with other works, we report the error rate rather than the mean squared error in \cref{fig:experiments}, while using the squared loss risk during training\textemdash similar to previous work such as \cite{Berk2017Convexframeworkfairregression}. The benefit of the two-stage estimator empirically transfers between these two metrics.

\section{IMPLEMENTATION OF HYPERNETWORKS}
\label{sec:implementation-hypernetworks}

The hypernetworks $h$ used in \cref{sec:experiments} all have the following architecture. The first layer is linear, where the weight matrix has dimension $128\times K$, the second layer is a component-wise ReLU layer, and the third layer is again linear of dimension $d\times 128$. To select the regularization strengths, we perform validation on held-out sets.
To train the hypernetworks, we use an adapted version of Algorithm 1 in \cite{Navon2021learning}.  We specify the two versions below (\cref{alg:dr-hypernetwork,alg:ts-hypernetwork}), where $\text{Dir}(\cdot)$ denotes the Dirichlet distribution. We use the PyTorch implementation of Adam \citep{Kingma2014adam} to optimize.

\begin{minipage}{0.48\textwidth}
    \begin{algorithm}[H]
    \caption{Training directly regularized hypernetwork}\label{alg:dr-hypernetwork}
        \begin{algorithmic}[1]
           \State Input: number of iterations $T$, candidate regularization strength $\alpha$.
            \For{$t=1,\dots, T$}
            \State Sample $\weightTuple\sim \text{Dir}(\frac{1}{K},\dots, \frac{1}{K})$
            \State Adam step on $\scalarizationObjectiveComposition(h(\weightTuple),\empdistributionXxYTuple) +\alpha\norm{h(\weightTuple)}_1$ 
            \Statex \hspace{1.1em} with respect to the weights of $h$, using learning
            \Statex \hspace{1.1em} rate $10^{-3}$.
            \EndFor \\
            \Return hypernetwork $h$
        \end{algorithmic}
    \end{algorithm}
\end{minipage}
\hfill
\begin{minipage}{0.48\textwidth}
    \begin{algorithm}[H]
    \caption{Training two-stage hypernetwork}\label{alg:ts-hypernetwork}
        \begin{algorithmic}[1]
           \State Input: number of iterations $T$, estimates $\estimatedParameterTuple$.
            \For{$t=1,\dots, T$}
            \State Sample $\weightTuple\sim \text{Dir}(\frac{1}{K},\dots, \frac{1}{K})$
            \State Adam step on $\scalarizationObjectiveComposition(h(\weightTuple),\estimatedParameterTuple)$ with respect
            \Statex \hspace{1.1em} to the weights of $h$, using learning rate $10^{-3}$.
            \EndFor \\
            \Return hypernetwork $h$
        \end{algorithmic}
    \end{algorithm}
\end{minipage}

\section{DEFERRED PROOFS}
\label{sec:deferred-proofs}

In this section we provide the deferred proofs.

\subsection{Proof of Proposition \ref{prop:HypervolumeBound}}
\label{subsec:proof-HypervolumeBound}
\HypervolumeBound*
\begin{proof}[Proof of \cref{prop:HypervolumeBound}]   
    The proof of the first two claims are straight-forward consequences of smoothness and the definition of $G_k$. For the last bound, we use a representation via an expected random hypervolume scalarization from \cite{Zhang2020random}.

    We begin with the first bound. By \ref{item:weighted-sum-strongly-convex}, when $\scalarization$ is linear, that is, $\scalarization(x)= \sum_{i=1}^K \weight_i x_i$, $\smoothnessParam_k$-smoothness of $\objectiveindexed$ implies $\scalarization(\boldsymbol{\smoothnessParam})$-smoothness of $\scalarizationObjectiveComposition$ and so by \cref{item:implication-smoothness} it holds
    \begin{align*}
        \excessscalarization(\estimatedMinimizerWeighted)&=\scalarizationObjectiveComposition(\estimatedMinimizerWeighted)-\scalarizationObjectiveComposition(\minimizerWeighted) \\
        &\leq \inner{\nabla_\vartheta\scalarizationObjectiveComposition(\minimizerWeighted)}{\estimatedMinimizerWeighted-\minimizerWeighted}+\frac{\scalarization(\boldsymbol{\smoothnessParam})}{2}\norm{\estimatedMinimizerWeighted-\minimizerWeighted}_2^2 \\
        &= \frac{\scalarization(\boldsymbol{\smoothnessParam})}{2}\norm{\estimatedMinimizerWeighted-\minimizerWeighted}_2^2 \\
        &= \eps(0,\scalarization(\boldsymbol{\smoothnessParam}),\weightTuple).
    \end{align*}
    where we used the stationarity condition $\nabla_\vartheta\scalarizationObjectiveComposition(\minimizerWeighted)=0$.
    
    Smoothness, the definition of $G_k$, and Cauchy-Schwarz also imply that
    \begin{align*}
        \objectiveindexed(\estimatedMinimizerWeighted)-\objectiveindexed(\minimizerWeighted) 
        &\leq \inner{\nabla_\vartheta\objectiveindexed(\minimizerWeighted)}{\estimatedMinimizerWeighted-\minimizerWeighted}+\frac{\smoothnessParam_k}{2}\norm{\estimatedMinimizerWeighted-\minimizerWeighted}_2^2 \\
        &\leq G_k\norm{\estimatedMinimizerWeighted-\minimizerWeighted}_2+\frac{\smoothnessParam_k}{2}\norm{\estimatedMinimizerWeighted-\minimizerWeighted}_2^2 \\
        &=:\eps(G_k,\smoothnessParam_k,\weightTuple).
    \end{align*}

    Finally, we use a representation of hypervolume as an expected random scalarizations to prove the bound on the hypervolume. To that end, we use a version of the arguments in \cite{Zhang2020random}, which we prove after finishing the main proof.
    \begin{lemma}
    \label{lem:hypervolume-scalarization}
        Denote the positive $(K-1)$-dimensional sphere as $\SSS^{K-1}_+=\mathset{v\in\RR^K\setmid \norm{v}_2=1, \forall i\in[K]\; v_i\geq 0 }$. Moreover, define $U$ to be the uniform probability measure on $\SSS_+^{K-1}$ with Borel $\sigma$-algebra (also known as Haar measure), that is, for all Borel sets $A\subset \SSS_+^{K-1}$ we define $U(A)=c_K^{-1}\cdot \mu_K\pr{\mathset{tv \setmid t\in [0,1], v\in A}}$ where $\mu_K$ is the Lebesgue measure on $\RR^K$ and $c_K = \pi^{K/2}/(2^K\Gamma(K/2+1))$. Here $\Gamma$ denotes the gamma function.
        Let $r>0$ and $\cS\subset [0,r]^K$. Then
        \begin{equation*}
            \hypervolume_r(\cS) = c_K \EE_{u\sim U}\br{\max_{s\in \cS} \min_{k\in [K]}\pr{\frac{r-s_k}{u_k}}^K}.
        \end{equation*}
    \end{lemma}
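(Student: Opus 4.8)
The plan is to prove \cref{lem:hypervolume-scalarization} by relating the hypervolume dominated by $\cS$ inside the box $[0,r]^K$ to an integral in "polar-like" coordinates centered at the corner $r\cdot\ones$. Concretely, for a point $x \in [0,r]^K$ we write $r\cdot\ones - x = t\cdot v$ with $t = \norm{r\ones - x}_2 \geq 0$ and $v \in \SSS^{K-1}_+$, which is a bijective change of variables between $[0,r]^K$ (shifted and reflected) and the appropriate region of $(t,v)$ space. The Jacobian of the map $(t,v) \mapsto t v$ from $\RR_{\geq 0} \times \SSS^{K-1}_+$ to $\RR^K_{\geq 0}$ is $t^{K-1}$, so the Lebesgue measure $\mu_K$ pushes forward to $t^{K-1}\,dt\,d\sigma(v)$, where $\sigma$ is the (unnormalized) surface measure on $\SSS^{K-1}_+$. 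The normalization constant $c_K$ is exactly $\sigma(\SSS^{K-1}_+) \cdot \tfrac{1}{K}$ after integrating $t^{K-1}$ over $[0,1]$ — equivalently $c_K = \mu_K(\{tv : t\in[0,1],\, v\in\SSS^{K-1}_+\})$ as stated — so $U = \sigma/\sigma(\SSS^{K-1}_+)$ and $\sigma(\SSS^{K-1}_+) = K c_K$.

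The key step is to identify, for a fixed direction $v \in \SSS^{K-1}_+$, the set of radii $t$ such that the point $x = r\ones - tv$ lies in the dominated region $D := \{x \in [0,r]^K : \exists s \in \cS,\ s \preceq x\}$. Since $s \preceq x = r\ones - tv$ is equivalent to $t v_k \leq r - s_k$ for all $k$, i.e. $t \leq \min_k (r-s_k)/v_k$ (interpreting the ratio as $+\infty$ when $v_k = 0$, and using that $s \in [0,r]^K$ guarantees $r - s_k \geq 0$), the point is dominated by some $s \in \cS$ precisely when $t \leq \max_{s\in\cS}\min_{k\in[K]}(r-s_k)/v_k =: T(v)$ — here I would note $T(v) \leq r \leq$ the diameter, so the ray stays in the box and the max/min are attained because $\cS$ can be taken compact (or replaced by its closure without changing the volume). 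Hence, integrating in these coordinates,
\begin{equation*}
    \hypervolume_r(\cS) = \mu_K(D) = \int_{\SSS^{K-1}_+} \int_0^{T(v)} t^{K-1}\, dt\, d\sigma(v) = \int_{\SSS^{K-1}_+} \frac{T(v)^K}{K}\, d\sigma(v).
\end{equation*}
Rewriting the surface integral as an expectation over $U = \sigma/(K c_K)$ gives $\hypervolume_r(\cS) = \frac{K c_K}{K}\,\EE_{u\sim U}[T(u)^K] = c_K\,\EE_{u\sim U}[\max_{s\in\cS}\min_{k\in[K]}((r-s_k)/u_k)^K]$, which is the claim.

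The main obstacle is handling the degenerate directions where some $u_k = 0$: there the integrand $\min_k((r-s_k)/u_k)^K$ can be $+\infty$, but this set has $U$-measure zero, so it does not affect the expectation; I would make this rigorous by restricting to the open positive sphere and noting the boundary is $\sigma$-null, or equivalently by a monotone-convergence argument approximating $\SSS^{K-1}_+$ from inside. A secondary technical point is justifying the change-of-variables/Jacobian computation and the measurability of $v \mapsto T(v)$ (it is a pointwise max over $\cS$ of continuous functions, hence lower semicontinuous, hence Borel when $\cS$ is separable/compact). Everything else is a routine Tonelli/Fubini application since all integrands are nonnegative.
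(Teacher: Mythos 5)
Your proof is correct and follows essentially the same route as the paper's: both integrate in polar coordinates centered at the corner $r\cdot\ones$, identify the dominated radius along each direction $v$ as $T(v)=\max_{s\in\cS}\min_k (r-s_k)/v_k$, and normalize the surface measure on $\SSS^{K-1}_+$ to obtain the expectation with constant $c_K$. Your additional care about the $\sigma$-null set where some $u_k=0$ and the measurability of $T$ only makes explicit points the paper leaves implicit.
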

    Plugging our sets $\PF$ and $\PFhat$ in as $\cS$ from \cref{lem:hypervolume-scalarization}, we have for the constant $c_K$ from \cref{lem:hypervolume-scalarization}  and $\delta = \min_{k\in[K]}\min_{\weightTuple\in\Delta^K}(r-\objectiveindexed(\estimatedMinimizerWeighted))/(r-\objectiveindexed(\minimizerWeighted))$ that 
    \begin{align*}
        \hypervolume_r(\PFhat)  &= c_K \EE_{u\sim U}\br{\max_{\weightTuple\in\Delta^K}\min_{k\in[K]} \pr{\frac{r-\objectiveindexed(\estimatedMinimizerWeighted)}{u_k}}^K} \\
        %&= c_K \EE_{u\sim U}\br{\max_{\weightTuple\in\Delta^K}\min_{k\in[K]} \pr{\frac{r-\objectiveindexed(\minimizerWeighted)}{u_k}}^K\pr{\frac{r-\objectiveindexed(\estimatedMinimizerWeighted)}{r-\objectiveindexed(\minimizerWeighted)}}^K} \\
        &\geq c_K \EE_{u\sim U}\br{\max_{\weightTuple\in\Delta^K}\min_{k\in[K]} \pr{\frac{r-\objectiveindexed(\minimizerWeighted)}{u_k}}^K}\delta^K \\
        &= \hypervolume_r(\PF)\delta^K.
    \end{align*}
    By definition of $r$, which implies that $ r-\objectiveindexed(\minimizerWeighted)\geq r/2$, and the definition of $\eps(G,\smoothnessParam,\weightTuple)$ we have that
    \begin{align*}
        \delta &= \min_{k\in[K]}\min_{\weightTuple\in\Delta^K}\frac{r-\objectiveindexed(\estimatedMinimizerWeighted)}{r-\objectiveindexed(\minimizerWeighted)} \\
        &\geq \min_{k\in[K]}\min_{\weightTuple\in\Delta^K}\frac{r-\objectiveindexed(\minimizerWeighted)-\eps(G_k,\smoothnessParam_k,\weightTuple)}{r-\objectiveindexed(\minimizerWeighted)} \\
        &\geq 1-\max_{k\in[K]}\max_{\weightTuple\in\Delta^K}\frac{2\eps(G_k,\smoothnessParam_k,\weightTuple)}{r}\\
        &= 1-\frac{2 \eps_{\max}}{r}.
    \end{align*}
    Plugging this in yields the last bound and finishes the proof.
\end{proof}

\begin{proof}[Proof of \cref{lem:hypervolume-scalarization}]
    The proof is analogous to the proof of \citet[Lemma 5]{Zhang2020random} with minor adjustments.

    Considering the hyper-rectangle $R_s$ with corners $\rbold = (r,\dots,r)^\top \in \RR^K$ and $s\in \cS$, that is, $R_s=\bigtimes_{k=1}^K [s_k, r]$. Take any $v\in \SSS_+^{k-1}$ and consider the ray $\mathset{\rbold-t v\setmid t\geq 0}$. Let $p \in \RR^K$ be the point where the ray exits the rectangle. It is easy to see that $p=\rbold-\min_{k\in [K]}\pr{\frac{r_k-s_k}{v_k}}v$ because $p=r-tv$ and $t$ must be maximal so that $p_i = r-tv_i \geq s_i$. Now, if we extend this argument to the entire dominated set, 
    \begin{equation*}
        \cD_{\cS}=\mathset{x\in[0,r]^K\setmid \exists s\in \cS: x \succeq s} = \bigcup_{s\in \cS} R_s,
    \end{equation*}
    which is the union over such rectangles, we see that the point where the ray exits is given by $p=\rbold-t_v v$, where
    \begin{equation*}
        t_v = \max_{s\in \cS} \min_{k\in [K]} \pr{\frac{r_k-s_k}{v_k}} \quad \text{and hence} \quad  \ind_{\cD_\cS}(\rbold - t v) = 
        \begin{cases}
            1 & \text{if } t\in [0,t_v], v\in \SSS_+^{K-1}, \\
            0 & \text{else}.
        \end{cases}
    \end{equation*}

    Denote $\mu_K$ the Lebesgue measure on $\RR^K$. By \citet[Theorem 2.49]{Folland1999real}, the Borel measure $\sigma$ on $\SSS^{K-1}$, defined for any Borel measurable $A\subset \SSS^{K-1}$ as $\sigma (A)= K \cdot \mu_K\pr{\mathset{tv\setmid t\in (0,1], v\in A}}$\textemdash an unnormalized uniform measure on $\SSS^{K-1}$\textemdash satisfies
    \begin{align*}
        \mu_K\pr{\cD_\cS}&=\int_{\RR^K} \ind_{\cD_\cS}(x) d\mu_K(x) \\
        &= \int_{\RR^K} \ind_{\cD_\cS}(\rbold-x) d\mu_K(x) \\
        &= \int_{(0,\infty)} \int_{\SSS^{K-1}} t^{K-1}\ind_{\cD_\cS}(\rbold-tv) d\sigma(v) d\mu_1(t) \\
        &= \int_{\SSS^{K-1}_+} \int_0^{t_v} t^{K-1} d\mu_1(t) d\sigma(v) \\
        &= \frac{1}{K}\int_{\SSS^{K-1}_+} \max_{s\in \cS}\min_{k\in [K]}\pr{\frac{r_k-s_k}{v_k}}^{K} d\sigma(v).
    \end{align*}
    Now, since $\sigma (\SSS^{K-1})= 2\pi^{K/2}/ \Gamma(K/2)$ \citep[Proposition 2.54]{Folland1999real}, we have that $\sigma (\SSS_+^{K-1})= 2\pi^{K/2}/ (\Gamma(K/2)2^K)$. Hence, $U = \sigma \cdot \Gamma(K/2)2^{K-1} / \pi^{K/2} $ is a probability measure on $\SSS_+^{K-1}$, and we can write
    \begin{equation*}
        \mu_K\pr{\cD_\cS} = \frac{  \pi^{K/2}}{K\Gamma(K/2)2^{K-1} }\EE_{S\sim U} \pr{ \max_{s\in \cS}\min_{k\in [K]}\pr{\frac{r_k-s_k}{v_k}}^{K}}
    \end{equation*}
    Noting that by $\Gamma(x+1)=x\Gamma(x)$ we see that $\frac{ \pi^{K/2}}{K \Gamma(K/2)2^{K-1} } = \frac{ \pi^{K/2}}{\Gamma(K/2+1)2^{K} }=c_K$, and since $\hypervolume_r(\cS)=\mu_K\pr{\cD_\cS} $, this finishes the proof. 
\end{proof}

\subsection{Proof of Proposition \ref{prop:InsufficiencyPluginRegularization}}
\label{subsec:proof-InsufficiencyPluginRegularization}

\InsufficiencyPluginRegularization*

\begin{proof}
    We prove this lower bound by showing that any estimator in the form of \cref{eq:penalized-scalarization} is equivalent to a penalized least-squares estimator in a Gaussian sequence model over the $\ell_2$-ball in $d$ dimensions. Using this reduction, we can use standard lower bounds on this sequence model, for example from \cite{Neykov2023Minimax}. Importantly, this is \emph{not} a minimax lower bound on the original problem and only applies to the directly regularized estimator $\drEstimatedMinimizerWeighted$ because we first prove the reduction.

    We begin the proof by stating the following auxiliary fact that is also visualized in \cref{fig:proof_illustration}, which we prove after concluding the main proof. Denote $B_2^d\subset \RR^d$ the set of vectors $v\in\RR^d$ with $\norm{v}_2\leq 1$.
    \begin{claim}
    \label{claim:auxiliary-fact}
        For all $v\in B_2^d$, $\weight_1,\weight_2>0$ with $\weight_1+\weight_2=1$, $\gamma>1$ and $d,n\in\NN$ with $n\geq d$, there exist matrices $\X_1,\X_2\in\fX(\gamma)\subset \RR^{n\times d}$ and $1$-sparse vectors $\beta_1,\beta_2\in\Gamma\subset\RR^d$ so that for $\covariance_\objectiveindex:= \frac{1}{n}\X^\top_\objectiveindex \X_\objectiveindex$, and $\minimizerWeighted=\argmin_\minimizer \sum_{i=k}^K \lambda_k \norm{\X_k(\vartheta-\beta_k)}_2^2$ (from \cref{eq:scalarization}) it holds
        \begin{enumerate}
            \item $\weight_1\covariance_1+\weight_2\covariance_2=\identity_d$,
            \item $\weight_1\covariance_1\beta_1 +\weight_2\covariance_2\beta_2 = v = \minimizerWeighted $.
        \end{enumerate}
    \end{claim}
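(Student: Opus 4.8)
The plan is to reduce the claim to a purely linear-algebraic statement about two covariance matrices and then settle that statement by confining all the relevant structure to a single two-dimensional block.

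\emph{Step 1 (reduction).} Since $n\ge d$, any symmetric matrix $\Sigma$ with $\gamma^{-1}\identity_d\preceq\Sigma\preceq\gamma\identity_d$ (which is in particular positive definite) equals $\tfrac1n\X^\top\X$ for the matrix $\X\in\RR^{n\times d}$ obtained by stacking $\sqrt n$ times the PSD square root $\Sigma^{1/2}$ on top of an $(n-d)\times d$ zero block, and this $\X$ lies in $\fX(\gamma)$. So it suffices to exhibit symmetric $\Sigma_1,\Sigma_2$ with spectra in $[\gamma^{-1},\gamma]$ and $1$-sparse $\beta_1,\beta_2$ satisfying (i) $\lambda_1\Sigma_1+\lambda_2\Sigma_2=\identity_d$ and (ii) $\lambda_1\Sigma_1\beta_1+\lambda_2\Sigma_2\beta_2=v$. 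Property (i) makes $\vartheta\mapsto\sum_k\lambda_k\norm{\X_k(\vartheta-\beta_k)}_2^2$ strictly convex with Hessian $2n\identity_d$, so by \ref{item:mixture-quadratics} the minimizer in \eqref{eq:scalarization} is unique and equals $(\lambda_1\Sigma_1+\lambda_2\Sigma_2)^{-1}(\lambda_1\Sigma_1\beta_1+\lambda_2\Sigma_2\beta_2)=v$; together with (ii) this gives both assertions of the claim.

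\emph{Step 2 (a two-dimensional ansatz).} I would choose a unit vector $u\perp e_1$ with $v\in P:=\colspan{\{e_1,u\}}$ and write $v=v_1e_1+v_\perp u$ with $v_1^2+v_\perp^2\le1$. Then I would look for $\Sigma_1,\Sigma_2$ that act as $\identity$ on $P^\perp$ and as symmetric PSD matrices $A_1,A_2\in\RR^{2\times2}$ on $P$ in the orthonormal basis $(e_1,u)$, with $\lambda_1A_1+\lambda_2A_2=\identity_2$; then (i) holds automatically and the extra eigenvalue $1$ on $P^\perp$ is fine because $\gamma>1$. Taking $\beta_1=t_1e_1$ and $\beta_2=t_2e_1$ with $t_1\ne t_2$ keeps both $\beta_i$ inside $P$, so (ii) reduces to the $2\times2$ equation $t_1p+t_2(\binom10-p)=\binom{v_1}{v_\perp}$, where $p:=\lambda_1A_1\binom10$ and $\lambda_2A_2\binom10=\binom10-p$ follows from $\lambda_1A_1+\lambda_2A_2=\identity_2$; equivalently $p=\tfrac1{t_1-t_2}\binom{v_1-t_2}{v_\perp}$.

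\emph{Step 3 (realizing $p$).} Let $L:=\max\{\lambda_1\gamma^{-1},\,1-\lambda_2\gamma\}$ and $U:=\min\{\lambda_1\gamma,\,1-\lambda_2\gamma^{-1}\}$; a short computation using $\lambda_1+\lambda_2=1$ and $\gamma>1$ gives $0<L<U$. For any $p=(p_1,p_2)$ with $(p_1-\tfrac{L+U}2)^2+p_2^2\le(\tfrac{U-L}2)^2$, I would set $\lambda_1A_1:=\bigl(\begin{smallmatrix}p_1&p_2\\p_2&L+U-p_1\end{smallmatrix}\bigr)$, whose eigenvalues are $\tfrac{L+U}2\pm\sqrt{(p_1-\tfrac{L+U}2)^2+p_2^2}\in[L,U]$; by the definitions of $L,U$ this makes both $A_1$ and $A_2:=\lambda_2^{-1}(\identity_2-\lambda_1A_1)$ symmetric PSD with spectra in $[\gamma^{-1},\gamma]$. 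Finally I would fix $s\ne0$ with $|v_\perp|/|s|\le\tfrac{U-L}2$, set $t_2:=v_1-\tfrac{(L+U)s}2$ and $t_1:=t_2+s$, so that $p=(\tfrac{L+U}2,v_\perp/s)$ lies in the admissible disk; unwinding the ansatz produces $\Sigma_1,\Sigma_2$ (and, via Step 1, $\X_1,\X_2\in\fX(\gamma)$) together with the $1$-sparse vectors $\beta_1=t_1e_1$, $\beta_2=t_2e_1$ satisfying (i)--(ii).

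The main obstacle I anticipate is the simultaneous spectral bookkeeping in Step 3: one must bend a column of $\Sigma_1$ enough that $\lambda_1\Sigma_1\beta_1$ (plus the compensating term) can point in an arbitrary direction, while keeping \emph{both} $\Sigma_1$ and $\Sigma_2=\lambda_2^{-1}(\identity_d-\lambda_1\Sigma_1)$ inside $\fX(\gamma)$ -- this is exactly what the interval $[L,U]$ encodes, and it is the reason a mildly perturbed identity does not suffice (it also forces $\norm{\beta_1}_2,\norm{\beta_2}_2$ to grow as $\weightTuple$ approaches a vertex of the simplex, which is harmless since the claim only asks for $1$-sparsity). Degenerate cases ($v\parallel e_1$, including $v=0$) are covered by the same choice of $s$.
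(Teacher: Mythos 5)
Your construction is correct, and it reaches the same goal as the paper's proof by a genuinely different explicit construction. The paper also reduces to choosing symmetric $\Sigma_k=\frac1n\X_k^\top\X_k$ with $\lambda_1\Sigma_1+\lambda_2\Sigma_2=\identity_d$ and $e_1$-supported $\beta_k$, but it then takes rank-one perturbations of the identity aligned with $v$ itself, namely $\Sigma_1=\identity_d+\lambda_2\frac{\gamma-1}{\gamma}vv^\top$, $\Sigma_2=\identity_d-\lambda_1\frac{\gamma-1}{\gamma}vv^\top$, together with $\beta_1=\frac{1}{v_1\lambda_1}\frac{\gamma}{\gamma-1}e_1$ and $\beta_2=-\frac{1}{v_1\lambda_2}\frac{\gamma}{\gamma-1}e_1$; the identities $\lambda_1\Sigma_1\beta_1+\lambda_2\Sigma_2\beta_2=v$ and the eigenvalue bounds then follow in two lines, at the cost of a case split on $v_1=0$ (resolved by permuting coordinates, with $v=0$ handled separately). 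Your two-dimensional block ansatz with the admissible eigenvalue interval $[L,U]$ is more elaborate but also more systematic: it makes transparent exactly which vectors $p=\lambda_1\Sigma_1 e_1$ are reachable under the joint spectral constraints on $\Sigma_1$ and $\Sigma_2=\lambda_2^{-1}(\identity_d-\lambda_1\Sigma_1)$, and it absorbs the degenerate directions ($v\parallel e_1$, $v=0$) without any case analysis. I verified the key computations: $0<L<U$ follows from $\lambda_1+\lambda_2=1$ and $\gamma>1$; the eigenvalues of your $\lambda_1 A_1$ are indeed $\frac{L+U}{2}\pm\sqrt{(p_1-\frac{L+U}{2})^2+p_2^2}$; and your choice of $s,t_1,t_2$ places $p=(\frac{L+U}{2},v_\perp/s)$ in the admissible disk. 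Two minor remarks: your Step 2 needs $d\geq 2$ to pick $u\perp e_1$ (the $d=1$ case is trivial and the paper's formula covers it directly), and both constructions let $\|\beta_k\|_2$ blow up as $\weightTuple$ approaches a vertex of the simplex, which is harmless since $\Gamma$ carries no norm constraint.
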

    
    % We confirm the first claim, that $\weight_1\covariance_1+\weight_2\covariance_2 = \identity_d$:
    % \begin{align*}
    %     \weight_1\covariance_1+\weight_2\covariance_2 &= \weight_1\pr{\weight_2\frac{\gamma-1}{\gamma} vv^\top + \identity_d} + \weight_2\pr{-\weight_1\frac{\gamma-1}{\gamma}vv^\top + \identity_d} \\
    %     &=\weight_1\weight_2\frac{\gamma-1}{\gamma} vv^\top +\weight_1 \identity_d -\weight_1\weight_2\frac{\gamma-1}{\gamma} vv^\top+\identity_d \\
    %     &=(\weight_1+\weight_2)\identity_d. 
    % \end{align*}
    % Further, calculations show the second claim that $\weight_1\covariance_1\beta_1 +\weight_2\covariance_2\beta_2 = v$, as
    % \begin{align*}
    %     \weight_1\covariance_1\beta_1 &= \weight_1 \pr{\weight_2\frac{\gamma-1}{\gamma} vv^\top + \identity_d}\frac{1}{\weight_1v_1}\frac{\gamma}{\gamma-1}\e_1 = \frac{\weight_2}{v_1}vv^\top \e_1 +\frac{1}{v_1}\frac{\gamma}{\gamma-1}\e_1 =\weight_2 v+\frac{1}{v_1}\frac{\gamma}{\gamma-1}\e_1, \\
    %     \weight_2\covariance_2\beta_2 &= -\weight_2 \pr{-\weight_1\frac{\gamma-1}{\gamma} vv^\top + \identity_d}\frac{1}{\weight_2v_1}\frac{\gamma}{\gamma-1}\e_1 = \frac{\weight_1}{v_1}vv^\top \e_1 -\frac{1}{v_1}\frac{\gamma}{\gamma-1}\e_1 =\weight_1 v-\frac{1}{v_1}\frac{\gamma}{\gamma-1}\e_1.
    %\end{align*}
    % \fy{i cut all the algebra since nothing is happening but actual simple algebra}
    % and consequently also $(\weight_1\covariance_1+\weight_2\covariance_2)^{1/2}\minimizerWeighted=v$
    % which is exactly what was claimed. This concludes the proof of the auxiliary fact.
    
    We use this claim in the main part of the proof. 
    Fix some $\weight_1,\weight_2>0$ with $\weight_1+\weight_2 = 1$. For an arbitrary choice of $v\in B_2^d$,
    take the corresponding design matrices $\X_\objectiveindex\in\fX(\gamma)\subset \RR^{n\times d}$, where $n\geq d$, and $1$-sparse ground-truths $\beta_1,\beta_2\in \Gamma$ from \cref{claim:auxiliary-fact}. Denote $\covariance_\objectiveindex =\frac{1}{n}\X_\objectiveindex^\top\X_\objectiveindex$ and recall that the eigenvalues of $\covariance_\objectiveindex$ are bounded to lie in $[\gamma^{-1},\gamma]$.
    
    %on population level we can write \todo{clean this up using the adversarial matrices directly}
    %\begin{align*}
    %    \scalarizationObjectiveComposition(\vartheta,\distributionXxY)
    %    &=\weight_1\frac{1}{n}\norm{\X_1(\vartheta-\beta_1)}_2^2 +\weight_2\frac{1}{n}\norm{\X_2(\vartheta-\beta_2)}_2^2 \\
    %    &= \weight_1(\vartheta-\beta_1)^\top \covariance_1(\vartheta-\beta_1) +\weight_2(\vartheta-\beta_2)^\top\covariance_2(\vartheta-\beta_2) \\
    %    &= \vartheta^\top \pr{\weight_1\covariance_1+\weight_2\covariance_2}\vartheta-2\vartheta^\top \pr{\weight_1\covariance_1\beta_1+\weight_2\covariance_2\beta_2} +\weight_1\beta_1^\top\covariance_1\beta_1 + \weight_2\beta_2^\top\covariance_2\beta_2\\
    %    &= \vartheta^\top \vartheta-2\vartheta^\top \minimizerWeighted +\minimizerWeighted^\top\minimizerWeighted \underbrace{-\minimizerWeighted^\top\minimizerWeighted +\frac{1}{2}\beta_1^\top\covariance_1\beta_1 + \frac{1}{2}\beta_2^\top\covariance_2\beta_2}_{=:C}\\
    %    &=\norm{\vartheta-\minimizerWeighted}_2^2 +C
    %\end{align*}
    %where $C$ is a constant independent of $\vartheta$. This also confirms that indeed, $\minimizerWeighted$ minimizes the scalarized objective.

    Recalling that we use linear scalarization, and using the auxiliary fact that $\weight_1\covariance_1+\weight_2\covariance_2=\identity_d$ and $\weight_1\covariance_1\beta_1 +\weight_2\covariance_2\beta_2 = v = \minimizerWeighted$, the scalarized empirical objective of \cref{eq:penalized-scalarization} reduces to
    \begin{align*}
        \scalarizationObjectiveComposition(\vartheta,\empdistributionXxYTuple)
        &= \weight_1\frac{1}{n}\norm{\X_1\vartheta-y^1}_2^2+\weight_2\frac{1}{n}\norm{\X_2\vartheta-y^2}_2^2 \\
        &=\vartheta^\top \vartheta - 2\vartheta^\top\pr{\frac{\weight_1}{n}\X_1^\top y^1+\frac{\weight_2}{n}\X_2^\top y^2} + \frac{\weight_1}{n}\norm{y^1}_2^2+ \frac{\weight_2}{n}\norm{y^2}_2^2  \\
        &=\vartheta^\top \vartheta - 2\vartheta^\top\pr{ \minimizerWeighted +\frac{\weight_1}{n}\X_1^\top\xi^1+\frac{\weight_2}{n}\X_2^\top\xi^2} + C(\xi^1,\xi^2) 
    \end{align*}
    where $C(\xi^1,\xi^2)$ is a (random) constant independent of $\vartheta$. Since the following random vector is Gaussian
    \begin{equation*}
        \xi := \frac{\weight_1}{n}\X_1^\top\xi^1+\frac{\weight_2}{n}\X_2^\top\xi^2 \sim\cN\pr{0,\M} \quad \text{with} \quad \M = \frac{\sigma^2}{n} \pr{\weight_1^2\covariance_1+\weight_2^2\covariance_2},
    \end{equation*}
    we can define $ y = \minimizerWeighted + \xi$ and it follows that we can write
    \begin{equation*}
        \scalarizationObjectiveComposition(\vartheta,\empdistributionXxYTuple)=\vartheta^\top \vartheta - 2\vartheta^\top  y +  C(\xi^1,\xi^2) =\norm{\vartheta -y}_2^2+ C'(\xi^1,\xi^2).
    \end{equation*}

    As the auxiliary fact shows, $v$ could lie anywhere in $B_2^d$, and hence, from the point of view of the empirical scalarized objective, the problem has fully reduced to a Gaussian sequence model.
    In particular, the empirical objective from \cref{eq:penalized-scalarization} is that of a penalized least-squares estimator in a Gaussian sequence model in $d$ dimensions with ground-truth $v=\minimizerWeighted$, using the noisy observations
    \begin{equation}
    \label{eq:Gaussian-sequence-model-1}
        y=v+\xi  \quad \text{with} \quad  v\in B_2^d,\quad \xi \sim \cN(0,\M).
    \end{equation}
    
    If $\weight_1=\weight_2=1/2$, then $ \xi$ is isotropic (since $\M=\sigma^2\identity_d/(2n)$) and we could directly apply known lower bounds for the minimax rate in this setting \citep{Neykov2023Minimax}, but in the general form we use the following bound on the eigenvalues of $\M$
    \begin{equation*}
        \widetilde\sigma^2:= \frac{\sigma^2}{2n\gamma} \preceq \M %\preceq \frac{\sigma^2\gamma}{n}.
    \end{equation*}
    This can easily be seen by using $\weight_1^2+\weight_2^2 \geq 1/2$ and hence
    \begin{equation*}
        \eigenvalueMin\pr{\M} = \frac{\sigma^2}{n}\eigenvalueMin\pr{\weight_1^2\covariance_1+\weight_2^2\covariance_2} \geq \frac{\sigma^2}{n}\frac{\weight_1^2+\weight_2^2}{\gamma}\geq \frac{\sigma^2}{2n\gamma}
        %\eigenvalueMax\pr{\M} &= \frac{\sigma^2}{n}\eigenvalueMax\pr{\weight_1^2\covariance_1+\weight_2^2\covariance_2} \leq \frac{\sigma^2\gamma}{n}(\weight_1^2+\weight_2^2)\leq \frac{\sigma^2\gamma}{n}.
    \end{equation*}
    
    We show that one may bound the minimax rate in the model \eqref{eq:Gaussian-sequence-model-1} from below in terms of the minimax rate of the Gaussian sequence model
    \begin{equation}
    \label{eq:Gaussian-sequence-model-2}
        \widetilde y=v+\widetilde\xi  \quad \text{with} \quad v\in B_2^d,\quad \widetilde\xi \sim \cN(0,\widetilde \sigma^2\identity_d)
    \end{equation}
    where $\widetilde\sigma^2= \sigma^2 /(2n\gamma)$.

    To see this, first, note that we may write $\xi = Z + W$, where $Z\sim \cN(0,\eigenvalueMin(\M)\I_d)$ and $W\sim \cN(0, \M-\eigenvalueMin(\M) \I_d)$
    are two independent Gaussian vectors. Then, for any estimator $\vhat(v+Z+W)$ in the Gaussian sequence model \eqref{eq:Gaussian-sequence-model-1}, we can define a corresponding estimator
    $\vprime : \R^d\to \R^d$ given by
    \begin{equation*}
        \vprime(v+Z) = \EE_W\br{\widehat v(v+Z+W)}
    \end{equation*}
    that is an estimator in the the Gaussian sequence model with only noise $Z$.
    %\fy{the reason i did it this way is because in the vtilde case there is no random real joint distribution over W but only Z? so the W is kind of "invented" for the sake of computing?}
    By a standard bias-variance decomposition, 
    the following
    lower bound holds for all $v\in B_2^d, z\in\RR^d$:
    \begin{equation*}
    %\label{eq:comparison}
        \EE \br{\norm{\widehat v(v+Z+W)-v}_2^2 \mid Z=z} \geq  \norm{\vprime(v+z)-v}_2^2.
    \end{equation*}
    The claim then directly follows by taking expectation with respect to $Z$, taking the supremum over $v$ and infimum over estimators, and using $\eigenvalueMin(\M)\geq \widetilde \sigma^2$.
    Therefore, the error of any estimator $\drEstimatedMinimizerWeighted$ of the form \eqref{eq:penalized-scalarization} using any penalty $\rho$ is lower bounded by the minimax rate in the Gaussian sequence model of \eqref{eq:Gaussian-sequence-model-2}. 
    
    If $\widetilde\sigma^2\leq 1/(d+1)$, the minimax rate in this setting is lower bounded by 
    $\widetilde\sigma^2 d$ (up to constant factors), see for example \cite[Corollary 3.3]{Neykov2023Minimax}. Hence, from plugging in the definition of $\widetilde\sigma^2 = \sigma^2/(2n\gamma)$, we see that under our assumption that $\sigma^2 \leq 2n\gamma/(d+1)$, the minimax rate is lower bounded by $\sigma^2d/(n\gamma)$ (up to constant factors). Putting everything together, we have for any estimator $\drEstimatedMinimizerWeighted$ in the form of \eqref{eq:penalized-scalarization}
    \begin{align*}
        \sup_{\substack{\beta_1,\beta_2\in \Gamma \\\X_1,\X_2\in \fX(\gamma)}} \EE\br{\norm{\drEstimatedMinimizerWeighted( y)-\minimizerWeighted}_2^2}&\geq \sup_{v\in B_2^d} \EE\br{\norm{\drEstimatedMinimizerWeighted( y)-v}_2^2}  &&(\text{\cref{claim:auxiliary-fact}})\\
        &\geq \inf_{\widehat v}\sup_{v \in B_2^d } \EE\br{\norm{\widehat v(\widetilde y)-v}_2^2} &&(\text{lower bound from \eqref{eq:Gaussian-sequence-model-2}}) \\
        &\gtrsim \widetilde{\sigma}^2d= \frac{\sigma^2 d}{n\gamma}. && \text{\cite[Corollary 3.3]{Neykov2023Minimax}}
    \end{align*}
    This concludes the lower bound.
\end{proof}

\begin{proof}[Proof of \cref{claim:auxiliary-fact}]
    To see this, first note that the optimization problem is minimized by
    \begin{equation}
    \label{eq:varthetacalc}
        \minimizerWeighted=(\weight_1\covariance_1+\weight_2\covariance_2)^{-1}(\weight_1\covariance_1\beta_1+\weight_2\covariance_2\beta_2)
    \end{equation}
    because the problem is strongly convex \ref{item:weighted-sum-strongly-convex} and the first-order stationarity condition 
    \begin{equation*}
        \nabla_\vartheta \pr{\frac{\weight_1}{n}\norm{\X_1(\vartheta-\beta_1)}_2^2+\frac{\weight_2}{n}\norm{\X_2(\vartheta-\beta_2)}_2^2} = 2\weight_1\covariance_1(\vartheta-\beta_1)+2\weight_2\covariance_2(\vartheta-\beta_2)=0
    \end{equation*}
    is satisfied by $\minimizerWeighted$, cf. \ref{item:mixture-quadratics}.
    Moreover, because for every symmetric positive definite $\covariance$ with eigenvalues in $[\gamma^{-1},\gamma]$, there exists a $\X\in \fX(\gamma)$ so that $\covariance =\frac{1}{n}\X^\top\X$, we may simply construct $\covariance_1,\covariance_2$ directly. This is because $n\geq d$ and we may choose 
    \begin{equation*}
        \X = \sqrt{n}
        \begin{pmatrix}
            \covariance^{1/2} \\
            \mathbf{0}_{(d-n)\times d}
        \end{pmatrix}
        \implies \frac{1}{n}\X^\top \X = \covariance.
    \end{equation*}

    The proof of the auxiliary fact then follows by explicit construction:
    Let $\e_i$ denotes the $i$-th standard basis vector.
    Fix any $v\in\RR^d$ with $\norm{v}_2\leq 1$ and, without loss of generality, assume that $v_1\neq 0$ (where $v=(v_1,\dots,v_d)^\top$). Otherwise change index $1$ with any other index (the case $v=0$ is trivially solved by $\beta_1=\e_1,\beta_2=-(\weight_1/\weight_2)\e_1$ and $\covariance_1=\covariance_2=\identity_d$).
    We choose the vectors 
    \begin{equation*}
        \beta_1 = \frac{1}{v_1 \weight_1} \frac{\gamma}{\gamma-1} \e_1 \quad \text{and} \quad \beta_2 = -\frac{1}{v_1 \weight_2} \frac{\gamma}{\gamma-1} \e_1,
    \end{equation*}
    and we further choose the matrices
    \begin{equation*}
        \covariance_1 = \weight_2\frac{\gamma-1}{\gamma} vv^\top + \identity_d \quad \text{and} \quad \covariance_2 = -\weight_1\frac{\gamma-1}{\gamma}vv^\top + \identity_d.
    \end{equation*}
    
    The covariance matrices and vectors satisfy all requirements, as $\beta_1$ and $\beta_2$ are clearly $1$-sparse, and the matrices $\covariance_1,\covariance_2$ are symmetric, and their eigenvalues are 
    \begin{align*}
        \eigenvalueMin(\covariance_1) &= 1>\gamma^{-1}, &&\eigenvalueMax(\covariance_1) = 1+\weight_2\norm{v}_2^2\frac{\gamma-1}{\gamma}\leq \frac{2\gamma-1}{\gamma}\leq \gamma, \\
        \eigenvalueMin(\covariance_2) &= 1-\weight_1\norm{v}_2^2\frac{\gamma-1}{\gamma}\geq \frac{\gamma-\gamma+1}{\gamma} = \gamma^{-1}, &&\eigenvalueMax(\covariance_2) = 1 \leq \gamma,
    \end{align*}
    also implying they are positive definite. The claims 
    then follow directly for these choices of $\Sigma_k, \beta_k$. Specifically, we have by \cref{eq:varthetacalc} that
    \begin{equation*}
        \minimizerWeighted=(\weight_1\covariance_1+\weight_2\covariance_2)^{-1}(\weight_1\covariance_1\beta_1+\weight_2\covariance_2\beta_2)= \identity_d^{-1}  v=v
    \end{equation*}
    and the proof is complete.
\end{proof}

\subsection{Proof of Proposition \ref{prop:TwostageRateFixedDesign}}
\label{subsec:proof-TwostageRateFixedDesign}

\TwostageRateFixedDesign*

\begin{proof}[Proof of \cref{prop:TwostageRateFixedDesign}]
    The bound follows by combining known bounds on the estimation error of the LASSO, e.g., from \cite{Bickel2009simultaneous,Wainwright2019}, with a closed-form solution of the multi-objective optimization step. 
    
    Specifically, note that similar to our previous derivations \ref{item:mixture-quadratics}, by first-order optimality, our estimator $\tsEstimatedMinimizerWeighted$ takes the form
    \begin{equation*}
        \tsEstimatedMinimizerWeighted = (\weight_1\covariance_1+\weight_2\covariance_2)^{-1}\pr{\weight_1\covariance_1\betahat_1+\weight_2\covariance_2\betahat_2}
    \end{equation*}
    where $\betahat_1,\betahat_2$ are the LASSO estimates with $\ell_1$-penalty.
    Since the eigenvalues of $\covariance_\objectiveindex$ are larger than $\gamma^{-1}$, the eigenvalues of $(\weight_1\covariance_1+\weight_2\covariance_2)^{-1}$ are smaller than $\gamma$, and so
    \begin{align}
        \norm{\tsEstimatedMinimizerWeighted-\minimizerWeighted}_2^2 &= \norm{(\weight_1\covariance_1+\weight_2\covariance_2)^{-1}\pr{\weight_1\covariance_1\betahat_1+\weight_2\covariance_2\betahat_2}-(\weight_1\covariance_1+\weight_2\covariance_2)^{-1}(\weight_1\covariance_1\beta_1+\weight_2\covariance_2\beta_2)}_2^2 \nonumber\\
        &\leq \norm{(\weight_1\covariance_1+\weight_2\covariance_2)^{-1}}_2^2\norm{\weight_1\covariance_1(\beta_1-\betahat_1)+\weight_2\covariance_2(\beta_2-\betahat_2)}_2^2 \nonumber\\
        &\leq \gamma\pr{\weight_1\norm{\covariance_1}_2\norm{\beta_1-\betahat_1}_2+\weight_2\norm{\covariance_2}_2\norm{\beta_2-\betahat_2}_2}^2\nonumber\\
        &\leq \gamma^3\pr{\norm{\beta_1-\betahat_1}_2+\norm{\beta_1-\betahat_1}_2}^2 \nonumber \\
        &\leq 2\gamma^3\pr{\norm{\beta_1-\betahat_1}_2^2+\norm{\beta_2-\betahat_2}_2^2}. \label{eq:proof-direct-bound-lasso} 
    \end{align}
    To bound the error on the LASSO estimates, we can invoke standard results, such as \citet[Theorem 7.13(a) or Theorem 7.19]{Wainwright2019}. To do so, we verify that the matrix $\X_k$ satisfies the restricted eigenvalue condition with constant $\gamma^{-1}$, and is column-normalized with constant $\gamma$: For any ($1$-sparse) vector $\beta_\objectiveindex \in \RR^d$, we have
    \begin{equation*}
        \frac{1}{n}\norm{\X_\objectiveindex\beta_\objectiveindex}_2^2 =\beta_\objectiveindex^\top \covariance_\objectiveindex \beta_\objectiveindex \geq \gamma^{-1}\norm{\beta_\objectiveindex}_2^2 
    \end{equation*}
    because we have assumed that $\X_\objectiveindex\in\fX(\gamma)$,
    and for the standard basis vectors $\e_i$ we have
    \begin{equation*}
        \frac{1}{n}\norm{\X_\objectiveindex \e_i}_2^2 =\e_i^\top \covariance_\objectiveindex \e_i \leq \gamma \norm{\e_i}_2^2 = \gamma.
    \end{equation*}
    Furthermore, choosing the penalty strength
    \begin{equation*}
        \alpha = 6\gamma \sigma \sqrt{\frac{2\log d}{n}}
    \end{equation*}
    so that the event $\mathset{\frac{1}{n}\norm{\X_k \xi^k}_\infty \leq \alpha/2}$ has probability at least at least $1-2d^{-4}$ \cite[Example 7.14 with $t=2\sqrt{2\log (d)/n}$]{Wainwright2019}. 
    Hence, by \citet[Theorem 7.13(a)]{Wainwright2019}, we get that
    \begin{equation*}
        \norm{\beta_\objectiveindex-\betahat_\objectiveindex}_2 \leq 3\gamma \alpha = 18\gamma^2\sigma \sqrt{\frac{2\log d}{n}}
    \end{equation*}
    with probability at least $1-2d^{-4}$. 
    %Therefore, by standard tail integration, we have that
    %\begin{align*}
    %    \EE\br{\norm{\minimizer_\objectiveindex-\estimatedMinimizer_\objectiveindex}_2^2} = \int_{0}^\infty \PP\pr{\norm{\minimizer_\objectiveindex-\estimatedMinimizer_\objectiveindex}_2^2\geq t}dt \lesssim \frac{\sigma^2 \log d}{n}
    %\end{align*}
    %for $\objectiveindex=1,2$. 
    Putting this together with \eqref{eq:proof-direct-bound-lasso}, we get our upper bound
    \begin{equation*}
        \norm{\tsEstimatedMinimizerWeighted-\minimizerWeighted}_2^2 \leq 2\gamma^3 \pr{36\gamma^2\sigma\sqrt{\frac{2\log d}{n}}}^2 = 2592 \frac{\gamma^7\sigma^2\log d}{n}
    \end{equation*}
    with probability at least $1-4d^{-4}$ where we take the union bound for both estimators $k\in\mathset{1,2}$.
    Note that the upper bound can also be shown indirectly using \cref{thm:MinimaxRateStrongConvexity} with known $\unlabeledTuple$.
    This concludes the proof of \cref{prop:InsufficiencyPluginRegularization}.
\end{proof}

\subsection{Proof of Theorem \ref{thm:MinimaxRateStrongConvexity}}
\label{subsec:proof-MinimaxRateStrongConvexity}

%\StrongConvexitySmoothness*

%\LocallyLipschitz*

%\Injectivity*

\MinimaxRateStrongConvexity*

\begin{proof}
    The proof of the theorem follows from the regularity assumptions and Pareto stationarity.
    It is based on arguments akin to using the Implicit Function Theorem for the stability of minimizers, similar to \cite[Theorem 3.1]{Shvartsman2012stability} or \cite[Proposition 4.32]{Bonnans2000perturbation}. 
    
    By \cref{ass:StrongConvexitySmoothness} and \ref{item:weighted-sum-strongly-convex}, we know that $\vartheta\mapsto \scalarizationObjectiveComposition(\vartheta,\parameterTuple)$ is $\scalarization(\strongConvexityParamTuple)$-strongly convex for all $\parameterTuple\in \parameterSpaceTupleLarge$, and because there is one $j$ such that $\weight_j\strongConvexityParam_j>0$, we have $\scalarization(\strongConvexityParamTuple)>0$. Hence, the map $\parameterTuple\mapsto \minimizerWeighted(\parameterTuple) = \argmin_{\vartheta}\scalarizationObjectiveComposition (\vartheta,\parameterTuple)$ is well-defined on $\parameterSpaceTupleLarge$. We now show that this function is locally Lipschitz. 

    Define the function $F_{{\weightTuple}}:\RR^m\times \RR^{K p}\to \RR^m$
    \begin{equation*}
        F_{\weightTuple}(\vartheta,\parameterTuple) = \nabla_\vartheta \scalarizationObjectiveComposition(\vartheta,\parameterTuple) = \sum_{\objectiveindex=1}^K \weight_\objectiveindex \nabla_\vartheta \objectiveindexed(\vartheta,\parameter_\objectiveindex).
    \end{equation*}
    By first-order Pareto stationarity \citep{Roy2023optimizationparetosetstheory}, we know that $F_{\weightTuple}(\minimizerWeighted(\parameterTuple),\parameterTuple)=0$ for all $\parameterTuple\in\parameterSpaceTupleLarge$.
    Let $\parameterTuple,\parameterTuple'\in \parameterSpaceTupleLarge \subset \RR^{K p}$, so that $F_{\weightTuple}(\minimizerWeighted(\parameterTuple),\parameterTuple)=0$ and $F_{\weightTuple}(\minimizerWeighted(\parameterTuple'),\parameterTuple')=0$
    %. Therefore, $F_{\weightTuple}(\minimizerWeighted(\parameterTuple),\parameterTuple) = F_{\weightTuple}(\minimizerWeighted(\parameterTuple'),\parameterTuple')$ 
    and hence
    \begin{equation*}
        \norm{F_{\weightTuple}(\minimizerWeighted(\parameterTuple),\parameterTuple)-F_{\weightTuple}(\minimizerWeighted(\parameterTuple'),\parameterTuple)}_2 = \norm{F_{\weightTuple}(\minimizerWeighted(\parameterTuple'),\parameterTuple') - F_{\weightTuple}(\minimizerWeighted(\parameterTuple'),\parameterTuple)}_2.
    \end{equation*}
    Using $\scalarization(\strongConvexityParamTuple)$-strong convexity of $\vartheta\mapsto \scalarizationObjectiveComposition(\vartheta,\parameterTuple)$ and \ref{item:implication-strong-convexity}, we can now lower-bound the left-hand side as
    \begin{align*}
        \norm{F_{\weightTuple}(\minimizerWeighted(\parameterTuple),\parameterTuple)-F_{\weightTuple}(\minimizerWeighted(\parameterTuple'),\parameterTuple)}_2 \geq  \scalarization(\strongConvexityParamTuple) \norm{\minimizerWeighted(\parameterTuple)-\minimizerWeighted(\parameterTuple')}_2
    \end{align*}
    and using local Lipschitz-continuity of $\parameter_\objectiveindex\mapsto \nabla_\vartheta\objectiveindexed(\vartheta,\parameter_\objectiveindex)$ (\cref{ass:LocallyLipschitz}), we can upper bound the right-hand side as
    \begin{align*}
        \norm{F_{\weightTuple}(\minimizerWeighted(\parameterTuple'),\parameterTuple') - F_{\weightTuple}(\minimizerWeighted(\parameterTuple'),\parameterTuple)}_2 &= \norm{\sum_{\objectiveindex=1}^K \weight_\objectiveindex \pr{\nabla_\vartheta \objectiveindexed(\minimizerWeighted(\parameterTuple'),\parameter_k')- \nabla_\vartheta \objectiveindexed(\minimizerWeighted(\parameterTuple'),\parameter_k)}}_2 \\
        &\leq \sum_{\objectiveindex=1}^K \weight_\objectiveindex \zeta_k(\minimizerWeighted(\parameterTuple')) \norm{\parameter_k'-\parameter_k}_2
    \end{align*}
    where $\zeta_k(\minimizerWeighted(\parameterTuple'))$ denotes the local Lipschitz constant. Combining the two and letting $\zeta(\minimizerWeighted(\parameterTuple'))=\max_k\zeta_k(\minimizerWeighted(\parameterTuple'))$, we get that
    \begin{equation*}
        \norm{\minimizerWeighted(\parameterTuple)-\minimizerWeighted(\parameterTuple')}_2 \leq \frac{\zeta(\minimizerWeighted(\parameterTuple'))}{\scalarization(\strongConvexityParamTuple)} \sum_{\objectiveindex=1}^K \weight_\objectiveindex \norm{\parameter_k'-\parameter_k}_2.
    \end{equation*}
    Since we assumed that the true parameters satisfy $\parameterTuple\in \parameterSpaceTuple\subset\parameterSpaceTupleLarge$ 
    %\fy{hm this is now a specific one? notational overload} 
    and the estimators satisfy $\estimatedParameterTuple\in\parameterSpaceTupleLarge$ we can plug them in, that is, $\tsEstimatedMinimizerWeighted = \minimizerWeighted(\estimatedParameterTuple)$ and $\minimizerWeighted = \minimizerWeighted(\parameterTuple)$, we get
    \begin{equation*}
        \norm{\tsEstimatedMinimizerWeighted-\minimizerWeighted}_2\leq \frac{\zeta(\minimizerWeighted)}{\scalarization(\strongConvexityParamTuple)} \sum_{\objectiveindex=1}^K \weight_\objectiveindex \norm{\estimatedParameter_k-\parameter_k}_2
    \end{equation*}
    which is the desired upper bound.
\end{proof}

\subsection{Proof of Proposition \ref{prop:LipschitzBound}}
\label{subsec:proof-LipschitzBound}

\LipschitzBound*

\begin{proof}
We use the standard uniform learning bound
    \begin{align*}
        \excessscalarization(\tsEstimatedMinimizerWeighted) &=\scalarizationObjectiveComposition(\tsEstimatedMinimizerWeighted,\parameterTuple)-\scalarizationObjectiveComposition(\minimizerWeighted,\parameterTuple)\\
        &= \scalarizationObjectiveComposition(\tsEstimatedMinimizerWeighted,\parameterTuple) - \scalarizationObjectiveComposition(\tsEstimatedMinimizerWeighted,\estimatedParameterTuple) +\underbrace{\scalarizationObjectiveComposition(\tsEstimatedMinimizerWeighted,\estimatedParameterTuple) -\scalarizationObjectiveComposition(\minimizerWeighted,\estimatedParameterTuple)}_{\leq 0}+\scalarizationObjectiveComposition(\minimizerWeighted,\estimatedParameterTuple)-\scalarizationObjectiveComposition(\minimizerWeighted,\parameterTuple)\\
        &\leq 2\sup_{\vartheta\in \RR^d} \abs{\scalarizationObjectiveComposition(\vartheta,\parameterTuple)-\scalarizationObjectiveComposition(\vartheta,\estimatedParameterTuple)}\\
        &=2\sup_{\vartheta\in \RR^d} \abs{\norm{{\weightTuple}\odot\objectivevector(\vartheta,\parameterTuple)}-\norm{{\weightTuple}\odot\objectivevector(\vartheta,\estimatedParameterTuple)}}.
    \end{align*}
    Using the reverse triangle inequality, we can further bound this as
    \begin{equation*}
        \excessscalarization(\tsEstimatedMinimizerWeighted)\leq 2\sup_{\vartheta\in \RR^d} \norm{{\weightTuple}\odot\objectivevector(\vartheta,\parameterTuple)-{\weightTuple}\odot\objectivevector(\vartheta,\estimatedParameterTuple)} =2\sup_{\vartheta\in \RR^d} \norm{{\weightTuple}\odot\pr{\objectivevector(\vartheta,\parameterTuple)-\objectivevector(\vartheta,\estimatedParameterTuple)}} \leq 2\norm{{\weightTuple}\odot\pr{\Phi(\parameter_\objectiveindex,\estimatedParameter_\objectiveindex)}_{\objectiveindex=1}^K}
    \end{equation*}
    where the last inequality holds by assumption, and hence, we have the result.
\end{proof}

\subsection{Proof of Theorem \ref{thm:GeneralLowerBound}}
\label{subsec:proof-GeneralLowerBound}

\GeneralLowerBound*

\begin{proof}[Proof of \cref{thm:GeneralLowerBound}]
    The proof of \cref{thm:GeneralLowerBound} makes use of the identifiability assumption from \cref{ass:Injectivity}. We prove the lower bound by contradiction: Assuming that we could estimate $\minimizerWeighted$ better than the stated bound (in a minimax sense), we show that the identifiability assumption implies that we could also estimate one of the parameters $\parameter_k$ faster than the minimax rate $\delta_k$. As this is impossible by definition, the lower bound follows.
    
    Note that by first-order optimality, we have that
    \begin{equation*}
        \nabla_\vartheta \scalarizationObjectiveComposition(\minimizerWeighted,\parameterTuple )=0,
    \end{equation*}
    which is known as Pareto stationarity \citep{Roy2023optimizationparetosetstheory}.
    Expanding and rearranging yields for every $k$
    \begin{equation*}
        \nabla_\vartheta \Loss_k(\minimizerWeighted,\parameter_k)=-\frac{1}{\weight_k}\sum_{i\neq k}\weight_i\nabla_\vartheta \Loss_i(\minimizerWeighted,\parameter_i).
    \end{equation*}
    Since by \cref{ass:Injectivity}, the map $g_k(\cdot;\minimizerWeighted):\parameter_k\mapsto\nabla_\vartheta \objectiveindexed(\minimizerWeighted,\parameter_k)$ is injective for every $\minimizerWeighted$, we can write
    \begin{equation*}
        \parameter_k=g_k^{-1}\pr{ -\frac{1}{\weight_k}\sum_{i\neq k}\weight_i\nabla_\vartheta \Loss_i(\minimizerWeighted,\parameter_i);\minimizerWeighted}.
    \end{equation*}
    Now, given any estimators $\estimatedMinimizerWeighted$ of $\minimizerWeighted$ and $\estimatedParameterTuple$ of $\parameterTuple$ %. Based on these estimators, 
    we define the new estimator
    \begin{equation*}
        \estimatedParameter_k^{\text{new}}=g_k^{-1}\pr{-\frac{1}{\weight_k}\sum_{i\neq k}\weight_i\nabla_\vartheta \Loss_i(\estimatedMinimizerWeighted,\estimatedParameter_i);\estimatedMinimizerWeighted}.
    \end{equation*}
    We can bound the error of this new estimator using the Lipschitz-continuity of the left-inverse (\cref{ass:Injectivity}) and get
    \begin{align*}
        \norm{\estimatedParameter_k^{\text{new}}-\parameter_k} &\leq \eta_{k}'\pr{\norm{\estimatedMinimizerWeighted-\minimizerWeighted}_2+\norm{ \frac{1}{\weight_k}\sum_{i\neq k}\weight_i\pr{\nabla_\vartheta \Loss_i(\estimatedMinimizerWeighted,\estimatedParameter_i)-\nabla_\vartheta \Loss_i(\minimizerWeighted,\parameter_i)}}_2} \\
        &\leq \eta_{k}'\pr{\norm{\estimatedMinimizerWeighted-\minimizerWeighted}_2+\frac{1}{\weight_k}\sum_{i\neq k}\weight_i\norm{\nabla_\vartheta \Loss_i(\estimatedMinimizerWeighted,\estimatedParameter_i)-\nabla_\vartheta \Loss_i(\minimizerWeighted,\parameter_i)}_2}.
    \end{align*}
    Further, by \cref{ass:Injectivity} we have for all estimators $\estimatedParameter_i$
    \begin{equation*}
        \norm{\nabla_\vartheta \Loss_i(\estimatedMinimizerWeighted,\estimatedParameter_i)-\nabla_\vartheta \Loss_i(\minimizerWeighted,\parameter_i)}_2
        \leq \eta_{\objectiveindex}\pr{\norm{\estimatedParameter_i-\parameter_i}+\norm{\estimatedMinimizerWeighted-\minimizerWeighted}_2}.
    \end{equation*}
    Consequently, we get
    \begin{align*}
        \norm{\estimatedParameter_k^{\text{new}}-\parameter_k} &\leq 
        \eta_{k}'\pr{\norm{\estimatedMinimizerWeighted-\minimizerWeighted}_2+\frac{1}{\weight_k}\sum_{i\neq k}\weight_i\norm{\nabla_\vartheta \Loss_i(\estimatedMinimizerWeighted,\estimatedParameter_i)-\nabla_\vartheta \Loss_i(\minimizerWeighted,\parameter_i)}_2}\\
        &\leq \eta_{k}'\pr{\norm{\estimatedMinimizerWeighted-\minimizerWeighted}_2+\frac{1}{\weight_k}\sum_{i\neq k}\weight_i\eta_{i}\pr{\norm{\estimatedParameter_i-\parameter_i}+\norm{\estimatedMinimizerWeighted-\minimizerWeighted}_2}} \\
        &\leq (1+\scalarization(\etabold))\frac{\eta_k'}{\weight_k}\norm{\estimatedMinimizerWeighted-\minimizerWeighted}_2+\frac{\eta_k'}{\weight_k}\sum_{i\neq k}\weight_i\eta_{i}\norm{\estimatedParameter_i-\parameter_i}
    \end{align*}
    which we can rearrange to get
    \begin{align*}
        \norm{\estimatedMinimizerWeighted-\minimizerWeighted}_2 \geq \pr{1+\scalarization(\etabold)}^{-1}\pr{\frac{\weight_k}{\eta_{k}'}\norm{\estimatedParameter_k^{\text{new}}-\parameter_k}-\sum_{i\neq k}\eta_{i}\weight_i\norm{\estimatedParameter_i-\parameter_i}}.
    \end{align*}
    Notice that this lower bound holds for \emph{all} choices of estimators $\estimatedParameterTuple$ and $\estimatedMinimizerWeighted$. Choosing $\estimatedParameter_\objectiveindex$ to be minimax optimal for all $k\in[K]$, we then obtain for all $k\in[K]$ that
    %that is, we choose
    \begin{equation*}
        \sup_{\distributionXxYTuple\in\distributionSetTuple}\EE\norm{\estimatedParameter_k^{\text{new}}-\parameter_k}\geq \sup_{\distributionXxYTuple\in\distributionSetTuple} \EE\br{\norm{\estimatedParameter_\objectiveindex-\parameter_\objectiveindex}}=\delta_\objectiveindex.
    \end{equation*}
    %and also notice that by definition, 
    % we then obtain
    % \begin{equation*}
    %     \sup_{\distributionXxYTuple\in\distributionSetTuple}\EE\norm{\estimatedParameter_k^{\text{new}}-\parameter_k}_2 \geq \delta_k.
    % \end{equation*}
    Hence, using the notation $\estimatedParameter^{\text{new}}_k = \estimatedParameter^{\text{new}}_k(\estimatedMinimizerWeighted)$ to highlight the dependence, this yields the lower bound
    \begin{align*}
        \inf_{\estimatedMinimizerWeighted} \sup_{\distributionXxYTuple\in\distributionSetTuple} \EE\br{\norm{\estimatedMinimizerWeighted-\minimizerWeighted}_2} &\geq \inf_{\estimatedMinimizerWeighted} \sup_{\distributionXxYTuple\in\distributionSetTuple} \EE\br{\pr{1+\scalarization(\etabold)}^{-1}\pr{\frac{\weight_k}{\eta_{k}'}\norm{\estimatedParameter_k^{\text{new}}(\estimatedMinimizerWeighted)-\parameter_k}-\sum_{i\neq k}\eta_{i}\weight_i\norm{\estimatedParameter_i-\parameter_i}}} \\
        &\geq \pr{1+\scalarization(\etabold)}^{-1}\pr{\frac{\weight_k}{\eta_{k}'} \inf_{\estimatedMinimizerWeighted} \sup_{\distributionXxYTuple\in\distributionSetTuple} \EE\br{\norm{\estimatedParameter_k^{\text{new}}(\estimatedMinimizerWeighted)-\parameter_k}}-\sum_{i\neq k}\eta_{i}\weight_i \sup_{\distributionXxYTuple\in\distributionSetTuple}\EE\br{\norm{\estimatedParameter_i-\parameter_i}}} \\
        &\geq \pr{1+\scalarization(\etabold)}^{-1}\pr{\frac{\weight_k}{\eta_{k}'}\delta_k-\sum_{i\neq k}\eta_{i}\weight_i \delta_i}.
    \end{align*}
    And of course, $0$ is a trivial lower bound. 
    %Therefore, by Jensen's inequality, we get that
    %\begin{equation*}
    %    \inf_{\estimatedParameter} \sup_{\distributionXxYTuple\in\distributionSetTuple} \EE\br{\norm{\estimatedParameter-\minimizerWeighted}_2^2} \geq \inf_{\estimatedParameter} \sup_{\distributionXxYTuple\in\distributionSetTuple} \pr{\EE\br{\norm{\estimatedParameter-\minimizerWeighted}_2}}^2 \geq \pr{1+\scalarization(\nu)}^{-2}\abs{\frac{\weight_k}{C_{k}'}\delta_k-\sum_{i\neq k}C_{i}\weight_\objectiveindex \delta_\objectiveindex}_+^2
    %\end{equation*}
    Since the argument was valid for any $k$, we can take the maximum over $k$. This concludes the proof of \cref{thm:MinimaxRateStrongConvexity}.
\end{proof}

\subsection{Proof of Corollary \ref{cor:MultiDistributionOurs}}
\label{subsec:proof-MultiDistributionOurs}

%\MultipleLinearRegression*

\begin{assumption}[Scalings]
\label{ass:technical-assumptions-multi-distribution}
\begin{enumerate}
\setlength\itemsep{0em}
    \item We let $b\lesssim 1 \lesssim \sigma$.
    \item For all $k\in[K]$, $n_\objectiveindex+N_\objectiveindex\gtrsim d(B^4/b^4)$.
    \item For all $k\in[K]$, $n_k\gtrsim (B^2/b^2)\sigma^2 s\log d$.
    \item For the lower bound we assume that for some large enough universal constant $C>0$ and some $k\in[K]$, it holds that $\weight_k \frac{b^3}{B^2} n_k^{-1/2} \geq C \sum_{i\neq k} \weight_i n_i^{-1/2}$. This assumption corresponds to \eqref{eq:sufficient-condition-minimax}.
\end{enumerate}
\end{assumption}

\MultiDistributionOurs*

\begin{proof}
    %We note that this corollary could be proved directly. However, we use \cref{thm:MinimaxRateStrongConvexity} to demonstrate its applicability.\\
    To prove the upper bound of \cref{cor:MultiDistributionOurs}, we apply \cref{thm:MinimaxRateStrongConvexity}, to prove the lower bound, we apply \cref{thm:GeneralLowerBound}.
    
    By definition of $\distributionSetTuple$ in \cref{ex:multiple-linear-regression}, we have (neglecting vectorization) that $\parameterSpaceTuple=\parameterSpace^K$, where
    \begin{equation*}
        \parameterSpace = \mathset{(\beta,\covariance)\in \RR^{d}\times \RR^{d\times d}\setmid \norm{\beta}_0\leq s,\norm{\beta}_2\leq 1, \covariance \text{ is symmetric and } b^2 \identity_d \preceq \covariance \preceq B^2  \identity_d}.
    \end{equation*}
    The lower bound on $\covariance$ was assumed, and the upper bound holds because $X$ was assumed to be $B^2$-sub-Gaussian, implying that the largest eigenvalue is bounded as
    \begin{equation*}
        \eigenvalueMax(\EE\br{XX^\top}) =  \sup_{\norm{v}_2=1} v^\top \EE\br{XX^\top} v  = \sup_{\norm{v}_2=1}\EE\br{\inner{X}{v}^2}\leq B^2.
    \end{equation*}

    To apply \cref{thm:MinimaxRateStrongConvexity,thm:GeneralLowerBound}, we show that \cref{ass:StrongConvexitySmoothness}, \ref{ass:LocallyLipschitz} and \ref{ass:Injectivity} hold with the sets $\parameterSpaceTupleLarge =\parameterSpaceLarge^K$, where
    \begin{align*}
        \parameterSpaceLarge &= \mathset{(\beta,\covariance) \in \RR^d \times \RR^{d\times d}\setmid \norm{\beta}_2\leq 2, \covariance \text{ is symmetric and } (1/2) b^2 \identity_d \preceq \covariance \preceq 2B^2  \identity_d}, 
    \end{align*}
    and then use that our estimates lie in these sets with high probability.
    
    \textbf{Upper bound.} 
    %\begin{quote}
    %\vspace{-0.8cm}
    %\StrongConvexitySmoothness*
    %\end{quote}
    It is easy to show that \cref{ass:StrongConvexitySmoothness} holds.
    Recall that the objectives take the form
    \begin{equation*}
        \objectiveindexed(\vartheta,\beta_k,\covariance_k)=\norm{\covariance_\objectiveindex^{1/2}(\vartheta-\beta_\objectiveindex)}_2^2+\sigma^2 \quad \text{where} \quad (1/2)b^2\identity_d \preceq \covariance_k\preceq 2B^2\identity_d.
    \end{equation*}
    Hence $\vartheta\mapsto \objectiveindexed(\vartheta,\minimizer_\objectiveindex,\covariance_\objectiveindex)$ is strongly convex with parameter $\strongConvexityParam_k=(1/2)b^2$ \cite[\S 3.4]{Bubeck2015Convex} (and smooth with parameter $4B^2$). Recall \cref{ass:LocallyLipschitz}.
    %\begin{quote}
    %    \vspace{-0.8cm}
    %    \LocallyLipschitz*
    %\end{quote}
    Now, the map
    $(\beta_\objectiveindex,\covariance_\objectiveindex)\mapsto \nabla_\vartheta\objectiveindexed(\vartheta,\beta_\objectiveindex,\covariance_\objectiveindex)$ is locally Lipschitz on $\parameterSpaceLarge$, since
    \begin{align*}
        \norm{\nabla_\vartheta\objectiveindexed(\vartheta,\beta_\objectiveindex,\covariance_\objectiveindex) - \nabla_\vartheta\objectiveindexed(\vartheta,\beta_\objectiveindex',\covariance_\objectiveindex')}_2 
        &= 2\norm{\covariance_\objectiveindex(\vartheta-\beta_\objectiveindex)-\covariance_\objectiveindex'(\vartheta-\beta_\objectiveindex')}_2\\
        &= 2\norm{\pr{\covariance_\objectiveindex-\covariance_\objectiveindex'}(\vartheta-\beta_\objectiveindex)+\covariance_\objectiveindex'(\beta_\objectiveindex-\beta_\objectiveindex')}_2\\ 
        &\leq 2\norm{\vartheta-\beta_k}_2\norm{\covariance_\objectiveindex-\covariance_\objectiveindex'}_2+2\norm{\covariance_\objectiveindex'}_2\norm{\beta_\objectiveindex-\beta_\objectiveindex'}_2 \\
        &\leq 2\max\mathset{\norm{\vartheta-\beta_k}_2,\norm{\covariance_k'}_2} \pr{\norm{\covariance_k-\covariance_k'}_2+\norm{\beta_k-\beta_k'}_2} \\
        &\leq \underbrace{2\max\mathset{\norm{\vartheta}_2+2,2B^2}}_{\zeta_k(\vartheta)}\pr{\norm{\covariance_k-\covariance_k'}_2+\norm{\beta_k-\beta_k'}_2}
    \end{align*}
    where the local Lipschitz constant $\zeta_k(\vartheta)$ depends on $\vartheta$ and $B$. Hence, \cref{ass:LocallyLipschitz} is satisfied with the norm $\norm{\theta_k}=\norm{(\beta_k,\covariance_k)}=\norm{\beta_k}_2+\norm{\covariance_k}_2$.

    We now prove that the estimated parameters $\estimatedParameter_k=(\betahat_k,\estimatedCovariance_k)$, where $\estimatedCovariance_k$ is the sample covariance matrix and $\betahat_k$ the LASSO estimate, are contained in $\parameterSpaceLarge$ with high probability for all $k$.
    Recall the definition of $\estimatedCovariance_k$: Denote $\X_k$ the covariate sample matrix that has the labeled samples $X^\objectiveindex_i$, $i=1,\dots,n_k$ as its rows, $\widetilde\X_\objectiveindex$ the sample matrix that has the unlabeled samples $X^\objectiveindex_i$, $i=n_k+1,\dots,N_k$ as its rows, and $y^k=(Y^\objectiveindex_1,\dots,Y^\objectiveindex_{n})^\top$ according to \eqref{eq:sparse-multi-distribution-model}.
    We define the sample covariance matrix as 
    \begin{equation*}
        \estimatedCovariance_\objectiveindex = \frac{1}{n_k+N_k}
        \begin{pmatrix}
            \X_\objectiveindex^\top & \widetilde\X_\objectiveindex^\top
        \end{pmatrix} 
        \begin{pmatrix}
            \X_\objectiveindex \\
            \widetilde\X_\objectiveindex
        \end{pmatrix}
        .
    \end{equation*}
    We let $\betahat_k$ be the LASSO estimates with $\ell_1$-penalty $\alpha\norm{\cdot}_1$, where $\alpha_k =136 B\sigma\sqrt{\log (d) / n_k}$, that is,
    \begin{equation*}
        \betahat_\objectiveindex = \argmin_{\beta\in\RR^d}\frac{1}{n_k}\norm{\X_\objectiveindex \beta-y^\objectiveindex}_2^2 +\alpha_k\norm{\beta}_1.
    \end{equation*}
    For some universal constants $c_1>0$, we define the event
    \begin{equation*}
        \cE_1 = \mathset{ \forall k\in[K]:\  \norm{\estimatedCovariance_\objectiveindex-\covariance_\objectiveindex}_2 \leq c_1 B^2\sqrt{\frac{d}{n_k+N_k}}}.
    \end{equation*}
    By \citet[Theorem 6.5]{Wainwright2019}, or \citet[Corollary 5.50]{Vershynin2010introduction}, and a union bound, $\cE_1$ has probability at least $1-c_2K\cdot \exp\pr{-c_3 B^{4}d}$, where $c_2,c_3>0$ are two more universal constants. On $\cE_1$, it holds that
    \begin{align*}
        \eigenvalueMin(\estimatedCovariance_k) &\geq \eigenvalueMin(\covariance_k)-\norm{\estimatedCovariance_k-\covariance_k}_2 \geq b^2-c_1 B^2\sqrt{\frac{d}{n_k+N_k}} \geq \frac{b^2}{2}, \\
        \eigenvalueMax(\estimatedCovariance_k) &\leq \eigenvalueMax(\covariance_k)+\norm{\estimatedCovariance_k-\covariance_k}_2 \leq B^2+c_1 B^2\sqrt{\frac{d}{n_k+N_k}} \leq 2B^2,
    \end{align*}
    where we used the assumption that $c_1 B^2\sqrt{d/(n_k+N_k)}\leq b^2/2 \wedge B^2 $ (the second follows from $n_k+N_k \geq  d / c_1^2$).
    We also define for some universal constant $c_4>0$ the event
    \begin{equation*}
        \cE_2 = \mathset{\forall k\in [K]: \norm{\betahat_k-\beta_k}_2 \leq c_4 \frac{B\sigma }{b} \sqrt{\frac{s\log d}{n_k}} }
    \end{equation*}
    By \citet[Theorem 7.19]{Wainwright2019}, this event holds whenever for all $k\in [K]$ we have
    \begin{equation}
    \label{eq:lasso-penalty-random}
        \frac{2}{n}\norm{\X_k^\top\xi^k}_\infty \leq \alpha_k = 136 B\sigma \sqrt{\frac{\log d}{n_k}}.
    \end{equation}
    which, we now show holds with probability at least $1-K/(4d^3)$. 
    Since $\X_k$ and $\xi^k$ are independent, we have that $(\X_k^\top \xi^k | \X_k)  \sim \cN(0,\sigma^2 \norm{X^k_{:,j}}_2^2)$ (where $X^k_{:,j}$ is the $j$-th column of $\X_k$), and denoting $\kappa =\max_{j\in[d]} \norm{X^k_{:,j}}_2/\sqrt{n_k}$, we get
    \begin{equation*}
        \PP\pr{\frac{2}{n}\norm{\X_k^\top \xi^k}_\infty \leq 8\sigma  \pr{\frac{\kappa}{\sqrt{n_k}}} \sqrt{\frac{\log d}{n_k}}\setmid \X_k} \geq 1-\frac{1}{8d^3}.
    \end{equation*}
    It remains to bound $\kappa /\sqrt{n_k}$, which we can do using $\rho(\covariance_k)=\max_{j\in [d]}\sqrt{(\covariance_k)_{jj}}\leq \sqrt{\eigenvalueMax(\covariance_k)}\leq B$ as
    \begin{equation*}
        \PP\pr{\frac{\kappa}{\sqrt{n_k}} \leq B\pr{1+16\sqrt{\frac{ \log d}{n_k}}}}\geq 1- d\exp\pr{-4\sqrt{n\log d}},
    \end{equation*}
    cf.\ \citet[Equation (23) and Appendix I]{Raskutti2011minimax}. Therefore, if $n_k\geq \log d$, we have with probability at least $1-1/(4d^3)$ that \cref{eq:lasso-penalty-random} holds for every $k\in[K]$ separately, and taking union bound yields $\PP(\cE_2)\geq 1-K/(4d^3)$.
    
    Thus, on $\cE_2$, we also have that for all $k\in [K]$, $\|\betahat_k\|_2\leq 2$, because we assumed that $c_4 \frac{B\sigma }{b} \sqrt{\frac{s\log d}{n_k}}\leq 1$.

    Consequently, on $\cE_1\cap \cE_2$ we have that for all $k\in[K]$, $\estimatedParameter_k=(\betahat_k,\estimatedCovariance_k)\in \parameterSpaceLarge$. Hence, we may apply the upper bound from \cref{thm:MinimaxRateStrongConvexity} with $\scalarization(\strongConvexityParamTuple)=(1/2)b^2$, that is,
    \begin{align*}
        \norm{\tsEstimatedMinimizerWeighted-\minimizerWeighted}_2&\leq \frac{2\zeta(\minimizerWeighted)}{b^2} \sum_{\objectiveindex=1}^K \weight_k\pr{\|\betahat_\objectiveindex-\beta_\objectiveindex\|_2+\norm{\estimatedCovariance_\objectiveindex-\covariance_\objectiveindex}_2} \\
        &\lesssim \frac{\zeta(\minimizerWeighted)}{b^2} \sum_{\objectiveindex=1}^K \weight_k\pr{\frac{B\sigma }{b} \sqrt{\frac{s\log d}{n_k}} +  B^2\sqrt{\frac{d}{n_k+N_k}}}
    \end{align*}
    
    Notably, we can turn it into an explicit uniform upper bound by bounding $\minimizerWeighted=\minimizerWeighted(\parameterTuple)$ as
    \begin{equation*}
        \sup_{\parameterTuple\in \parameterSpaceTuple} \zeta(\minimizerWeighted) \leq \sup_{\parameterTuple\in \parameterSpaceTuple} 2\max\mathset{\norm{\minimizerWeighted(\parameterTuple)}_2+2,2B^2} =2\max\mathset{\frac{B^2}{b^2} + 2, 2B^2}
    \end{equation*}
    where we used the simple bound from \ref{item:mixture-quadratics}
    \begin{equation*}
        \sup_{\parameterTuple\in \parameterSpaceTuple}\norm{\minimizerWeighted(\parameterTuple)}_2 = \sup_{\parameterTuple\in \parameterSpaceTuple} \norm{\pr{\sum_{k=1}^K\weight_k \covariance_k}^{-1}\pr{\sum_{k=1}^K\weight_k \covariance_k\beta_k}}_2 \leq \frac{B^2}{b^2}. 
    \end{equation*}
    Hence, using $2\max\mathset{\frac{B^2}{b^2} + 2, 2B^2}/b^2\leq 6\frac{B^2}{b^2}\max\mathset{\frac{1}{b^2},1}=  6\frac{B^2}{b^4}$ since $b\leq 1$, the bound becomes
    \begin{align*}
        \norm{\tsEstimatedMinimizerWeighted-\minimizerWeighted}_2&\lesssim \frac{B^2}{b^4} \sum_{\objectiveindex=1}^K \weight_k\pr{\frac{B\sigma }{b} \sqrt{\frac{s\log d}{n_k}} +  B^2\sqrt{\frac{d}{n_k+N_k}}} \\
        &\leq \frac{B^4}{b^4} \sum_{\objectiveindex=1}^K \weight_k\pr{\frac{\sigma }{b^2} \sqrt{\frac{s\log d}{n_k}} + \sqrt{\frac{d}{n_k+N_k}}}
    \end{align*}
    
    which concludes the first part of the proof.

    \textbf{Lower bound.}
    Throughout, denote $\covarianceTuple$ as the tuple of fixed covariance matrices that are known to the algorithm, each satisfying $b^2\identity_d\preceq \covariance_k \preceq B^2\identity_d$. We first have to check that \cref{ass:Injectivity} holds. 
    %\begin{quote}
    %\vspace{-0.8cm}
    %\Injectivity*
    %\end{quote}
    Note that $\nabla_\vartheta\objectiveindexed(\vartheta,\beta_\objectiveindex,\covariance_\objectiveindex) = 2\covariance_k(\vartheta-\beta_\objectiveindex)$, so that $g^{-1}_\objectiveindex(y,\vartheta) = \vartheta -\frac{1}{2}\covariance_\objectiveindex^{-1} y$
    and hence
    \begin{align*}
        \norm{g_\objectiveindex(\beta_k,\vartheta)-g_\objectiveindex(\beta_k',\vartheta')}_2&= \norm{2\covariance_k(\vartheta-\beta_\objectiveindex)-2\covariance_k(\vartheta'-\beta_\objectiveindex')}_2\leq 2B^2 \pr{\norm{\vartheta-\vartheta'}_2+\norm{\beta_k-\beta_k'}_2},\\
        \norm{g^{-1}_\objectiveindex(y,\vartheta)-g^{-1}_\objectiveindex(y',\vartheta')}_2 &= \norm{\vartheta -\frac{1}{2}\covariance_\objectiveindex^{-1} y-\vartheta' +\frac{1}{2}\covariance_\objectiveindex^{-1} y'}_2 \leq \norm{\vartheta-\vartheta'}_2 + \frac{1}{2b^2}\norm{y-y'}_2,
    \end{align*}
    and therefore \cref{ass:Injectivity} holds with $\eta_k=2B^2$ and $\eta_k'=\max\mathset{1,1/(2b^2)}=1/(2b^2)$ as we assume $b^2\leq 1/2$.
    We can hence apply the lower bound from \cref{thm:MinimaxRateStrongConvexity}. For that, we use \citet[Example 15.16]{Wainwright2019}, which is an application of Fano's method \citep{Yu1997assouad} with local packings. Conditioned on $\X_k$, it yields the minimax lower bound
    \begin{equation*}
        \inf_{\betahat_k}\sup_{\substack{\norm{\beta_k}_0\leq s, \\ \norm{\beta_k}_2\leq 1} } \EE\br{\norm{\betahat_k-\beta_k}_2\setmid \X_k} \gtrsim \frac{\sigma}{\gamma(\X_k)} \sqrt{\frac{s\log d}{n_k}}
    \end{equation*}
    where $\gamma(\X_k)=\max_{\abs{S}=2s} \sigma_{\max}((\X_k)_S/\sqrt{n})$ and $\sigma_{\max}$ denotes the largest singular value. For any $\abs{S}=2s$, we can bound this using \cite[Equation (6.16)]{Wainwright2019}
    \begin{equation*}
        \EE\br{\sigma_{\max}((\X_k)_S/\sqrt{n})} \leq \sqrt{\eigenvalueMax((\covariance_k)_S)}+\sqrt{\frac{\tr{(\covariance_k)_S}}{n_k}}
    \end{equation*}
    As $\covariance_k\preceq B^2\identity_d$, this is bounded by $B+B\sqrt{s/n_k}=B(1+\sqrt{s/n_k})$. Hence, if $n_k\geq s$, we have by Jensen's inequality
    \begin{equation*}
        \inf_{\betahat_k}\sup_{\substack{\norm{\beta_k}_0\leq s, \\ \norm{\beta_k}_2\leq 1} } \EE\br{\EE\br{\norm{\betahat_k-\beta_k}_2\setmid \X_k}} \gtrsim \frac{\sigma}{\EE\br{\gamma(\X_k)}} \sqrt{\frac{s\log d}{n_k}} \gtrsim \frac{\sigma}{B} \sqrt{\frac{s\log d}{n_k}}.
    \end{equation*}
    Therefore, combining this lower bound with the upper bound on the minimax rate from the previous section of the proof, \cref{thm:MinimaxRateStrongConvexity} yields that if for some $k$, with large enough constant $C>0$ it holds
    \begin{equation*}
        \weight_k b^2 \frac{\sigma}{B} \sqrt{\frac{s\log d}{n_k}} \geq C \sum_{i\neq k} \weight_i  \frac{\sigma B}{b} \sqrt{\frac{s\log d}{n_i}},
    \end{equation*}
    we get using $(1+\scalarization(\etabold))^{-1}\geq 1/(1+2B^2)\gtrsim 1/B^2$ that
    \begin{equation*}
        \minimax(\distributionSetTuple) \gtrsim  \max_{k\in[K]}\weight_k\frac{b^2\sigma}{B^3} \sqrt{\frac{s\log d}{n_k}}.
    \end{equation*}
    As we assumed the above condition, up to canceling and rearranging terms, this finished the proof.
\end{proof}

\subsection{Proof of Corollary \ref{cor:FairnessCorollary}}
\label{subsec:proof-FairnessCorollary}

%\StrongConvexitySmoothness*

%\LocallyLipschitz*

\FairnessCorollary*

\begin{proof}     
        As before, to be able to apply the upper bound from \cref{thm:MinimaxRateStrongConvexity}, we need to check that \cref{ass:StrongConvexitySmoothness} and \ref{ass:LocallyLipschitz} hold. 
         
        First, notice that in the specific distribution that we are considering we have
        \begin{align*}
            \EE\br{XX^\top} &= \frac{1}{2}\EE\br{XX^\top | A=1} + \frac{1}{2}\EE\br{XX^\top | A=-1} \\
            &=\frac{1}{2}\pr{\EE\br{(X-\mu)(X-\mu)^\top | A=1}+ \mu_1\mu_1^\top} + \frac{1}{2}\pr{\EE\br{(X+\mu)(X+\mu)^\top | A=-1}+ \mu\mu^\top} \\
            &= \identity_d + \mu\mu^\top.
        \end{align*}
        and hence we can write
        \begin{align*}
            \Lrisk(\minimizer,\beta,\mu) = \norm{(\identity_d+\mu\mu^\top)^{1/2}(\minimizer-\beta)}_2^2+\sigma^2 \implies \nabla_\minimizer \Lrisk(\vartheta,\beta,\mu) = 2(\identity_d+\mu\mu^\top)(\minimizer-\beta).
        \end{align*}
        Moreover, by \cref{lem:fairness-objective-rewrite}, we can write $\Lfair(\minimizer,\distributionXxY)$ as
        \begin{equation*}
            \Lfair(\vartheta,\mu) = \inner{\mu}{\vartheta}^2 \implies \nabla\Lfair(\vartheta,\mu)=2\inner{\mu}{\vartheta}\mu.
        \end{equation*}
        We may define $\parameterSpaceTuple=\parameterSpace_{\riskOp}\times \parameterSpace_{\fairOp}$ and $\parameterSpaceTupleLarge=\parameterSpaceLarge_{\riskOp}\times \parameterSpaceLarge_{\fairOp}$, where
        \begin{align*}
            \parameterSpaceTuple &= \mathset{(\parameter_{\riskOp},\parameter_{\fairOp})=((\beta,\mu),\mu)\in \RR^{3d} \setmid \norm{\beta}_0\leq s, \norm{\beta}_2\leq 1, \norm{\mu}_2\leq 1}, \\
            \parameterSpaceTupleLarge &= \mathset{(\parameter_{\riskOp},\parameter_{\fairOp})=((\beta,\mu),\mu)\in \RR^{3d} \setmid \norm{\beta}_2\leq 2, \norm{\mu}_2\leq 2}.
        \end{align*}
        
        Clearly, $\vartheta\mapsto \Lrisk(\vartheta,\beta,\mu)$ is $1$-strongly convex \cite[\S3.4]{Bubeck2015Convex} and continuously differentiable for all $((\beta,\mu),\mu)\in \parameterSpaceTupleLarge$, cf. proof of \cref{cor:MultiDistributionOurs}. Moreover, the map $(\beta,\mu)\mapsto \nabla_\vartheta \Lrisk(\vartheta,\beta,\mu)$ is locally Lipschitz continuous on $\parameterSpaceTupleLarge$, as
        \begin{align*}
            \norm{\nabla_\vartheta \Lrisk(\vartheta,\beta,\mu)-\nabla_\vartheta\Lrisk(\vartheta,\beta',\mu')}_2 &=\norm{2(\identity_d+\mu\mu^\top)(\minimizer-\beta)-2(\identity_d+\mu'\mu'^\top)(\minimizer-\beta')}_2 \\
            &= 2\norm{(\mu\mu^\top-\mu'\mu'^\top)(\minimizer-\beta) + (\identity_d+\mu'\mu'^\top)(\beta'-\beta)}_2 \\
            &\leq 2(\norm{\mu\mu^\top-\mu'\mu'^\top}_2\norm{\minimizer-\beta}_2 + \norm{\identity_d+\mu'\mu'^\top}_2\norm{\beta'-\beta}_2) \\
            &\leq \underbrace{(16+8\norm{\vartheta}_2)}_{=:\zeta_{\riskOp}(\vartheta)}\pr{\norm{\mu-\mu'}_2+\norm{\beta-\beta'}_2}
        \end{align*}
        where we used $\norm{\vartheta-\beta}_2\leq 2+\norm{\vartheta}_2$ and $\norm{\identity_d+\mu'\mu'^\top}_2\leq 3$, as well as $\norm{\mu\mu^\top-\mu'\mu'^\top}_2 \leq \norm{\mu-\mu'}_2(\norm{\mu}_2+\norm{\mu'}_2)\leq 4\norm{\mu-\mu'}_2$.
        Therefore, the risk objective satisfies the conditions from \cref{ass:StrongConvexitySmoothness} and, in particular, is strongly convex with $\weightrisk>0$ (by assumption), so that the fairness objective does not need to be strongly convex.
        
         %Hence, the optimization problem from \eqref{eq:stage-2} is independent of $\estimatedMinimizer_{\text{fair}}$ and so is the bound from \cref{thm:MinimaxRateStrongConvexity}. 
        %The reader may think of this as setting $\minimizer_{\text{fair}}=0$ (which is always a minimizer) and also later on always choose $\estimatedMinimizer_{\text{fair}}=0$.

        Now, notice that $\vartheta\mapsto \Lfair(\vartheta,\mu)$ is clearly convex and twice continuously differentiable, however, it is not strongly convex. Moreover, the map
        $\mu\mapsto \nabla_\vartheta\Lfair(\vartheta,\mu)=2\inner{\mu}{\vartheta}\mu$ is locally Lipschitz, since
        \begin{equation*}
            \norm{\nabla_\vartheta\Lfair(\vartheta,\mu) - \nabla_\vartheta\Lfair(\vartheta,\mu')}_2 = 2\norm{\inner{\mu}{\vartheta}\mu-\inner{\mu'}{\vartheta}\mu'}_2 \leq \underbrace{8\norm{\vartheta}_2}_{\zeta_{\fairOp}(\vartheta)}\norm{\mu-\mu'}_2.
        \end{equation*}
    Therefore, \cref{ass:StrongConvexitySmoothness} and \ref{ass:LocallyLipschitz} are satisfied.

    It remains to show that $\estimatedParameterTuple = ((\betahat,\muhat),\muhat) \in \parameterSpaceTupleLarge$ with high probability. 
    Recall that the two-stage estimator uses the estimator
    $\estimatedParameterTuple$ chosen like this: Denoting $\X$ as the design matrix and $y$ as the vector of noisy responses, $\betahat$ is defined as the LASSO
    \begin{equation*}
        \estimatedMinimizer \in \argmin_{\vartheta\in\RR^d}\norm{\X \vartheta-y}_2^2 +136\sigma\sqrt{\frac{\log d}{n}}\norm{\vartheta}_1.
    \end{equation*}
    To estimate $\mu$ the two-stage estimator uses the standard mean estimation
    \begin{equation*}
        \muhat : = \frac{1}{n+N}\sum_{i=1}^{n+N} A_i X_i.
    \end{equation*}
    Now, define the event
    \begin{equation*}
        \cE_1 = \mathset{\norm{\muhat-\mu}_2\leq c_1\sqrt{\frac{d}{n+N}}}
    \end{equation*}
    which holds with probability at least $1-c_2\exp\pr{-c_3 d}$ by concentration of a $\chi_d^2$-distribution with $d$ degrees of freedom. Furthermore, define the event
    \begin{equation*}
        \cE_2 = \mathset{ \norm{\betahat-\beta}_2 \leq c_4 \sigma\sqrt{\frac{s\log d}{n}} }
    \end{equation*}
    which holds with probability at least $1-c_5d^{-3}$ by the derivations in the proof of \cref{cor:MultiDistributionOurs}.
    Hence, if $n+N\gtrsim d$, on $\cE_1 \cap \cE_2$ we have that $\estimatedParameterTuple = ((\betahat,\muhat),\muhat) \in \parameterSpaceTupleLarge$.
    
    Consequently, on $\cE_1\cap \cE_2$, 
    noting that $\mu$ is shared across objectives (i.e., we can combine the two error terms), the upper bound in \cref{thm:MinimaxRateStrongConvexity} applies and since $\weightrisk+\weightfair=1$, we obtain 
    \begin{align*}
        \norm{\tsEstimatedMinimizerWeighted-\minimizerWeighted}_2 &\leq\frac{16+8\norm{\minimizerWeighted}_2}{\weightrisk} \pr{\weightrisk\norm{\betahat-\beta}_2 + \norm{\muhat-\mu}_2} \\
        &\lesssim \pr{1+\norm{\minimizerWeighted}_2}\pr{\sigma\sqrt{\frac{s\log d}{n}}+\frac{1}{\weightrisk}\sqrt{\frac{d}{n+N}}},
    \end{align*}
    which holds with probability at least $1-c_2\exp\pr{-c_3d}-c_5d^{-3}\geq 1-c d^{-3}$ by the union bound.
    Finally, we conclude the proof since by \ref{item:mixture-quadratics} we have
    \begin{equation*}
        \norm{\minimizerWeighted}_2 = \norm{\pr{\weightfair\mu\mu^\top +\weightrisk(\identity_d+\mu\mu^\top)}^{-1}\weightrisk(\identity_d+\mu\mu^\top)\beta}_2\leq 1.
    \end{equation*}
    %This concludes the proof.
\end{proof}

\subsection{Proof of Proposition \ref{prop:NecessityUnlabeledData}}
\label{subsec:proof-NecessityUnlabeledData}
\NecessityUnlabeledData*
\begin{proof}[Proof of \cref{prop:NecessityUnlabeledData}]  
    The proof follows from Fano's method \citep{Fano1961transmission,Yang1999information,Yu1997assouad} using local packings, applied to our setting. Specifically, similar to the proof of \cref{prop:InsufficiencyPluginRegularization}, we will use that for any vector $v$ in $C\delta B_2^d$ (for some constant $C>1$), we can find covariance matrices $\covariance_k$ so that the corresponding solution $\minimizerWeighted$ equals $v$. We then consider a $\delta$-packing of $C\delta B_2^d$ and apply Fano's method: To that end, we bound the mutual information between Gaussian distributions with the covariance matrices that correspond to the packing. Some calculations yield the lower bound.

    We write $N$ instead of $n+N$ for brevity throughout the proof. 
    For $t\in (0,1/4]$ recall the definition of the scaled $\ell_2$-ball $tB_2^d=\mathset{v\in\RR^d \setmid \norm{v}_2\leq t}$.
    Then for all $v,v'\in tB_2^d$ it holds $-t\leq\inner{v}{\beta}\leq t$ and $\norm{v-v'}_2\leq 2t$. Let $v_1\dots,v_M$ be a $\delta$-packing of $tB_2^d$ in $\ell_2$-norm. 
    By standard arguments \citep[Example 5.8]{Wainwright2019}, there exists such a packing with $\log M \geq d\log \pr{t/\delta}$.
    
    For any vector $v\in tB_2^d$, we now define the matrix
    \begin{equation*}
        A(v) = v\beta^\top + \beta v^\top - \inner{v}{\beta}\beta\beta^\top
    \end{equation*}
    and notice that $A(v)\beta = v + \inner{v}{\beta} \beta -\inner{v}{\beta}\beta = v$.
    Based on $A(v)$ we define the pairs of covariance matrices $\covarianceTuple^j = (\covariance_1^j,\covariance_2^j)$ with
    \begin{equation*}
         \covariance_1^j =  \identity_d + A(v_j) \qquad \text{and}\qquad \covariance_2^j = \identity_d-A(v_j).
    \end{equation*}
    For $\covarianceTuple = (\covariance_1,\covariance_2)$, recall from \ref{item:mixture-quadratics} that the multi-objective solution is given by 
    \begin{equation*}
        \minimizerWeighted(\covarianceTuple) = \pr{\covariance_1+\covariance_2}^{-1}\pr{\covariance_1\beta-\covariance_2\beta}.
    \end{equation*}
    and hence we immediately see that
    \begin{align*}
        \minimizerWeighted(\covarianceTuple^j) &= (\covariance_1^j+\covariance_2^j)^{-1}(\covariance_1^j\beta-\covariance_2^j\beta) \\
        &= (\identity_d+A(v_j)+\identity_d-A(v_j))^{-1}((\identity_d+A(v_j))\beta-(\identity_d-A(v_j))\beta) \\
        &= (2\identity_d)^{-1}(2v_j) \\
        &= v_j
    \end{align*}
    as well as the eigenvalues of $\covariance_1^j,\covariance_2^j$ lying within $1+2\inner{\beta}{v_j}\leq 1+2t\leq 3/2$ and $1-\inner{v_j}{\beta}\geq 1-2t\geq 1/2$.
    Hence, for every $i\in [M]$, there exists a pair of distributions in $\distributionSetTuple$ so that $\minimizerWeighted= v_i$. 

    We now apply a version of Fano's method \citep[Section 15.3]{Wainwright2019} that is based on local packing.
    Define $J\sim U([M])$ and $(Z| J=j) \sim \cN(0,\covariance_1^j)\times \cN(0,\covariance_2^j)$. By \citet[Proposition 15.12]{Wainwright2019}, we have for any $\delta>0$ that
    \begin{equation*}
        \minimax(\distributionSetTuple)=\inf_{\estimatedParameter}\sup_{\distributionXxYTuple\in\distributionSetTuple} \EE_{\distributionXxYTuple}\br{\norm{\estimatedParameter-\minimizerWeighted}_2} \geq \delta \pr{1-\frac{I(Z;J)+\log 2}{\log M}}
    \end{equation*}
    and $I(Z;J)$ is the mutual information between $Z$ and $J$.
    We bound $I(Z;J)$ using \citet[Lemma 15.17 and Equation (15.45)]{Wainwright2019}:
    \begin{align*}
        I(Z;J) &\leq  \frac{N}{2}\pr{\log\det\cov(Z)-\frac{1}{M}\sum_{i=1}^M\log \det(\covariance_1^i \otimes \covariance_2^i)}.
    \end{align*}
    Note that unconditionally, $Z\sim \cN(0, \frac{1}{M}\sum_{i=1}^M\covariance_1^i\otimes \covariance_2^i)$, where $A\otimes B$ denotes the matrix with the blocks $A$ and $B$ on its diagonal. Hence,
    \begin{align*}
        &\log\det\cov(Z)-\frac{1}{M}\sum_{i=1}^M\log \det(\covariance_1^i \otimes \covariance_2^i) \\
        &= \log \det\pr{\frac{1}{M}\sum_{i=1}^M\covariance_1^i\otimes \covariance_2^i} -\frac{1}{M}\sum_{i=1}^M\log \det(\covariance_1^i \otimes \covariance_2^i) \\
        &=  \log \det\pr{\frac{1}{M}\sum_{i=1}^M\covariance_1^i}+\log \det\pr{\frac{1}{M}\sum_{i=1}^M \covariance_2^i} -\frac{1}{M}\sum_{i=1}^M\log \det(\covariance_1^i ) -\frac{1}{M}\sum_{i=1}^M\log \det( \covariance_2^i) 
        %&=\pr{\log \det\pr{\frac{1}{M}\sum_{i=1}^M\covariance_1^i} -\frac{1}{M}\sum_{i=1}^M\log \det(\covariance_1^i )}+\pr{\log \det\pr{\frac{1}{M}\sum_{i=1}^M \covariance_2^i}  -\frac{1}{M}\sum_{i=1}^M\log \det( \covariance_2^i)}
    \end{align*}
    We now apply the following lemma, which is proved after concluding the main proof.
    \begin{lemma}
    \label{lem:mutual-information-bound}
        For $\norm{\beta}_2=1$, $v\in tB_2^d$ with $t\in (0,1/4]$, define the matrix
        \begin{equation*}
            A(v) = v\beta^\top + \beta v^\top -\inner{v}{\beta} \beta \beta^\top
        \end{equation*}
        and define for $v_i\in tB_2^d$, $i\in\mathset{1,\dots,M}$ the matrices $\covariance^i_1 = \identity_d + A(v_i)$ and $\covariance_2^i = \identity_d - A(v_i)$.
        Then, for $j\in\mathset{1,2}$, it holds that
        \begin{equation*}
            0\leq \log\det\pr{\frac{1}{M}\sum_{i=1}^M \covariance_j^i}-\frac{1}{M}\sum_{i=1}^M \log\det\pr{\covariance_j^i} \leq 32 t^2.
        \end{equation*}
    \end{lemma}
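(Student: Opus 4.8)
The plan is to reduce the claim to a second-order Taylor estimate for $\log\det$ near the identity, after isolating and cancelling the first-order contributions. Write $\epsilon=+1$ when $j=1$ and $\epsilon=-1$ when $j=2$, so that $\covariance_j^i=\identity_d+\epsilon A(v_i)$ and $\tfrac{1}{M}\sum_{i=1}^M\covariance_j^i=\identity_d+\epsilon\bar A$ with $\bar A:=\tfrac{1}{M}\sum_{i=1}^M A(v_i)$. The lower bound is immediate: $X\mapsto\log\det X$ is concave on the cone of positive definite matrices (and each $\covariance_j^i$ is positive definite by the spectral bound established below), so Jensen's inequality gives $\log\det\pr{\tfrac{1}{M}\sum_i\covariance_j^i}\ge\tfrac{1}{M}\sum_i\log\det\covariance_j^i$.

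For the upper bound I would first record two facts about $A(v)$ for $v\in tB_2^d$ with $t\le 1/4$. Since $\norm{\beta}_2=1$, $A(v)=v\beta^\top+\beta v^\top-\inner{v}{\beta}\beta\beta^\top$ is symmetric with $\tr{A(v)}=\inner{v}{\beta}$, and a short expansion of the Frobenius cross terms (again using $\norm{\beta}_2=1$) yields $\tr{A(v)^2}=\norm{A(v)}_F^2=2\norm{v}_2^2-\inner{v}{\beta}^2\le 2t^2$. Consequently $\norm{A(v)}_2\le\norm{A(v)}_F\le\sqrt{2}\,t\le 1/2$ (this is exactly where $t\le 1/4$ enters), which also makes all the matrices above positive definite. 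By the triangle inequality and convexity of $x\mapsto x^2$, the average inherits the same two bounds: $\norm{\bar A}_2\le\tfrac{1}{M}\sum_i\norm{A(v_i)}_2\le 1/2$ and $\tr{\bar A^2}=\norm{\bar A}_F^2\le\pr{\tfrac{1}{M}\sum_i\norm{A(v_i)}_F}^2\le\tfrac{1}{M}\sum_i\tr{A(v_i)^2}\le 2t^2$.

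Next I would invoke the elementary scalar estimate $\abs{\log(1+x)-x}\le x^2$ valid for $\abs{x}\le 1/2$, applied eigenvalue by eigenvalue, to obtain $\abs{\log\det(\identity_d+B)-\tr{B}}\le\tr{B^2}$ for every symmetric $B$ with $\norm{B}_2\le 1/2$. Applying this once with $B=\epsilon\bar A$ and once, then averaging over $i$, with $B=\epsilon A(v_i)$, and noting that the first-order terms coincide, $\tr{\epsilon\bar A}=\tfrac{\epsilon}{M}\sum_i\inner{v_i}{\beta}=\tfrac{1}{M}\sum_i\tr{\epsilon A(v_i)}$, the difference of the two $\log\det$ quantities collapses into a sum of two quadratic remainders:
\begin{equation*}
\abs{\log\det\pr{\tfrac{1}{M}\sum_{i=1}^M\covariance_j^i}-\tfrac{1}{M}\sum_{i=1}^M\log\det\covariance_j^i}\le\tr{\bar A^2}+\tfrac{1}{M}\sum_{i=1}^M\tr{A(v_i)^2}\le 2t^2+2t^2=4t^2\le 32t^2 .
\end{equation*}
Together with the Jensen lower bound this yields the claimed two-sided inequality (with ample slack in the constant).

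I expect the only work-heavy point to be the Frobenius identity $\tr{A(v)^2}=2\norm{v}_2^2-\inner{v}{\beta}^2$ — a routine but slightly lengthy bookkeeping of six cross terms — together with verifying the uniform spectral bound $\norm{A(v)}_2\le 1/2$ that keeps every matrix well inside the domain of $\log\det$ and inside the radius of validity of the scalar Taylor estimate. The genuinely useful structural observation, and the reason a quantitative bound is possible at all, is that $\tr{A(v)}$ is \emph{linear} in $v$, so the leading-order terms of $\log\det$ cancel exactly in the difference and only the $O(t^2)$ curvature of $\log\det$ survives; without this cancellation Jensen's inequality would only give an $O(t)$ gap.
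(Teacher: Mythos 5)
Your proof is correct, and it takes a genuinely different route from the paper's. The paper expands $f(A)=\log\det A$ to second order around the averaged matrix $\bar\covariance_j$ using the integral form of the Taylor remainder, observes that the linear terms $\inner{\nabla f(\bar\covariance_j)}{\covariance_j^i-\bar\covariance_j}$ vanish upon averaging, and then controls the Hessian term $\tr{C^{-1}HC^{-1}H}$ via Frobenius-norm bounds (using $\norm{A(v)}_F\le 2\norm{v}_2$ and a spectral lower bound $1-2t$ on the interpolated matrices), arriving at the constant $32$. You instead expand everything around $\identity_d$ via the eigenvalue-wise scalar estimate $\abs{\log(1+x)-x}\le x^2$ on $\abs{x}\le 1/2$, cancel the linear terms using the linearity of $v\mapsto\tr{A(v)}$, and are left with $\tr{\bar A^2}+\frac{1}{M}\sum_i\tr{A(v_i)^2}\le 4t^2$. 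Your identity $\tr{A(v)^2}=2\norm{v}_2^2-\inner{v}{\beta}^2$ checks out (the six cross terms do combine as claimed), as does the spectral bound $\norm{A(v)}_2\le\sqrt{2}\,t\le 1/2$ needed for positive definiteness and for the scalar Taylor estimate. What your approach buys is a fully elementary argument (no matrix calculus, no integral remainder, no bounds on inverses of interpolated matrices) and a sharper constant ($4t^2$ versus $32t^2$); what the paper's approach buys is a template that generalizes directly to expansions around points other than the identity, which is immaterial here. Either constant suffices for the downstream Fano computation in Proposition~\ref{prop:NecessityUnlabeledData}, where only the $O(Nt^2)$ scaling of the mutual information matters.
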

    Therefore, we know that $I(Z,J)\leq 32 N t^2$.
    Combining this with $\log M \geq d\log\pr{t/\delta}$ we have that
    \begin{align*}
        \inf_{\estimatedMinimizerWeighted}\sup_{\distributionXxYTuple\in\distributionSetTuple} \EE_{\distributionXxYTuple}\br{\norm{\estimatedMinimizerWeighted-\minimizerWeighted}_2} &\geq \delta \pr{1-\frac{I(Z;J)+\log 2}{\log M}} \\
        &\geq \delta \pr{1-\frac{32 N t^2 +\log 2}{d\log(t/\delta)}}
    \end{align*}
    Choosing $\delta = \sqrt{\frac{d}{512 N }}$ and $t=e\delta$ (which, by assumption, is smaller than $1/4$), since $d\geq 3$ we have
    \begin{align*}
        \inf_{\estimatedMinimizerWeighted}\sup_{\distributionXxYTuple\in\distributionSetTuple} \EE_{\distributionXxYTuple}\br{\norm{\estimatedMinimizerWeighted-\minimizerWeighted}_2} &\geq \delta \pr{1-\frac{32 N e^2\frac{d}{512N}}{d}-\frac{\log 2}{d}} \\
        &\geq \delta \pr{1-\frac{e^2}{16}-\frac{1}{4}} \\
        &\geq \frac{1}{4}\delta \asymp \sqrt{\frac{d}{N}}
    \end{align*}
    which concludes the proof by recalling that we replaced $N+n$ by just $N$.
\end{proof}

\begin{proof}[Proof of \cref{lem:mutual-information-bound}]
Denote the function $f(A)=\log\det A$. 

We begin by noting that $f$ is concave on the positive semidefinite cone \citep{Rivin2002another}, directly implying the lower bound, as it is merely Jensen's inequality. 

We now show the upper bound. 
Since $f$ is twice differentiable on the set of positive definite matrices, the second-order Taylor expansion in integral form yields that for any two such matrices $A$ and $B$,
\begin{equation*}
    f(A)=f(B)+\inner{\nabla f(B)}{A-B} + \int_0^1 (1-s)\nabla^2f(B+s(A-B))[A-B,A-B]ds.
\end{equation*}
A computation shows that the gradient and Hessian of $f$ are given by
\begin{equation*}
    \nabla f(A) = A^{-1} \qquad \text{and} \qquad \nabla^2 f(A)[H,H] = -\tr{A^{-1} H A^{-1} H}.
\end{equation*}

If we take $A = \covariance_1^i$, $B = \bar{\covariance}_1$ and $H = \covariance_1^i- \bar{\covariance}_1$, we get that
\begin{equation}
\label{eq:bound-log-det-1}
\begin{aligned}
    f(\covariance_1^i) &= f(\bar{\covariance}_1) +\inner{\nabla f(\bar{\covariance}_1)}{\covariance_1^i- \bar{\covariance}_1} \\
    &\hspace{2cm }+\int_{0}^1 (1-s) \tr{(\bar\covariance_1+s(\covariance_1^i-\bar\covariance_1))^{-1} (\covariance^i_1-\bar \covariance_1) (\bar\covariance_1+s(\covariance_1^i-\bar\covariance_1))^{-1} (\covariance^i_1-\bar \covariance_1)}ds 
\end{aligned}
\end{equation}
Notice how $\frac{1}{M}\sum_{i=1}^M \inner{\nabla f(\bar \covariance_1)}{\covariance_1^i-\bar \covariance_1}=\inner{\nabla f(\bar \covariance_1)}{\bar \covariance_1-\bar \covariance_1}=0$. Hence, averaging over $i$ on both sides of \cref{eq:bound-log-det-1}, we get that
\begin{equation}
\label{eq:bound-log-det-2}
\frac{1}{M}\sum_{i=1}^M f(\covariance_1^i) = f(\bar{\covariance}_1)  - \frac{1}{M}\sum_{i=1}^M \int_0^1 (1-s) \tr{(\bar\covariance_1+s(\covariance_1^i-\bar\covariance_1))^{-1} (\covariance^i_1-\bar \covariance_1) (\bar\covariance_1+s(\covariance_1^i-\bar\covariance_1))^{-1} (\covariance^i_1-\bar \covariance_1)} ds
\end{equation}
Therefore, to prove our bound, we need to bound the integral.
To that end, we bound the trace as
\begin{align*}
    \tr{(\bar\covariance_1+s(\covariance_1^i-\bar\covariance_1))^{-1} (\covariance^i_1-\bar \covariance_1) (\bar\covariance_1+s(\covariance_1^i-\bar\covariance_1))^{-1} (\covariance^i_1-\bar \covariance_1)} \leq \norm{(\bar\covariance_1+s(\covariance_1^i-\bar\covariance_1))^{-1}}^2_F \norm{\covariance^i_1-\bar \covariance_1}_F^2.
\end{align*}

We bound this Frobenius norm for our specific choice of covariance matrices. Writing
$v = \inner{v}{\beta}\beta + v_\perp$ with   $\inner{v_\perp}{\beta}=0$,
one easily verifies that
\begin{equation*}
A(v) = v\beta^\top + \beta v^\top - \inner{v}{\beta}\beta\beta^\top
= (\inner{v}{\beta}\beta + v_\perp)\beta^\top + \beta v^\top - \inner{v}{\beta}\beta\beta^\top = v_\perp \beta^\top + \beta v^\top
\end{equation*}
Thus, we can bound for $v\in B_2^d$
\begin{equation*}
\norm{A(v)}_F \leq \norm{v_\perp \beta^\top}_F + \norm{\beta v^\top}_F
= \norm{v_\perp}_2\norm{\beta}_2 + \norm{\beta}_2\norm{v}_2
\leq 2\norm{v}_2\leq 2t,
\end{equation*}
and hence also $1-2t \leq\norm{\covariance_1^i}_F \leq 1+2t$ and $1-2t \leq\norm{\covariance_2^i}_F \leq 1+2t$.
Now, for any two vectors $v,w\in tB_2^d$, by linearity
\begin{equation*}
    \norm{A(v)-A(w)}_F = \norm{A(v-w)}_F \leq 2\norm{v-w}\leq 4t.
\end{equation*}
If we define $\bar{v}=\frac{1}{M}\sum_{i=1}^M v_i\in tB_2^d$ and set $\bar{\covariance}_1=\identity_d+A(\bar{v})$, then for each $i$,
\begin{equation*}
    \norm{\covariance^i_1-\bar{\covariance}_1}_F = \norm{A(v_i)-A(\bar{v})}_F \leq 2\norm{v_i-\bar{v}} \leq 4t,
\end{equation*}
and analogously for $\bar{\covariance}_2=\identity_d-A(\bar{v})$. Thus, we can bound the Hessian term, using $t\leq 1/4$, as 
\begin{align*}
    \norm{(\bar\covariance_1+s(\covariance_1^i-\bar\covariance_1))^{-1}}^2 \norm{\covariance^i_1-\bar \covariance_1}_F^2 \leq \frac{1}{(1-2t)^2} \norm{\covariance^i_1-\bar \covariance_1}_F^2 \leq   4 \norm{\covariance^i_1-\bar \covariance_1}_F^2\leq 64t^2.
\end{align*}

Hence, finally, we get from \cref{eq:bound-log-det-2} that
\begin{align*}
\frac{1}{M}\sum_{i=1}^M f(\covariance_1^i) &= f(\bar{\covariance}_1)  - \frac{1}{M}\sum_{i=1}^M \int_0^1 (1-s) \tr{(\bar\covariance_1+s(\covariance_1^i-\bar\covariance_1))^{-1} (\covariance^i_1-\bar \covariance_1) (\bar\covariance_1+s(\covariance_1^i-\bar\covariance_1))^{-1} (\covariance^i_1-\bar \covariance_1)} ds\\
&\geq f(\bar{\covariance}_1)  - \frac{64t^2}{M}\sum_{i=1}^M \int_0^1 (1-s)  ds \\
&= f(\bar{\covariance}_1)  - 32t^2
\end{align*}
or, in other words,
\begin{equation*}
    f\pr{\frac{1}{M}\sum_{i=1}^M \covariance_1^i}-\frac{1}{M}\sum_{i=1}^M f(\covariance_1^i) \leq  32 t^2
\end{equation*}
which concludes the proof.
\end{proof}

\end{appendices}

\end{document}